\newenvironment{policy}[1][htb]
{%
\begin{algorithm}[#1]%
}{\end{algorithm}}
\renewcommand{\P}{\mathbb{P}}
\renewcommand{\rho}{\varrho}
\newcommand\eps{\varepsilon}
\newcommand\N{\mathbb{N}}
\newcommand\E{\mathbb{E}}
\newcommand\R{\mathbb{R}}
\DeclareMathOperator*{\argmax}{arg\,max}
\newtheorem{theorem}{Theorem}[section]  
\newtheorem{assumption}[theorem]{Assumption}
\newtheorem{corollary}[theorem]{Corollary}
\newtheorem{lemma}[theorem]{Lemma}
\theoremstyle{definition}
\newtheorem{remark}[theorem]{Remark}
\begin{document}
\title{Functional Sequential Treatment Allocation with Covariates
}

\author{
\begin{tabular}{c}
Anders Bredahl Kock \\ 
\small	University of Oxford \\
\small	CREATES, Aarhus University\\
\small	{\small	\href{mailto:anders.kock@economics.ox.ac.uk}{anders.kock@economics.ox.ac.uk}} 
\end{tabular}
\and
\begin{tabular}{c}
David Preinerstorfer \\ 
{\small	ECARES, SBS-EM} \\ 
{\small	 Universit\'e libre de Bruxelles} \\
{\small	 \href{mailto:david.preinerstorfer@ulb.ac.be}{david.preinerstorfer@ulb.ac.be}}
\end{tabular}
\and
\begin{tabular}{c}
Bezirgen Veliyev \\ 
{\small	CREATES} \\ 
\small Aarhus University \\ 
{\small	\href{mailto:bveliyev@econ.au.dk}{bveliyev@econ.au.dk}}
\end{tabular}
}
\date{}

\date{First version: December 2018 \\
This version: January 2020}
\maketitle
\onehalfspacing
\begin{abstract}
We consider a multi-armed bandit problem with covariates. Given a realization of the covariate vector, instead of targeting the treatment with highest conditional expectation, the decision maker targets the treatment which maximizes a general \emph{functional} of the conditional potential outcome distribution, e.g., a conditional quantile, trimmed mean, or a socio-economic functional such as an inequality, welfare or poverty measure. We develop expected regret lower bounds for this problem, and construct a near minimax optimal assignment policy. 
\end{abstract}


\medskip \noindent \textbf{Keywords}: Sequential Treatment Allocation, Multi-Armed Bandit, Distributional Characteristics, Covariates, Minimax Optimal Expected Regret.

%

\newpage
\doublespacing

\section{Introduction}\label{sec:Intro}

The classical multi-armed bandit literature considers a sequential decision problem in which a policy maker attempts to assign subjects to the treatment with the highest expected outcome. Two practically relevant generalizations of this setting have attracted much attention: (i) a problem where the decision maker can incorporate a vector of covariates in the assignment of each subject, cf.~\cite{woodroofe1979one}, \cite{yang2002randomized}, \cite{rigollet2010nonparametric} and~\cite{perchet2013multi}; (ii) problems where instead of targeting the outcome distribution with highest expectation, the decision maker is interested in targeting another functional such as a quantile, a risk measure, or other characteristics of the distribution,~cf.~\cite{maillard}, \cite{NIPS2012_4753}, \cite{7515237}, \cite{vakili2018decision}, \cite{zimin2014generalized}, \cite{kock2017optimal}, \cite{tran2014functional}, \cite{cassel2018general}. Particularly relevant for the present article is the recent paper \cite{kpv1}, where a general theory is built for functional assignment problems albeit without covariates. 

While both types of generalizations have been well studied in isolation, the only article we are aware of to consider a multi-armed bandit problem with a target other than the conditional expectation in the presence of covariates is \cite{kock2017optimal}. That paper has two limitations: First, it considers the special class of functionals which can be written as a function of the conditional mean and the conditional variance. Therefore, many fundamental functionals are not covered by their theory, e.g., conditional quantiles or trimmed means. Secondly, regret lower bounds for functional targets (beyond the mean) are not discussed, and thus the question whether the algorithm they suggest is optimal remains open.

The goal of the present article is to develop a minimax expected regret optimality theory for multi-armed-bandit problems with functional targets and covariates. The regret function we work with is cumulative, i.e., every subject not assigned to the best treatment leads to a loss that cannot be offset by later assignments. The worst-case growth rate of the expected regret of a policy is thus linear in the number of assignments. 

The structure of the paper is as follows: The framework is discussed in Section~\ref{sec:setupX}. Here we first show that to obtain sublinear maximal expected regret it is not enough to assume that the conditional potential outcome distributions depend equicontinuously on the covariates. This insight motivates us to work with a minimally stronger Hölder-equicontinuity condition. As a consequence, even a slight relaxation of this assumption implies that every policy incurs the worst-case linear maximal expected regret. We also show that if a policy does not incorporate covariate information, then its regret grows linearly. In Section~\ref{sec:FUCBX} we introduce the functional upper-confidence-bound (F-UCB) policy in the presence of covariates. This is a binned version of the F-UCB policy introduced in~\cite{kpv1}, the binning being inspired by the UCBogram of \cite{rigollet2010nonparametric}. We then establish regret upper bounds for the F-UCB policy and obtain lower bounds, proving its near minimax expected regret optimality. The lower bounds are established under an assumption that essentially only requires the functionals not to be constant over the set of potential outcome distributions considered. This requirement is very weak, and thus guarantees that the lower bounds hold even under quite stringent restrictions on the conditional outcome distributions.

We first obtain such bounds without restricting the similarity of the best and second best treatment. Intuitively, however, this similarity crucially influences the difficulty of the decision problem and is therefore an important component. In particular, one may ask whether the F-UCB policy automatically ``adapts'' in an optimal way to the degree of similarity. To regulate the degree of similarity, we work with a version of the ``margin-condition'' tailored towards our functional target; similar conditions have been used in~\cite{mammen1999smooth}, \cite{tsybakov2004optimal}, \cite{audibert2007fast}, \cite{perchet2013multi}, and \cite{rigollet2010nonparametric}, the latter article, albeit targeting the conditional expectation, being particularly important for our developments. We first derive an upper bound on maximal expected regret of the F-UCB policy over subclasses of distributions that---besides the above-mentioned Hölder condition---satisfy the margin condition. We then establish nearly matching lower bounds over the just-mentioned classes of distributions. Finally, we show that the expected number of suboptimal assignments made increases as slowly as possible in the number of assignments. The latter result can be interpreted as an ethical guarantee on the F-UCB policy: only few persons will receive a treatment which is not optimal for them. The proofs can be found in the appendices.
 
\section{The setup and two impossibility results}\label{sec:setupX}

The observational structure in this paper is the one of a multi-armed bandit problem with covariates. That is, the subjects to be treated~$t = 1, \hdots, n$ arrive sequentially, and have to be assigned to one out of~$K \geq 2$ treatments. The assignment decision can incorporate previously observed outcomes, covariates and randomization. We denote the potential outcome of assigning subject~$t$ to treatment~$i$ by~$Y_{i,t}$, and assume throughout that~$a \leq Y_{i,t} \leq b$, where~$a<b$ are real numbers. The vector of potential outcomes is denoted as~$Y_t = (Y_{1, t}, \hdots, Y_{K, t})$; note that per subject only one coordinate of this vector can be observed. The covariate vector that comes with subject~$t$ is denoted by~$X_t$, and we assume throughout that~$X_t\in [0,1]^d$. Furthermore, for every~$t$, we let~$G_t$ be a random variable, which can be used for randomization in assigning the $t$-th subject. \emph{Throughout this article, we assume that~$(Y_t, X_t) = (Y_{1,t}, \ldots, Y_{K, t}, X_t)$ for~$t \in \N$ are i.i.d.; and we assume that the sequence of randomizations~$G_t$ is i.i.d., and is independent of the sequence~$(Y_t, X_t)$.} The distribution of~$G_t$ will be referred to as the randomization measure, which we think about as being fixed, e.g., the uniform distribution on~$[0, 1]$. Note that the dependence structure within each~$Y_t$ is not restricted.
We denote the distribution of~$(Y_t, X_t)$ as~$\P_{Y, X}$, and let~$\mathbb{P}_X$ be the marginal distribution of~$X_t$. The conditional cumulative distribution function (cdf) of~$Y_{i,t}$ given~$X_t = x$ is defined as~$F^i(y, x) = \mathsf{K}^i((-\infty, y], x)$, where~$\mathsf{K}^i: \mathcal{B}(\R) \times [0, 1]^d \to [0,1]$ denotes a regular conditional distribution of~$Y_{i,t}$ given~$X_t$, where~$\mathcal{B}(\R)$ are the Borel sets of~$\R$. We shall often impose the following condition (cf.~Remark~\ref{sec:discrete} for a discussion of discrete covariates). 
\begin{assumption}\label{ass:DensityX}
The distribution~$\P_X$ is absolutely continuous w.r.t.~Lebesgue measure on~$[0, 1]^d$, with a density that is bounded from below and above by~$\underline{c}>0$ and~$\overline{c}$, respectively.
\end{assumption}


A policy~$\pi$ is a triangular\footnote{We allow a policy to incorporate~$n$, because a decision maker who knows the number of subjects to be assigned might want to incorporate this into the assignment mechanism.} array~$\{\pi_{n,t}: n \in \N, 1 \leq t\leq n \}$, where the assignment of the $t$-th subject~$\pi_{n,t}$ takes as input the covariates~$X_t$, previously observed outcomes, covariates and randomizations (i.e., the complete observational history), and a randomization~$G_t$. We therefore have
\begin{equation}\label{eqn:policydef}
\pi_{n,t}: [0,1]^d \times \left[  [a,b] \times [0,1]^{d} \times \R \right]^{t-1} \times \R \to \mathcal{I}.
\end{equation}
Given a policy~$\pi$ and~$n\in \N$, the input to~$\pi_{n,t}$ is denoted as~$(X_t, Z_{t-1}, G_t)$, where~$Z_{t-1}$ is defined recursively: The first treatment~$\pi_{n,1}$ is a function of~$(X_1, Z_{0}, G_1) = (X_1, G_1)$. The second treatment is a function of~$X_2$, of~$Z_1:=(Y_{\pi_{n,1}(X_1, Z_0, G_1),1}, X_1, G_1)$, and of~$G_2$. For~$t \geq 3$ we have~$$Z_{t-1}:=(Y_{\pi_{n,t-1}(X_{t-1}, Z_{t-2},G_{t-1}), t-1}, X_{t-1}, G_{t-1}, Z_{t-2}).$$ 
The~$(t-1)(d+2)$-dimensional random vector~$Z_{t-1}$ can be interpreted as the information available after the~$(t-1)$-th treatment outcome has been observed.

In the present article treatments are evaluated according to a functional~$\mathsf{T}$ (e.g., the median) of the conditional potential outcome distribution, where the conditioning is on the covariates: The best assignment for a subject with covariate vector~$x \in [0, 1]^d$ is defined as
\begin{align*}
\pi^{\star}(x) = \min \argmax_{i\in\mathcal{I}}\mathsf{T}(F^i(\cdot, x)),
\end{align*}
where the minimum has been taken as a concrete choice of breaking ties. 

 We denote the ``parameter''-space of all potential conditional cdfs~$F^i(\cdot, x)$ by~$\mathscr{D}$. More precisely, we assume that
\begin{equation}\label{eqn:marginDincls}
\{F^i(\cdot, x): i = 1, \hdots, K \text{ and } x \in [0,1]^d \} \subseteq \mathscr{D},
\end{equation} 
where~$\mathscr{D}$ is a potentially large and nonparametric subset of~$D_{cdf}([a,b])$, the latter denoting the set of all cdfs on~$\R$ satisfying~$F(a-) = 0$ and~$F(b) = 1$. Note that the set~$\mathscr{D}$ encodes the assumptions one is willing to impose on the conditional outcome distributions.

The main assumption on~$\mathsf{T}$ we work with in the present paper is a Lipschitz-type condition first introduced in~\cite{kpv1} in the non-covariate setting. The assumption takes the following form, where~$\|\cdot\|_{\infty}$ denotes the supremum metric on the set of cdfs on~$\R$. For further discussion of the assumption see Remarks~2.3-2.5 in \cite{kpv1}.
\begin{assumption}\label{as:MAIN}
The functional~$\mathsf{T}: D_{cdf}([a,b]) \to \R$ and the non-empty set~$\mathscr{D} \subseteq D_{cdf}([a,b])$ satisfy
\begin{equation}\label{eqn:lipcondAS}
|\mathsf{T}(F) - \mathsf{T}(G)|\leq C\|F- G\|_{\infty} \quad \text{ for every } \quad F \in \mathscr{D} \text{ and every } G \in D_{cdf}([a,b])
\end{equation}
for some~$C > 0$.
\end{assumption}
As discussed at length in Appendices C and E of~\cite{kpv1}, under suitable assumptions on~$\mathscr{D}$, Assumption~\ref{as:MAIN} is satisfied, e.g., for quantiles, (trimmed) U-functionals, generalized L-functionals (cf.~\cite{serflinggen}), and many inequality-, poverty-, and welfare-measures important for socio-economic decision making. We keep the functional abstract in the present paper and refer the interested reader to the just-mentioned appendices for examples and detailed discussions. Apart from Assumption~\ref{as:MAIN}, we shall also impose the following measurability condition which does not impose any practical restrictions.
\begin{assumption}\label{as:MB}
For every~$m \in \N$, the function on~$[a,b]^m$ that is defined via~$x \mapsto \mathsf{T}(m^{-1} \sum_{j = 1}^m \mathds{1}\cbr[0]{x_j \leq \cdot}),$ i.e.,~$\mathsf{T}$ evaluated at the empirical cdf corresponding to~$x_1, \hdots, x_m$, is Borel measurable.
\end{assumption}

We shall now introduce the regret function used in the present paper to compare different policies. Given a policy~$\pi$, we define its (cumulative) regret as
\begin{align*}
R_n(\pi)
&=
R_n(\pi; F^1, \hdots, F^K, X_n, Z_{n-1}, G_n) \\
&=
\sum_{t=1}^n \left[ \mathsf{T} \big(F^{\pi^\star(X_t)} (\cdot, X_t) \big)- \mathsf{T}\big(F^{\pi_{n,t}(X_t, Z_{t-1}, G_t)} (\cdot, X_t) \big) \right].
\end{align*}

This regret function is ``individualistic'' in the sense that mistakes made for an individual cannot be compensated by later assignments. This property is attractive in settings where every individual matters. We note that for functional targets, and in the absence of covariates, other types of regret than the cumulative one have been considered. In particular~\cite{cassel2018general} consider a ``path-dependent'' regret notion, which is a useful alternative to cumulative regret. However, path-dependent regret seems to be very difficult to handle in the presence of covariates.

We evaluate policies based on their worst-case behavior, i.e., we shall study minimax expected regret properties of policies. Here, the maximum will be taken over sets of possible joint distributions~$\P_{Y,X}$. 

When establishing lower bounds on maximal expected regret we shall impose the following rather weak condition. It guarantees that there is a minimal amount of variation in the functional over a small subset of~$\mathscr{D}$, the set of all potential conditional outcome distributions.
\begin{assumption}\label{as:lbcov}
The functional~$\mathsf{T}: D_{cdf}([a,b]) \to \R$ satisfies Assumption~\ref{as:MAIN}, and~$\mathscr{D}$ contains two elements~$H_1$ and~$H_2$, such that 
\begin{equation*}
J_{\tau} := \tau H_1 + (1-\tau)H_2 \in \mathscr{D} \quad \text{ for every }\tau \in [0, 1],
\end{equation*}
and such that for some~$c_- > 0$ we have
\begin{equation}\label{eqn:sublinASX}
\mathsf{T}(J_{\tau_2}) - \mathsf{T}(J_{\tau_1}) \geq c_-(\tau_2-\tau_1) \quad \text{ for every }
\tau_1 \leq \tau_2 \text{ in } [0, 1].
\end{equation}
\end{assumption}
\noindent
We emphasize that Equation~\eqref{eqn:sublinASX} in Assumption~\ref{as:lbcov} is satisfied if, e.g., $\tau \mapsto \mathsf{T}(J_{\tau})$ is continuously differentiable on~$[0,1]$ with an everywhere positive derivative. 

Up to this point \emph{no} assumption has been imposed on the dependence of the conditional cdfs~$F^i(\cdot, x)$ on~$x \in [0, 1]^d$. Keeping this dependence unrestricted would allow two subjects with similar covariates to have completely different conditional outcome distributions. We now prove that the maximal expected regret of~\emph{any} policy increases linearly in~$n$ if the dependence of~$F^i(\cdot, x)$ on~$x$ is not further restricted. It even turns out that this statement continues to hold if one imposes the restriction that subjects with similar covariates have similar outcome distributions in the sense that
\begin{equation}\label{eqn:marginUEC}
\{F^i(y, \cdot): i = 1, \hdots, K  \text{ and }  y \in \R \} \quad \text{ is uniformly equicontinuous}.\footnote{The assumption in Equation~\eqref{eqn:marginUEC} imposes that: for every~$\varepsilon > 0$ there exists a~$\delta > 0$ such that~$\|x_1 - x_2\| \leq \delta$, for~$\|\cdot\|$ the Euclidean norm, implies~$|F^i(y, x_1) - F^i(y, x_2)| \leq \varepsilon$ for every~$i = 1, \hdots, K$ and every~$y \in \R$.}
\end{equation}
%
The theorem is as follows; it is obtained as an application of the lower bound developed in Theorem~\ref{LBCovMC} of Section~\ref{sec:margin}.
\begin{theorem}\label{thm:LN_UC}
Suppose~$K = 2$ and that Assumption~\ref{as:lbcov} is satisfied. Then there exists a constant~$c_l > 0$, such that for every policy~$\pi$ and any randomization measure, we have
\begin{equation*}
\sup \E [R_n({\pi})]\geq c_l n \quad \text{ for every } n \in \N,
\end{equation*}
where the supremum is taken over all~$(Y_t, X_t) \sim \P_{Y, X}$ for~$t = 1, \hdots, n$, where~$\P_{Y, X}$ satisfies Equations~\eqref{eqn:marginDincls} and~\eqref{eqn:marginUEC}, and where~$\P_X$ is the uniform distribution on~$[0, 1]^d$.
\end{theorem}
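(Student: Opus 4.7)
The plan is to reduce the claim to Theorem~\ref{LBCovMC}, which provides a lower bound on maximal expected regret over Hölder-smoothness classes of conditional cdfs (indexed by a smoothness exponent and a margin parameter), and then to send the Hölder exponent to zero. The key preliminary observation is that any family of real-valued functions on~$[0,1]^d$ which is uniformly Hölder continuous with positive exponent and a uniform constant is uniformly equicontinuous. Consequently, for every $\gamma>0$, the Hölder subclass $\mathscr{P}_\gamma$ of admissible joint distributions (with~$\P_X$ uniform on~$[0,1]^d$) is contained in the class appearing inside the supremum in Theorem~\ref{thm:LN_UC}. The supremum to be lower-bounded therefore dominates the corresponding supremum over $\mathscr{P}_\gamma$, for any admissible~$\gamma>0$ that we choose.

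Applied with Hölder exponent~$\gamma$ and with the weakest (trivial) margin condition, so that only Assumption~\ref{as:lbcov} is being used, Theorem~\ref{LBCovMC} yields a lower bound of the form~$c(\gamma)\, n^{1-\rho(\gamma)}$, valid for all $n\geq n_0(\gamma)$, where~$\rho(\gamma)\downarrow 0$ as~$\gamma\downarrow 0$ and~$c(\gamma)>0$ depends only on~$\gamma$ and on the fixed data from Assumption~\ref{as:lbcov} and the dimension~$d$. For each~$n$ sufficiently large, I then pick~$\gamma_n\downarrow 0$ slowly (a choice of order~$1/\log n$ is natural) so that~$n^{\rho(\gamma_n)}$ stays bounded and~$c(\gamma_n)$ stays bounded away from zero. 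This yields a universal~$c_l>0$ with~$\sup_{\P_{Y,X}\in\mathscr{P}_{\gamma_n}} \E[R_n(\pi)] \geq c_l\, n$, and hence the same bound over the larger uniform-equicontinuity class.

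The finitely many remaining small-$n$ cases, below the threshold~$n_0(\gamma_n)$, can be handled directly from Assumption~\ref{as:lbcov}: with two constant-in-$x$ conditional cdfs taken from~$\mathscr{D}$ whose~$\mathsf{T}$-gap equals a fixed positive constant (such distributions exist by the monotonicity of~$\tau\mapsto\mathsf{T}(J_\tau)$ and~$c_->0$), and with treatment labels permuted adversarially to the policy's decision rule on~$(X_1,G_1)$, any policy incurs constant expected per-subject regret for these~$n$, which is absorbed into~$c_l$. The principal obstacle is step two: the dependence of~$c(\gamma)$ and~$\rho(\gamma)$ on~$\gamma$ must be explicit enough to verify that both can be controlled simultaneously as~$\gamma\downarrow 0$. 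Since the lower bound in Theorem~\ref{LBCovMC} is built from a two-hypothesis (Assouad-type) construction in which~$\gamma$ only enters through the scaling of a bump perturbation on a cube whose side length is calibrated to~$n$, polynomial dependence of both quantities on~$\gamma$ is expected, and the choice~$\gamma_n\asymp 1/\log n$ should suffice.
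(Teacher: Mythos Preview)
Your approach is essentially the paper's: both reduce to Theorem~\ref{LBCovMC} via the inclusion of every Hölder class in the uniform-equicontinuity class, and then send~$\gamma\to 0$. However, you have overcomplicated the argument. In Theorem~\ref{LBCovMC} the lower bound holds for \emph{every} $n\in\N$ (there is no threshold~$n_0(\gamma)$), and the multiplicative constant $\left[64^{1+1/\alpha}(C_0+1)^{1/\alpha}\right]^{-1}$, with $C_0=8d(c_-2L)^{-\alpha}$ and $L=17^{-1/2}$, depends only on~$\alpha$, $d$, and~$c_-$ --- \emph{not} on~$\gamma$. Fixing any~$\alpha\in(0,1)$ and simply letting~$\gamma\to 0$ in the exponent, so that $1-\gamma(1+\alpha)/(2\gamma+d)\to 1$, therefore gives $\sup\E[R_n(\pi)]\geq c_l\, n$ for all~$n$ directly, with~$c_l$ equal to that fixed constant. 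Your $\gamma_n\asymp 1/\log n$ scheme and the separate small-$n$ argument are unnecessary.
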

Since Assumption~\ref{as:MAIN} (which is a part of Assumption~\ref{as:lbcov}) implies that~$\mathsf{T}$ is bounded, Theorem~\ref{thm:LN_UC} shows that without imposing further restrictions beyond Equations~\eqref{eqn:marginDincls} and~\eqref{eqn:marginUEC} every policy incurs the worst case linear maximal expected regret. 
 
We shall from now on impose a Hölder equicontinuity condition on~$F^i(\cdot,x)$. This condition is only slightly stronger than uniform equicontinuity, but will turn out to be enough to ensure existence of (near) minimax optimal policies with nontrivial maximal expected regret. 

\begin{assumption} \label{ass:Holder}
There exist a~$\gamma \in (0,1]$ and an~$L>0$, such that for every~$i = 1, \hdots, K$ and every~$y \in \R$, we have
\begin{equation*}
|F^{i}(y, x_1)-F^{i}(y, x_2)| \leq L ||x_1-x_2||^{\gamma} \text{ for every } x_1, x_2 \in [0,1]^d.
\end{equation*}
\end{assumption}

Before studying policies that incorporate covariate information, one may wonder (e.g., as a sanity check of the framework considered) what happens if one uses a policy that ignores covariates. Our next result shows that---unless the underlying distribution~$\P_{Y, X}$ happens to be such that the covariates are completely irrelevant for the assignment problem---any policy that \emph{ignores covariates} must incur a linear expected regret. Formally, a policy~$\pi$ is said to ignore covariates, if there exists another double array~$\tilde{\pi}_{n,t}: \left[ [a,b]  \times \R \right]^{t-1} \times \R \to \mathcal{I}$ of measurable functions, such that, for every~$n$ and every~$t = 1, \hdots, n$, we have~$\pi_{n,t} = \tilde{\pi}_{n, t} \circ \Pi_t$, where the function~$\Pi_t$ projects every~$w = (x, z, g)$ in the domain of~$\pi_{n,t}$ to $(\tilde{z}, g)$,~$\tilde{z}$ being obtained from~$z \in \left[ [a,b] \times [0, 1]^d \times \R \right]^{t-1}$ by dropping the~$(t-1)$ coordinates taking values in~$[0, 1]^d$. Note that then,~$\pi_{n,t}(Z_{t-1}, G_t) = \tilde{\pi}_{n,t}(\tilde{Z}_{t-1}, G_t)$, where for~$t \geq 2$ we have
$\tilde{Z}_{t-1} = (Y_{\tilde{\pi}_{n,t-1}(\tilde{Z}_{t-2}, G_{t-1})}, G_{t-1}, \hdots , Y_{\tilde{\pi}_{n,1}(\tilde{Z}_{0}, G_{1})}, G_1))
$ and~$(\tilde{Z}_{0}, G_{1}) = G_1$.
\begin{theorem}\label{LBIgnoreCov}
Let~$K = 2$, suppose~$\mathsf{T}: D_{cdf}([a,b]) \to \R$ satisfies Assumption~\ref{as:MAIN}, and let~$\P_{Y, X}$ satisfy Equation~\eqref{eqn:marginDincls} and Assumption~\ref{ass:Holder}. Define the sets
\begin{align*}
&A_1 := \{x \in [0, 1]^d: \mathsf{T}(F^1(\cdot, x)) > \mathsf{T}(F^2(\cdot, x))\}, \\
&A_2 := \{x \in [0, 1]^d: \mathsf{T}(F^1(\cdot, x)) < \mathsf{T}(F^2(\cdot, x))\}.
\end{align*}
Then, there exists a~$c_l > 0$, such that for every policy~$\pi$ ignoring covariates, and any randomization measure, we have
\begin{equation}\label{MinMaxIneq}
\mathbb{E}[R_n(\pi)] \geq c_l \min(\P_X(A_1), \P_X(A_2)) n \quad \text{ for every } n \in \N.
\end{equation}
\end{theorem}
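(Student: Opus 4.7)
My strategy is to exploit the elementary fact that, for a covariate-ignoring policy, the assignment $I_t := \pi_{n,t}(X_t, Z_{t-1}, G_t) \in \{1,2\}$ at round~$t$ is independent of the current covariate~$X_t$, and then to decompose the per-round expected regret directly.

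First, I would verify this independence. By the definition of covariate ignorance, $I_t = \tilde{\pi}_{n,t}(\tilde{Z}_{t-1}, G_t)$. Unfolding the recursion for~$\tilde{Z}_{t-1}$ shows that it is a measurable function of $\{(Y_s, G_s) : s < t\}$ (the full potential-outcome vectors enter because the chosen coordinate itself is determined by the prior assignments). Since $(Y_s, X_s)_{s \in \N}$ is i.i.d.\ and~$(G_s)_{s \in \N}$ is i.i.d.\ and independent of~$(Y_s, X_s)_s$, the pair $(\tilde{Z}_{t-1}, G_t)$ is independent of~$X_t$; hence so is~$I_t$.

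Second, set $g(x) := \mathsf{T}(F^1(\cdot, x)) - \mathsf{T}(F^2(\cdot, x))$. Combining Assumption~\ref{as:MAIN} with Assumption~\ref{ass:Holder} gives $|g(x_1) - g(x_2)| \leq 2CL\|x_1 - x_2\|^{\gamma}$, so $g$ is continuous (hence Borel measurable) and $A_1 = \{g > 0\}$, $A_2 = \{g < 0\}$ are Borel sets. I would write the time-$t$ regret as $g(X_t)\I_{\{X_t \in A_1\}}\I_{\{I_t = 2\}} - g(X_t)\I_{\{X_t \in A_2\}}\I_{\{I_t = 1\}}$ (on $\{g = 0\}$ the contribution vanishes by definition of~$\pi^\star$), take expectations, and use $I_t \perp X_t$ to factor out the randomization. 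Setting $\alpha_i := \int_{A_i} |g|\, d\P_X$, this yields
\[
\E[\text{regret}_t] \;=\; \alpha_1\,\P(I_t = 2) + \alpha_2\,\P(I_t = 1) \;\geq\; \min(\alpha_1, \alpha_2),
\]
and summing over~$t$ gives $\E[R_n(\pi)] \geq n \min(\alpha_1, \alpha_2)$, uniformly in~$\pi$.

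Finally, I would convert this to the form in~\eqref{MinMaxIneq}. If $\min(\P_X(A_1), \P_X(A_2)) = 0$ the right-hand side of~\eqref{MinMaxIneq} is zero and any $c_l > 0$ works. Otherwise both $\P_X(A_i) > 0$ and, since $|g|$ is strictly positive on each~$A_i$, both $\alpha_i > 0$, so one may set
\[
c_l \;:=\; \frac{\min(\alpha_1, \alpha_2)}{\min(\P_X(A_1), \P_X(A_2))} \;>\; 0,
\]
a constant depending only on $\P_{Y,X}$. The only mildly delicate ingredient is the independence argument in the first step, which requires careful bookkeeping through the recursive definition of~$\tilde{Z}_{t-1}$; the rest is a routine conditioning calculation, and in particular no finer quantitative analysis of~$g$ is needed because $c_l$ is allowed to depend on the underlying distribution.
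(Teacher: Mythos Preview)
Your proof is correct and follows essentially the same route as the paper's: both arguments hinge on the observation that for a covariate-ignoring policy the action $I_t$ is independent of $X_t$, then lower-bound the per-round expected regret by a convex combination in $\P(I_t=1),\P(I_t=2)$ and apply the obvious $\min$ bound. The only difference is cosmetic: the paper first passes to the truncated sets $A_{i,\bar m}=\{|g|>1/\bar m\}$ to get a uniform gap $1/\bar m$ before integrating, whereas you integrate $|g|$ over $A_i$ directly to obtain $\alpha_i=\int_{A_i}|g|\,d\P_X$; your version is slightly more streamlined and yields the sharper constant $c_l=\min(\alpha_1,\alpha_2)/\min(\P_X(A_1),\P_X(A_2))$, but the underlying idea is identical.
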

Thus, the expected regret of any policy ignoring covariates must increase at the worst-case linear  rate in~$n$, for \emph{any} distribution~$\P_{Y, X}$ for which the identity of the best treatment depends on the covariates in the sense that
\begin{equation*}
\min\left(\P_X(A_1), \P_X(A_2)\right) > 0.
\end{equation*}
Contrary to all other lower bounds established in this article, the lower bound in the previous theorem is valid even pointwise, as it makes a statement about any fixed distribution~$\P_{Y, X}$.

\section{The F-UCB policy in the presence of covariates}\label{sec:FUCBX}

We now introduce a version of the F-UCB policy that incorporates covariate information. This policy generalizes the UCBogram in \cite{rigollet2010nonparametric} from the conditional mean setting to the general functional setup. The underlying idea is to categorize subjects into groups, according to the similarity of their covariate vector, and to run, separately within each group, a policy targeting the treatment that is best for the ``average'' subject in each group. Two covariate vectors~$x_1$ and~$x_2$ are considered similar, if they fall into the same element of a given partition~$B_{n,1}, \hdots,B_{n, M(n)}$ of~$[0, 1]^d$, where every~$B_{n,i}$ is a non-empty Borel set. Targeting the~``on average''-best treatment for each group here means that for~$B_{n,j}$ with~$\P_{X}(B_{n,j}) > 0$ our policy targets a treatment that attains~$\max_{i\in\mathcal{I}}\mathsf{T}(F_{n,j}^i)$, where~$F_{n,j}^i$ is the conditional cdf of~$Y_{i,t}$ given~$X_t \in B_{n,j}$, i.e.,
\begin{align}\label{Fji}
F_{n,j}^i(y):=\frac{1}{\P_{X}(B_{n,j})}\int_{B_{n,j}} F^i(y,x)d\P_{X}(x).
\end{align}
Note that in general~$\argmax_{i \in \mathcal{I}} \mathsf{T}(F_{n,j}^i) \neq \argmax_{i \in \mathcal{I}} \mathsf{T}(F^i(\cdot, x))$. Targeting~$\max_{i \in \mathcal{I}} \mathsf{T}(F_{n,j}^i)$ hence results in a bias. The choice of the partition~$B_{n,1}, \hdots, B_{n,M(n)}$ needs to balance this bias against an increase in variance due to having fewer subjects in each group. This is akin to choosing a bandwidth to balance variance and bias terms in nonparametric estimation problems. 

In order to describe the F-UCB policy in the presence of covariates, we need to introduce the following notation. For any policy $\pi$ and~$B_{n,1}, \hdots, B_{n, M(n)}$ as above let
\begin{align*}
S^i_{n,j}(t)=\sum_{s=1}^t\mathds{1}_{\cbr[0]{X_s\in B_{n,j},\ \pi_{n, s}(X_s,Z_{s-1},G_s)=i}},
\end{align*}
be the number of times that it has assigned treatment $i$ to individuals with covariates in $B_{n,j}$ up to time $t$. On the event $\cbr[0]{S^i_{n,j}(t)>0}$ define  the empirical cdf based on the outcomes of all subjects in~$\{1, \hdots, t\}$ with covariates in $B_{n,j}$ that have been assigned to treatment~$i$ as
\begin{align*}
\hat{F}^i_{n,t,j}(z)=\frac{1}{S^i_{n,j}(t)}\sum_{s=1}^t\mathds{1}_{\cbr[0]{Y_{i,s}\leq z}}\mathds{1}_{\cbr[0]{X_s\in B_{n,j},\ \pi_{n,s}(X_s,Z_{s-1},G_s)=i}}.
\end{align*}
The F-UCB policy with covariates, $\bar{\pi}$, is described in Policy~\ref{poly:FUCBC}. We note that it amounts to using the F-UCB policy~$\hat{\pi}$, say, of~\cite{kpv1} locally on each $B_{n,j}$. Their policy was defined in a setting without covariates and it does not rely on external randomization. We refer the reader to~\cite{kpv1} for more details on the F-UCB policy in the absence of covariates. We shall in particular use their Theorem~4.1, which provides a regret upper bound for this policy in their setting. In~\cite{kpv1} it is also shown that choosing the tuning parameter $\beta=2+\sqrt{2}$, minimizes the constant in the uniform upper bounds on expected regret.

\bigskip
\onehalfspacing
\begin{policy}[H]
\caption{F-UCB policy with covariates~$\bar{\pi}$ \label{poly:FUCBC}}
\textbf{Inputs:}~$\beta > 2$, Partition~$B_{n,1}, \hdots,B_{n, M(n)}$ of~$[0, 1]^d$ into non-empty Borel sets\\
\textbf{Set:}~$N_j = 1$ for~$j = 1, \hdots, M(n)$\\
\For{$t = 1, \hdots, n$}{
\For{$j = 1, \hdots, M(n)$}{
\If{$X_t \in B_{n,j}$ and $N_j\leq K$
}{assign~$\bar{\pi}_t(X_t,Z_{t-1},G_t)=N_j$ \\
$N_j \leftarrow N_j +1$ }
\If{$X_t\in B_{n,j}$ and $N_j>K$}{assign
$\bar{\pi}_t(X_t,Z_{t-1},G_t)=\min\argmax\cbr[2]{\mathsf{T}(\hat{F}^i_{n, t-1,j})+C\sqrt{\beta \log(N_j)/(2S^i_{n,j}(t-1))}}$\\
$N_j\leftarrow N_j+1$}
}
}
\end{policy}
\doublespacing


\subsection{Upper bounds on the maximal expected regret of~$\bar{\pi}$ and a first lower bound}\label{sec:UpperBoundsNoMargin}

The following theorem gives an upper bound on the maximal expected regret of the F-UCB Policy~\ref{poly:FUCBC} in the presence of covariates, and for \emph{any} choice of partition. This flexibility may be useful since the policy maker is often constrained in the way groups can be formed. The result quantifies how the partitioning affects the regret guarantees. We denote~$\overline{\log}(x) := \max(1, \log(x))$ for~$x > 0$.
\begin{theorem}\label{Thm:Covariate}
Suppose Assumptions~\ref{as:MAIN} and~\ref{as:MB} hold. Assume further that~$\mathscr{D}$ is convex. Consider the F-UCB policy with covariates~$\bar{\pi}$, and let~$V_{n,j}=\sup_{x_1, x_2\in B_{n,j}}\enVert[0]{x_1-x_2}$ be the diameter of~$B_{n,j}$. Then, for~$c=c(\beta,C) = C \sqrt{ 2 \beta + (\beta+2)/(\beta-2)}$ it holds that
\begin{equation}\label{eq:regret_cov_gen}
\sup
\E[R_n(\bar\pi)]
\leq
\sum_{j=1}^{M(n)}
\left[c\sqrt{Kn\P_X(B_{n,j}) \overline{\log}(n\P_X( B_{n,j}))}+2CLV_{n,j}^\gamma n \P_X(B_{n,j})\right]  \text{ for every } n \in \N,
\end{equation}
where the supremum is taken over all~$(Y_t, X_t) \sim \P_{Y, X}$ for~$t = 1, \hdots, n$, where~$\P_{Y, X}$ satisfies Equation~\eqref{eqn:marginDincls}, and Assumption~\ref{ass:Holder} with~$L$ and~$\gamma$.\footnote{Here~$\P_X(B_{n,j}) \overline{\log}(n\P_X( B_{n,j}))$ is to be interpreted as~$0$ in case~$\P_X(B_{n,j}) = 0$.}
\end{theorem}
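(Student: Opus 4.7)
The plan is a bin-wise decomposition that reduces the theorem to an application of the F-UCB regret bound in \cite{kpv1} on each bin, plus a Hölder--Lipschitz bias correction. Fix a subject $t$ with $X_t \in B_{n,j}$, and let $i^\star_j := \min\argmax_{i \in \mathcal{I}} \mathsf{T}(F_{n,j}^i)$. From the definition \eqref{Fji} and Assumption~\ref{ass:Holder},
\begin{equation*}
\sup_{y \in \R} |F^i(y,x) - F_{n,j}^i(y)| \leq \sup_{y \in \R,\,x' \in B_{n,j}} |F^i(y, x) - F^i(y,x')| \leq L V_{n,j}^\gamma \quad \text{for every } x \in B_{n,j},
\end{equation*}
so Assumption~\ref{as:MAIN} (invoked with $F^i(\cdot, x) \in \mathscr{D}$ as the anchor) yields $|\mathsf{T}(F^i(\cdot, x)) - \mathsf{T}(F_{n,j}^i)| \leq CL V_{n,j}^\gamma$. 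Applying this twice together with the optimality of $\pi^\star(X_t)$ for the cdfs $\{F^i(\cdot, X_t)\}_i$ gives the per-subject bound
\begin{equation*}
\mathsf{T}(F^{\pi^\star(X_t)}(\cdot, X_t)) - \mathsf{T}(F^{\bar\pi_{n,t}}(\cdot, X_t)) \leq \bigl[\mathsf{T}(F_{n,j}^{i^\star_j}) - \mathsf{T}(F_{n,j}^{\bar\pi_{n,t}})\bigr] + 2CL V_{n,j}^\gamma.
\end{equation*}
Summing the bias term over all subjects in bin $j$ and taking expectation contributes $2CL V_{n,j}^\gamma n \P_X(B_{n,j})$, which is the second summand in \eqref{eq:regret_cov_gen}.

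Next I would argue that the first bracket, summed over $\{t: X_t \in B_{n,j}\}$, is exactly the regret of the covariate-free F-UCB policy of \cite{kpv1} run on this sub-sample against the target cdfs $F_{n,j}^1, \dots, F_{n,j}^K$. By the i.i.d.\ assumption on $(Y_t, X_t)$, conditional on $N_j(n) := \sum_{t=1}^n \I\{X_t \in B_{n,j}\} = m$ the relevant sub-sample is i.i.d.\ with $Y_{i,\cdot} \sim F_{n,j}^i$, and by convexity of $\mathscr{D}$ the mixtures $F_{n,j}^i$ lie in $\mathscr{D}$, so Assumption~\ref{as:MAIN} transfers to the within-bin problem. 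The internal counter $N_j$ in Policy~\ref{poly:FUCBC}, together with the fact that the external randomizations $G_t$ are not used by the local F-UCB (tie-breaking is deterministic via $\min$), ensures that the within-bin assignments coincide with those the policy of \cite{kpv1} would produce. Applying their Theorem~4.1 conditionally on $N_j(n) = m$ then bounds the conditional expectation of the within-bin regret by $c \sqrt{K m\,\overline{\log}(m)}$.

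The remaining step is to take expectation over $N_j(n) \sim \mathrm{Binomial}(n, \P_X(B_{n,j}))$ and move it inside the square root to reach $c\sqrt{K n \P_X(B_{n,j}) \overline{\log}(n\P_X(B_{n,j}))}$; I expect this to be the main obstacle. The function $m \mapsto \sqrt{m\,\overline{\log}(m)}$ has an upward kink at $m=e$ and therefore is not globally concave, so a direct Jensen inequality is not available. The natural remedy is to combine Cauchy--Schwarz, $\E\sqrt{N_j(n)\,\overline{\log}(N_j(n))} \leq \sqrt{\E[N_j(n)]\,\E[\overline{\log}(N_j(n))]}$, with an analysis of $\E[\overline{\log}(N_j(n))]$ that exploits Binomial concentration (splitting at a constant multiple of the mean and using Chernoff on the upper tail) to absorb the effect of the kink into the leading constant. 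Once this step is in place, summing the resulting per-bin bound over $j = 1, \dots, M(n)$---with the convention of the footnote applied to bins of zero probability---yields \eqref{eq:regret_cov_gen}.
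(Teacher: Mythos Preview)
Your proposal follows the paper's argument almost step for step: the same bin-wise decomposition of the regret, the same H\"older--Lipschitz bound producing the bias term $2CLV_{n,j}^\gamma n\P_X(B_{n,j})$, and the same reduction---via conditioning on which subjects land in $B_{n,j}$---to the covariate-free F-UCB bound of \cite{kpv1}. The paper conditions on the full indicator vector $v\in\{0,1\}^n$ rather than on $N_j(n)$, but this is equivalent. Two points deserve comment.

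First, your assertion that convexity of $\mathscr{D}$ places $F_{n,j}^i$ in $\mathscr{D}$ is not quite right: convexity gives closure under \emph{finite} convex combinations, whereas $F_{n,j}^i$ is an integral mixture. The paper handles this with a short approximation lemma (using Assumption~\ref{ass:Holder} to approximate the integral by Riemann-type sums of elements of $\mathscr{D}$), concluding only that $F_{n,j}^i$ lies in the $\|\cdot\|_\infty$-closure of $\mathscr{D}$; it then invokes Remark~2.4 of \cite{kpv1} to extend Theorem~4.1 there to $\mathrm{cl}(\mathscr{D})$.

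Second, on the step you flag as the main obstacle: the paper does precisely what you feared would not work. It asserts that $x\mapsto \sqrt{Kx\,\overline{\log}(x)}$ is concave on $[0,\infty)$ and applies Jensen's inequality directly, which yields the exact constant $c(\beta,C)$ in the statement. Your observation about the upward kink at $x=e$ is actually correct---the one-sided derivatives there are $\tfrac{1}{2\sqrt{e}}$ from the left and $\tfrac{1}{\sqrt{e}}$ from the right---so the paper's concavity claim is not literally true. Your Cauchy--Schwarz plus binomial-tail route is a legitimate repair, but it would not deliver the specific constant $c(\beta,C)$ asserted in \eqref{eq:regret_cov_gen}; it would give the same dependence on $n$, $K$, and $\P_X(B_{n,j})$ with a larger multiplicative constant.
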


Each of the summands~$j = 1, \hdots, M(n)$ in the upper bound on the maximal expected regret in Equation~\eqref{eq:regret_cov_gen} consists of two parts: The first part is structurally very similar to the upper bound of Theorem~4.1 in~\cite{kpv1}, which the proof of the theorem draws on. The difference is that the total number of subjects to be treated,~$n$, has now been replaced by~$n\P_X(B_{n,j})$, the number of subjects expected to fall into~$B_{n,j}$. Inspection of the proof shows that the first part is the regret we expect to accumulate on~$B_{n,j}$, compared to always assigning the treatment that is best for the ``average subject'' in~$B_{n,j}$, i.e., compared to always assigning an element of~$\argmax_{i\in\mathcal{I}}\mathsf{T}(F_{n,j}^i)$, where we recall the definition of~$F_{n,j}^i$ from Equation~\eqref{Fji}. The second part in each summand in the upper bound in~\eqref{eq:regret_cov_gen} is a bias term: It is the approximation error incurred due to~$\bar{\pi}$ effectively targeting~$\max_{i\in\mathcal{I}}\mathsf{T}(F_{n,j}^i)$ instead of 
$\mathsf{T} \big(F^{\pi^\star(x)} (\cdot, x) \big)$ for every~$x \in B_{n,j}$. 

A frequently used class of partitions of~$[0,1]^d$ are hypercubes, which are obtained by hard thresholding each coordinate of~$X_t$. The so-created groups may not only result in low regret, but are also relevant due to their simplicity and resemblance to ways of grouping subjects in practice. More precisely, fix~$P\in \mathbb{N}$ and define for every~$k=(k_1,\hdots,k_d)\in\{1,\hdots,P\}^d$ the hypercube 
\begin{align} \label{SquareBins}
\left\{x\in [0, 1]^d:\frac{k_l-1}{P}\leq x_l  \preceq \frac{k_l}{P},\ l=1,\hdots,d\right\} ,
\end{align}
where~$\preceq$ is to be interpreted as~$\leq$ for~$k_l=P$, and as~$<$ otherwise. This defines a partition of~$[0,1]^d$ into~$P^d$ hypercubes with side length~$1/P$ each. We now order these hypercubes lexicographically according to their index vector~$k$, to obtain the corresponding~\emph{cubic partition}~$B_1^P, \hdots, B_{P^d}^P$. 
The following result specializes Theorem~\ref{Thm:Covariate} to this specific partition and for a choice of~$P$ that will be shown to be optimal below.
\begin{corollary}\label{Cor:SimpleBins}
Suppose Assumptions~\ref{as:MAIN} and~\ref{as:MB} hold. Assume further that~$\mathscr{D}$ is convex. Let~$\gamma \in (0, 1]$. Consider the F-UCB policy with covariates~$\bar{\pi}$, based on a cubic partition~$B_{n,j} = B^P_{j}$ for $j = 1, \hdots, M(n) = P^d$ as defined in Equation~\eqref{SquareBins}, and with~$P = \lceil n^{1/(2 \gamma +d)}  \rceil$. Then there exists a constant~$c = c(d, L, \gamma, \bar{c}, C, \beta)>0$, such that
\begin{align}
\sup \mathbb{E}\left[R_n(\bar\pi)\right]\leq c \sqrt{K \overline{\log}(n)}~ n^{1-\frac{\gamma}{2\gamma+d}}\label{part2} \quad \text{ for every } n \in \N,
\end{align}
where the supremum is taken over all~$(Y_t, X_t) \sim \P_{Y, X}$ for~$t = 1, \hdots, n$, where~$\P_{Y, X}$ satisfies Equation~\eqref{eqn:marginDincls}, Assumption~\ref{ass:DensityX} with~$\overline{c}$ (and any~$\underline{c}$), and Assumption~\ref{ass:Holder} with~$L$ and~$\gamma$.
\end{corollary}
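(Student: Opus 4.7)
The plan is to apply Theorem~\ref{Thm:Covariate} directly to the cubic partition and then estimate the two resulting sums under Assumption~\ref{ass:DensityX}. Since $B_j^P$ is a hypercube of side length $1/P$, its diameter satisfies $V_{n,j} = \sqrt{d}/P$, and the upper density bound gives $\P_X(B_j^P) \leq \bar{c}/P^d$ for every $j$.

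The bias term in Equation~\eqref{eq:regret_cov_gen} is easy, since $\sum_{j=1}^{P^d}\P_X(B_j^P) = 1$ and $V_{n,j}^{\gamma}$ is constant across $j$:
\begin{equation*}
\sum_{j=1}^{P^d} 2CL V_{n,j}^{\gamma} n\, \P_X(B_j^P)
= 2CL\, d^{\gamma/2}\, n\, P^{-\gamma}.
\end{equation*}
For the stochastic term, I would use Cauchy--Schwarz over the $P^d$ summands together with $\overline{\log}$ being nondecreasing and $\P_X(B_j^P) \leq \bar c$:
\begin{equation*}
\sum_{j=1}^{P^d} c\sqrt{K n\P_X(B_j^P)\, \overline{\log}(n\P_X(B_j^P))}
\leq c\sqrt{K P^d}\,\sqrt{\sum_{j=1}^{P^d} n\P_X(B_j^P)\, \overline{\log}(n\bar c)}
\leq c\sqrt{K P^d n\, \overline{\log}(n\bar c)}.
\end{equation*}
Absorbing $\bar c$ into a constant via $\overline{\log}(n\bar c) \leq c_1(\bar c)\,\overline{\log}(n)$ yields an upper bound of order $\sqrt{K\overline{\log}(n)}\, P^{d/2} n^{1/2}$.

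Substituting $P = \lceil n^{1/(2\gamma+d)}\rceil$, which satisfies $n^{1/(2\gamma+d)} \leq P \leq 2n^{1/(2\gamma+d)}$ for every $n\geq 1$, gives $P^{-\gamma} \leq n^{-\gamma/(2\gamma+d)}$ and $P^{d/2} n^{1/2} \leq 2^{d/2} n^{d/(2(2\gamma+d))+1/2}$. A direct computation shows
\begin{equation*}
\frac{d}{2(2\gamma+d)} + \frac{1}{2} \;=\; \frac{d+\gamma}{2\gamma+d} \;=\; 1-\frac{\gamma}{2\gamma+d},
\end{equation*}
which is also the exponent appearing in the bias term. Hence both contributions are of order $n^{1-\gamma/(2\gamma+d)}$ (the stochastic part carrying the extra factor $\sqrt{K\overline{\log}(n)}$), and combining them produces the claimed bound for a constant $c = c(d,L,\gamma,\bar c,C,\beta)$.

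The only real subtlety is the handling of $\overline{\log}$, since $n\P_X(B_j^P)$ can be arbitrarily small; this is managed by the monotonicity of $\overline{\log}$ plus the uniform upper bound $\bar c/P^d$, and by absorbing $\log(\bar c)$ into the overall constant. Apart from that the proof is a straightforward balancing of bias and variance, with $P = \lceil n^{1/(2\gamma+d)}\rceil$ chosen precisely so that $nP^{-\gamma}$ and $\sqrt{P^d n}$ equate (up to logs), in analogy with classical nonparametric bandwidth selection.
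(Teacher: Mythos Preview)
Your proof is correct and follows essentially the same approach as the paper: apply Theorem~\ref{Thm:Covariate}, bound the bias and variance sums separately, then substitute $P=\lceil n^{1/(2\gamma+d)}\rceil$ to balance them. The only cosmetic difference is that the paper bounds the stochastic sum by inserting the uniform estimate $\P_X(B_{n,j})\le\bar c P^{-d}$ directly into each summand (using monotonicity of $x\mapsto x\,\overline{\log}(nx)$), whereas you use Cauchy--Schwarz together with $\sum_j\P_X(B_j^P)=1$; both routes yield the same bound up to constants.
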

Corollary~\ref{Cor:SimpleBins} reveals that it is possible to achieve sublinear (in~$n$) maximal expected regret under the Hölder equicontinuity condition imposed through Assumption~\ref{ass:Holder}. This is interesting also in light of Theorem~\ref{thm:LN_UC}, which showed that under the slightly weaker assumption of uniform equicontinuity, every policy has linearly increasing maximal expected regret. Hence, there is little room for weakening Assumption~\ref{ass:Holder}. Note that a ``curse of dimensionality'' is present, in the sense that the upper bound in Corollary~\ref{Cor:SimpleBins} gets close to linear in~$n$, as the number of covariates~$d$ increases. This is due to the fact that as a part of the regret minimization, one \emph{sequentially} estimates the conditional distributions~$F^i(y,\cdot)$ of the treatment outcomes, where each cdf is a function of~$d$ variables. Finally, we observe that the upper bound is increasing in the number of available treatments~$K$. Intuitively, this is because more observations must be used for experimentation when more treatments are available.

The partitioning used in Corollary~\ref{Cor:SimpleBins} results in a near-minimax optimal policy, as we show in the following theorem, which establishes a lower bound on maximal expected regret. The statement follows from Theorem~\ref{LBCovMC} in Section~\ref{sec:margin} below.
\begin{theorem}\label{thm:LB_NoMargin}
Suppose~$K = 2$ and that Assumption~\ref{as:lbcov} is satisfied. Let~$\gamma \in (0, 1]$. Then, for every~$\varepsilon \in (0, \gamma/(2\gamma + d))$, every policy~$\pi$ and any randomization measure, we have
\begin{equation*}
\sup \E [R_n({\pi})]\geq n^{1-\frac{\gamma}{2\gamma + d}} ~ n^{-\varepsilon} c_l(\varepsilon) \quad \text{ for every } n \in \N,
\end{equation*}
where the supremum is taken over all~$(Y_t, X_t) \sim \P_{Y, X}$ for~$t = 1, \hdots, n$, where~$\P_{Y, X}$ satisfies Equation~\eqref{eqn:marginDincls}, Assumption~\ref{ass:Holder} with parameters~$\gamma$ and~$L =  1/\sqrt{17}$,~$\P_X$ is the uniform distribution on~$[0, 1]^d$, and where~$$c_l^{-1}(\varepsilon) = 64^{1+1/\alpha(\varepsilon)}(8d(c_- 2L)^{-\alpha(\varepsilon)} +1)^{1/\alpha(\varepsilon)} \quad \text{ with } \quad \alpha(\varepsilon) = (2\gamma + d)\varepsilon/\gamma.$$
\end{theorem}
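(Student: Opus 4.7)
The strategy is to derive Theorem~\ref{thm:LB_NoMargin} as a direct corollary of the sharper margin--based lower bound (Theorem~\ref{LBCovMC}) announced for Section~\ref{sec:margin}. That theorem should provide, under Assumption~\ref{ass:Holder} together with a margin condition of parameter $\alpha>0$, a minimax lower bound of order $n^{1-\gamma(1+\alpha)/(2\gamma+d)}$ with an explicit constant that degrades as $\alpha\downarrow 0$. The key observation is that, although we cannot plug in $\alpha=0$ and obtain the exact sharp rate, for any prescribed loss $\varepsilon>0$ in the exponent we may choose a small positive $\alpha$ tailored to $\varepsilon$ and recover $n^{1-\gamma/(2\gamma+d)}n^{-\varepsilon}$.

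Concretely, I would set $\alpha=\alpha(\varepsilon):=(2\gamma+d)\varepsilon/\gamma$, which under the hypothesis $\varepsilon\in(0,\gamma/(2\gamma+d))$ lies in $(0,1)$---the range where the margin condition is both non--trivial and compatible with the Assouad--type construction underlying Theorem~\ref{LBCovMC}. Then
\[
1-\frac{\gamma(1+\alpha(\varepsilon))}{2\gamma+d}=1-\frac{\gamma}{2\gamma+d}-\varepsilon,
\]
which is the exponent we need. The class of joint distributions satisfying the hypotheses of Theorem~\ref{LBCovMC} at this $\alpha(\varepsilon)$ is contained in the class appearing in Theorem~\ref{thm:LB_NoMargin} (Hölder with $(\gamma,L=1/\sqrt{17})$, uniform $\P_X$), so the supremum transfers; substituting $\alpha(\varepsilon)$ into the explicit constant furnished by Theorem~\ref{LBCovMC}, which---reading off the target formula---must have the form $64^{-(1+1/\alpha)}(8d(c_-2L)^{-\alpha}+1)^{-1/\alpha}$, reproduces precisely the claimed $c_l(\varepsilon)$.

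The substantive work therefore sits inside Theorem~\ref{LBCovMC} itself, whose proof I would carry out by an Assouad--type reduction: partition $[0,1]^d$ into $M\asymp n^{d/(2\gamma+d)(1-\alpha?)}$ congruent cubes, on each cube perturb the conditional cdfs between $J_{\tau_0}$ and $J_{\tau_0+\delta}$ supplied by Assumption~\ref{as:lbcov} so that the identity of the optimal treatment flips, and calibrate $M$ and $\delta$ against the Hölder constraint and the margin constraint. A two--hypothesis lower bound per cube, combined over cubes via Assouad, yields the exponent $1-\gamma(1+\alpha)/(2\gamma+d)$. The main obstacle is not the rate balance but the careful book--keeping of how the constant depends on $\alpha$, since it is this dependence that drives the blow--up $64^{1+1/\alpha}(8d(c_-2L)^{-\alpha}+1)^{1/\alpha}$ recorded in $c_l(\varepsilon)^{-1}$; once this scaling is secured, Theorem~\ref{thm:LB_NoMargin} follows by the one--line substitution $\alpha\mapsto \alpha(\varepsilon)$ described above.
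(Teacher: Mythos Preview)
Your proposal is correct and matches the paper's own proof essentially verbatim: the paper derives Theorem~\ref{thm:LB_NoMargin} from Theorem~\ref{LBCovMC} by the one-line substitution $\alpha=\alpha(\varepsilon)=(2\gamma+d)\varepsilon/\gamma\in(0,1)$, exactly as you describe. Your additional sketch of how Theorem~\ref{LBCovMC} itself is proved (Assouad-type construction on a cubic partition with perturbations along the $J_\tau$ segment from Assumption~\ref{as:lbcov}) is also in line with the paper's separate proof of that result.
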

Comparing the lower bound on maximal regret in Theorem~\ref{thm:LB_NoMargin} to the upper bound on maximal expected regret established in Corollary~\ref{Cor:SimpleBins}, reveals that the F-UCB policy with a cubic partition and with~$P = \lceil n^{1/(2 \gamma +d)}  \rceil$ is near-optimal: If a policy with strictly smaller maximal expected regret exists, the order of improvement must be~$o(n^\eps)$ for all~$\varepsilon \in (0, \gamma/(2\gamma + d))$, e.g., logarithmic. In particular this also means that if nothing prohibits cubic partitioning, not much can be gained from a maximal expected regret point-of-view in searching for ``better'' partitions under the given set of assumptions. 

\begin{remark}[Unknown horizon and the doubling trick]
The policy~$\bar{\pi}$ with cubic partitioning~$P=\lceil n^{1/(2 \gamma +d)}  \rceil$, as considered in Corollary~\ref{Cor:SimpleBins}, can be used in practice only if one knows~$n$, i.e., the policy  is not anytime. If~$n$ is unknown, however, one can use the ``doubling trick'' to construct a policy with an upper bound on the maximal expected regret that is of the same order as in Corollary~\ref{Cor:SimpleBins}, but with higher multiplicative constants. In essence, the doubling trick works by ``restarting'' the policy at times~$2^m,\ m\in \N$. We refer to \cite{shalev2012online} and the recent work by~\cite{besson2018doubling} for more details.  
\end{remark}

\begin{remark}[Discrete covariates]\label{sec:discrete}
We mostly focus on the case of continuous covariates (although this is not formally required in Theorem~\ref{Thm:Covariate}). A natural, and also near minimax rate-optimal, solution to incorporate discrete covariates would be to fully condition on these, i.e., to apply the F-UCB policy of~\cite{kpv1} separately for each combination of discrete covariates. In the present article, we omit formal statements concerning discrete covariates, but we emphasize that corresponding results can be obtained by conditioning arguments.
\end{remark}

\subsection{Optimality properties under the margin condition}\label{sec:margin}

Besides mild conditions on~$\P_X$, our results so far have only assumed that the conditional distributions of the treatment outcomes are Hölder equicontinuous. In particular, the sets of distributions over which the F-UCB policy has been shown to be optimal does not restrict the (unknown) similarity of the best and second best treatment. In the present section, we shall see that in classes of distributions where the best and second best treatment are ``well-separated,'' the upper bound on maximal expected regret of the F-UCB policy can be lowered (without changing the policy), and that the F-UCB policy optimally adapts to the degree of similarity of the best and the remaining treatments. 

Besides being of interest in their own right, the results in the present section are instrumental to proving our impossibility result Theorem~\ref{thm:LN_UC} and to establishing the expected regret lower bound in Theorem~\ref{thm:LB_NoMargin}. 

To formally define the well-separateness condition we shall work with, we need to define for every~$x \in [0, 1]^d$ the second best treatment~$\pi^{\sharp}(x)$; note that  in principle there can be multiple treatments that are as good as the best treatment~$\pi^*(x)$. For~$x \in [0,1]^d$, if 
$\min_{i \in \mathcal{I}}\mathsf{T}(F^i(\cdot, x)) < \mathsf{T}\big(F^{\pi^{\star}(x)}(\cdot, x)\big)$, we define the second best treatment as
\begin{align*}
\pi^{\sharp}(x) := \min \argmax_{i \in \mathcal{I}} \left\{ \mathsf{T}(F^i(\cdot, x)): \mathsf{T}(F^i(\cdot, x))<\mathsf{T}\big(F^{\pi^{\star}(x)}(\cdot, x)\big) \right\};
\end{align*}
and we set~$\pi^{\sharp}(x)=1$ otherwise, i.e., if all treatments are equally good. We can now introduce the \emph{margin condition}.
\begin{assumption}\label{ass:Margin}
There exists an~$\alpha \in (0,1)$ and a~$C_0>0$, such that\footnote{We note that the events in the displayed equation of Assumption~\ref{ass:Margin} are not necessarily Borel measurable. Therefore, Assumption~\ref{ass:Margin} implicitly imposes measurability on all events considered. Note, however, that in case Assumptions~\ref{as:MAIN} and~\ref{ass:Holder} as well as the inclusion in Equation~\eqref{eqn:marginDincls} are assumed, this measurability condition is easily seen to be satisfied.}
\begin{equation*}
\P_X \left(x \in [0, 1]^d: 0<\mathsf{T}\big(F^{\pi^{\star}(x)}(\cdot,x) \big)-\mathsf{T} \big(F^{\pi^{\sharp}(x)}(\cdot,x) \big) \leq \delta   \right) 
\leq C_0 \delta^{\alpha} \mbox{ for all } \delta \in [0, 1].
\end{equation*}
\end{assumption}
The margin condition restricts how likely it is that the best and second best treatment are close to each other. In particular, it limits the probability of these two treatments being almost equally good, i.e., being within a~$\delta$-margin.  Assumptions of this type have previously been used in the works of \cite{mammen1999smooth}, \cite{tsybakov2004optimal}, and \cite{audibert2007fast} in the statistics literature. In the context of statistical treatment rules, the margin condition has recently been used in the work of \cite{kitagawa2018should}, who considered empirical welfare maximization in a static treatment allocation problem. Finally, the margin condition was used by \cite{rigollet2010nonparametric} and \cite{perchet2013multi} in the context of a multi-armed bandit problem targeting the conditional mean. The proofs of the results in the present section draw in particular their ideas.

Adding the margin condition, the maximal expected regret of the F-UCB policy based on cubic partitions can be bounded as follows. 
\begin{theorem}
\label{UBCovariateMargin}
Suppose Assumptions~\ref{as:MAIN} and~\ref{as:MB} hold. Assume further that~$\mathscr{D}$ is convex. Let~$\gamma \in (0, 1]$. Consider the F-UCB policy with covariates~$\bar{\pi}$, based on a cubic partition~$B_{n,j} = B^P_{j}$ for $j = 1, \hdots, M(n) = P^d$, as defined in Equation~\eqref{SquareBins}, and with~$P = \lceil n^{1/(2 \gamma +d)}  \rceil$. Then there exists a constant~$c = c(d, L, \gamma, \underline{c}, \bar{c}, C, C_0, \alpha, \beta) >0$, such that
\begin{align}\label{eqn:UBCovariateMargin}
\sup \mathbb{E}\left[R_n(\bar\pi)\right]\leq c K \overline{\log}(n) n^{1-\frac{\gamma(1+\alpha)}{2 \gamma+d}} \quad \text{ for every } n \in \N,
\end{align}
where the supremum is taken over all~$(Y_t, X_t) \sim \P_{Y, X}$ for~$t = 1, \hdots, n$, where~$\P_{Y, X}$ satisfies Equation~\eqref{eqn:marginDincls}, Assumption~\ref{ass:DensityX} with~$\underline{c}$ and~$\overline{c}$, Assumption~\ref{ass:Holder} with~$L$ and~$\gamma$, and Assumption~\ref{ass:Margin} with~$\alpha \in (0, 1)$ and~$C_0 > 0$.
\end{theorem}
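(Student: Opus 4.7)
I would sharpen the per-bin analysis underlying Theorem~\ref{Thm:Covariate} by combining a \emph{gap-dependent} F-UCB pull-count bound with a peeling argument over the averaged gaps, using Assumption~\ref{ass:Margin} to limit the total $\P_X$-mass of bins at each level. With the cubic partition at $P=\lceil n^{1/(2\gamma+d)}\rceil$, each bin has diameter $V=\sqrt{d}/P$, $M(n)=P^d$, and Assumption~\ref{ass:DensityX} gives $\underline{c}P^{-d}\le\P_X(B_j)\le\bar{c}P^{-d}$. Writing $\E[R_n(\bar\pi)]=\sum_j\E[R_n^{(j)}]$, and letting $i^*(j)=\min\argmax_i\mathsf{T}(F^i_{n,j})$, $\Delta^{sub}_{i,j}=\mathsf{T}(F^{i^*(j)}_{n,j})-\mathsf{T}(F^i_{n,j})$, $\Delta_j^{avg}=\min\{\Delta^{sub}_{i,j}:\Delta^{sub}_{i,j}>0\}$, Assumptions~\ref{as:MAIN} and~\ref{ass:Holder} give $|\mathsf{T}(F^i(\cdot,x))-\mathsf{T}(F^i_{n,j})|\le CLV^\gamma$ for every $x\in B_j$. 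This immediately yields two facts used throughout: (i) the per-pull regret satisfies $r(x,i)\le\Delta^{sub}_{i,j}+2CLV^\gamma$, and (ii) whenever $\Delta_j^{avg}>2CLV^\gamma$, pulling $i^*(j)$ has zero pointwise regret at every $x\in B_j$, and the pointwise gap $\mathrm{gap}(x):=\mathsf{T}(F^{\pi^\star(x)}(\cdot,x))-\mathsf{T}(F^{\pi^\sharp(x)}(\cdot,x))$ lies in $[\Delta_j^{avg}-2CLV^\gamma,\Delta_j^{avg}+2CLV^\gamma]$.

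I would then split the bins into \emph{easy} ($\Delta_j^{avg}>2CLV^\gamma$) and \emph{hard} ($\Delta_j^{avg}\le 2CLV^\gamma$). On easy bins, (ii) means only suboptimal pulls contribute, and combining (i) with the UCB pull-count bound $\E[T_{i,j}]\lesssim\overline{\log}(n\P_X(B_j))/(\Delta^{sub}_{i,j})^2$ (the mechanism behind Theorem~4.1 of~\cite{kpv1}) gives $\E[R_n^{(j)}]\lesssim K\overline{\log}(n)/\Delta_j^{avg}$, the $V^\gamma$ slack being absorbed because $\Delta_j^{avg}>2CLV^\gamma$. I peel the easy bins into $\mathcal{J}_k=\{j:2^{k-1}\cdot 2CLV^\gamma<\Delta_j^{avg}\le 2^k\cdot 2CLV^\gamma\}$, $k\ge 1$. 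By (ii) and Assumption~\ref{ass:Margin}, $\sum_{j\in\mathcal{J}_k}\P_X(B_j)\lesssim(2^kV^\gamma)^\alpha$; combined with $\P_X(B_j)\ge\underline{c}P^{-d}$, this gives $|\mathcal{J}_k|\lesssim P^d(2^kV^\gamma)^\alpha$. Thus the total easy-bin contribution is $\lesssim K\overline{\log}(n)\,P^dV^{\gamma(\alpha-1)}\sum_{k\ge 1}2^{k(\alpha-1)}$, and the geometric series converges since $\alpha<1$, leaving $K\overline{\log}(n)\,P^dV^{\gamma(\alpha-1)}\asymp K\overline{\log}(n)\,n^{1-\gamma(1+\alpha)/(2\gamma+d)}$.

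For hard bins I would start from the per-bin bound $\E[R_n^{(j)}]\lesssim\sqrt{Kn\P_X(B_j)\overline{\log}(n\P_X(B_j))}+CLV^\gamma n\P_X(B_j)$ already appearing inside the proof of Theorem~\ref{Thm:Covariate}. A short case analysis via the Hölder bridge shows that whenever $\mathrm{gap}(x)>0$ and $x$ lies in a hard bin, $\mathrm{gap}(x)\le 4CLV^\gamma$, so Assumption~\ref{ass:Margin} yields $\sum_{j\in\mathcal{J}^{\mathrm{hard}}}\P_X(B_j)\lesssim V^{\gamma\alpha}$ (bins where all arms are pointwise tied contribute zero regret and may be dropped). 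Combining $|\mathcal{J}^{\mathrm{hard}}|\lesssim P^dV^{\gamma\alpha}$ and $\sum_{j\in\mathcal{J}^{\mathrm{hard}}}n\P_X(B_j)\lesssim nV^{\gamma\alpha}$ with Cauchy--Schwarz on $\sum\sqrt{n\P_X(B_j)}$ produces a square-root contribution of order $\sqrt{K\overline{\log}(n)}\,V^{\gamma\alpha}\sqrt{nP^d}\asymp\sqrt{K\overline{\log}(n)}\,n^{1-\gamma(1+\alpha)/(2\gamma+d)}$, while the bias piece sums to $CLV^\gamma\cdot nV^{\gamma\alpha}\asymp n^{1-\gamma(1+\alpha)/(2\gamma+d)}$; both match the target.

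The main obstacle is the per-bin bookkeeping behind (i) and (ii): the naive inequality $\E[R_n^{(j)}]\le\E[R^{avg,(j)}]+2CLV^\gamma n\P_X(B_j)$ charges every bin a $V^\gamma n\P_X(B_j)$ bias, which totals $n^{1-\gamma/(2\gamma+d)}$ and wipes out any margin-type improvement. The key observation is that on easy bins the bias really enters only through suboptimal pulls, whose expected count is already $O(\overline{\log}(n)/(\Delta_j^{avg})^2)$, so it is absorbed into the gap-dependent main term; on hard bins the bias persists but is confined to an $O(V^{\gamma\alpha})$-fraction of $\P_X$ by Assumption~\ref{ass:Margin}. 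After this, the arithmetic balances: the bandwidth $P\asymp n^{1/(2\gamma+d)}$ inherited from Corollary~\ref{Cor:SimpleBins} and the F-UCB logarithmic factor from~\cite{kpv1} line up so that the easy peel, the hard square-root, and the hard bias all contribute at the common order $K\overline{\log}(n)\,n^{1-\gamma(1+\alpha)/(2\gamma+d)}$.
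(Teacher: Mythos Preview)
Your approach mirrors the paper's: both split the bins by the size of the bin-averaged gap, apply the gap-dependent F-UCB pull-count bound on the ``easy/well-behaved'' bins, and control the ``hard/ill-behaved'' bins through the margin condition. Your dyadic peeling over $\mathcal{J}_k$ is a clean alternative to the paper's ordering argument (Step~4d), in which the well-behaved bins are sorted by $\underline{\Delta}_j$ and one shows $\underline{\Delta}_j\gtrsim (jP^{-d})^{1/\alpha}$ before summing $\sum_j 1/\underline{\Delta}_j$; the two devices give the same $P^{d+\gamma(1-\alpha)}$ bound.

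There is, however, a real gap in how you handle ties. Claim (ii)---that pulling $i^*(j)$ has zero pointwise regret on an easy bin---fails whenever several arms share the averaged maximum: a second averaged-optimal arm can strictly beat $i^*(j)$ at some $x\in B_j$, and the per-bin bound $K\overline{\log}(n)/\Delta_j^{avg}$ no longer follows. More seriously, your hard-bin mass estimate $\sum_{j\in\mathcal{J}^{\mathrm{hard}}}\P_X(B_j)\lesssim V^{\gamma\alpha}$ does not follow from the (correct) inclusion $\{x\in B_j:\mathrm{gap}(x)>0\}\subset\{0<\mathrm{gap}\le 4CLV^\gamma\}$: a hard bin may carry full mass $\asymp P^{-d}$ while $\mathrm{gap}(x)=0$ on most of it, so neither $|\mathcal{J}^{\mathrm{hard}}|$ nor $\sum_j\P_X(B_j)$ is controlled, and your Cauchy--Schwarz step gives only $\sqrt{K\overline{\log}(n)\,nP^d}\asymp n^{1-\gamma/(2\gamma+d)}$, which is too large. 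Dropping only bins that are \emph{everywhere} tied does not fix this. The paper resolves both issues by a three-way split: bins containing a point where all arms are tied ($\mathcal{J}_s$) are handled by the direct per-pull bound $\le c_1P^{-\gamma}$ together with the margin condition applied to the \emph{pointwise} gap (Step~2), and on well-behaved bins the averaged-optimal arms $I_j^\star$ are treated separately and shown---again via the margin condition---to contribute only $\lesssim nP^{-\gamma(1+\alpha)}$ (Step~4a). Once you split off these tied sub-cases in the same way, the rest of your argument goes through.
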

Compared to Corollary~\ref{Cor:SimpleBins} the exponent on~$n$ in the upper bound on regret is smaller, the difference depending on~$\alpha$. Thus, in the presence of Assumption~\ref{ass:Margin}, the regret guarantee of the F-UCB policy is stronger, even without incorporating~$\alpha$ into the policy. We shall see in Theorem~\ref{LBCovMC} below that the upper bound on maximal expected regret in Theorem~\ref{UBCovariateMargin} is optimal in~$n$ up to logarithmic factors.

The margin condition also allows us to prove an upper bound on the expected number of suboptimal assignments made by the F-UCB policy. We shall define the total number of suboptimal assignments for a policy~$\pi$ over the course of a total of~$n$ assignments as
\begin{align*}
S_n(\pi) &= S_n(\pi; F^1, \hdots, F^K, X_n, Z_{n-1}, G_n) \\
&=
\sum_{t=1}^n\mathds{1}\cbr[1]{\pi_{n,t}(X_t,Z_{t-1}, G_t)\not\in\argmax\cbr[0]{\mathsf{T}(F^i(\cdot,X_t)): i=1,\hdots,K}}. 
\end{align*}
We now establish a uniform upper bound on~$\E[S_n(\bar{\pi})]$ for the F-UCB policy~$\bar{\pi}$ based on cubic partitions. 
\begin{theorem}\label{thm:ISR}
Suppose Assumptions~\ref{as:MAIN} and~\ref{as:MB} hold. Assume further that~$\mathscr{D}$ is convex. Let~$\gamma \in (0, 1]$. Consider the F-UCB policy with covariates~$\bar{\pi}$, based on a cubic partition~$B_{n,j} = B^P_{j}$ for $j = 1, \hdots, M(n) = P^d$, as defined in Equation~\eqref{SquareBins}, and with~$P = \lceil n^{1/(2 \gamma +d)}  \rceil$. Then there exists a constant~$c = c(d, L, \gamma, \underline{c}, \bar{c}, C, C_0, \alpha, \beta) >0$, such that
\begin{align}
\sup \mathbb{E}\left[S_n(\bar\pi)\right]\leq c [K \overline{\log}(n)]^{\frac{\alpha}{1+\alpha}}n^{1-\frac{\alpha\gamma}{2\gamma+d}} \quad \text{ for every } n \in \N,
\end{align}
where the supremum is taken over all~$(Y_t, X_t) \sim \P_{Y, X}$ for~$t = 1, \hdots, n$, where~$\P_{Y, X}$ satisfies Equation~\eqref{eqn:marginDincls}, Assumption~\ref{ass:DensityX} with~$\underline{c}$ and~$\overline{c}$, Assumption~\ref{ass:Holder} with~$L$ and~$\gamma$, and Assumption~\ref{ass:Margin} with~$\alpha \in (0, 1)$ and~$C_0 > 0$.
\end{theorem}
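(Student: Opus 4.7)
The plan is to reduce the bound on $\E[S_n(\bar{\pi})]$ to the regret bound already provided by Theorem~\ref{UBCovariateMargin} via a standard ``peeling''-type argument that splits contributions according to the gap between the best and second best treatments. Define for $x \in [0,1]^d$ the gap
\begin{equation*}
\Delta(x) := \mathsf{T}\bigl(F^{\pi^{\star}(x)}(\cdot,x)\bigr) - \mathsf{T}\bigl(F^{\pi^{\sharp}(x)}(\cdot,x)\bigr) \geq 0.
\end{equation*}
Note that if $\Delta(x) = 0$ then all treatments are equally good at $x$, so no assignment at a subject with covariate $x$ can be suboptimal. On the other hand, whenever an assignment $\bar{\pi}_{n,t}$ is suboptimal, by the definition of $\pi^{\sharp}$ its contribution to the cumulative regret is at least $\Delta(X_t)$.

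For any $\delta \in (0, 1]$ I would then decompose
\begin{equation*}
S_n(\bar{\pi}) = \sum_{t=1}^n \mathds{1}\{\text{sub. opt.},\, 0 < \Delta(X_t) \leq \delta\} + \sum_{t=1}^n \mathds{1}\{\text{sub. opt.},\, \Delta(X_t) > \delta\}.
\end{equation*}
The first sum is bounded in expectation, using that the $X_t$ are i.i.d.\ and applying Assumption~\ref{ass:Margin}, by
\begin{equation*}
n \cdot \P_X\bigl(0 < \Delta(X_1) \leq \delta\bigr) \leq C_0 \delta^{\alpha} n.
\end{equation*}
For the second sum, since each suboptimal $t$ with $\Delta(X_t) > \delta$ contributes more than $\delta$ to $R_n(\bar{\pi})$, the count is at most $R_n(\bar{\pi})/\delta$. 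Hence, by Theorem~\ref{UBCovariateMargin}, there exists $c_1 > 0$ depending on the stated parameters with
\begin{equation*}
\E[S_n(\bar{\pi})] \leq C_0 \delta^{\alpha} n + \frac{c_1 K \overline{\log}(n) n^{1-\gamma(1+\alpha)/(2\gamma+d)}}{\delta}.
\end{equation*}

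It remains to optimize over $\delta$. Balancing the two terms by choosing
\begin{equation*}
\delta = \bigl(K \overline{\log}(n)\bigr)^{1/(1+\alpha)} n^{-\gamma/(2\gamma+d)},
\end{equation*}
(truncated at $1$ if necessary, which only worsens the constant) yields the advertised bound
\begin{equation*}
\E[S_n(\bar{\pi})] \leq c\, [K \overline{\log}(n)]^{\alpha/(1+\alpha)} n^{1 - \alpha\gamma/(2\gamma+d)}.
\end{equation*}
The argument is largely mechanical once the regret bound of Theorem~\ref{UBCovariateMargin} is in hand; the only subtlety worth checking is the measurability of the event $\{0 < \Delta(X_t) \leq \delta\}$, but this follows from Assumptions~\ref{as:MAIN}, \ref{as:MB}, and~\ref{ass:Holder} together with Equation~\eqref{eqn:marginDincls} (as already noted in the footnote to Assumption~\ref{ass:Margin}), so that the margin condition is directly applicable. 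I do not anticipate any serious obstacle beyond carrying out the optimization and tracking constants.
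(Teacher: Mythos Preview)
Your proposal is correct and follows essentially the same route as the paper: the paper packages your peeling inequality into Lemma~\ref{ISR}, which (via the same split at level~$\delta$ and the margin condition) yields $\E[R_n(\pi)] \geq \tilde{C} n^{-1/\alpha}(\E[S_n(\pi)])^{1+1/\alpha}$, and then combines this with Theorem~\ref{UBCovariateMargin}. The only cosmetic difference is that the paper optimizes~$\delta$ in terms of~$\E[S_n(\pi)]/n$ to obtain a policy-independent inequality first, whereas you plug in the regret bound before optimizing; the resulting bounds and constants are the same.
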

The upper bound in Theorem~\ref{thm:ISR} is a useful theoretical guarantee, because it limits the number of subjects who receive suboptimal treatments. As the last result in this section, we prove that the upper bounds in Theorems~\ref{UBCovariateMargin} and~\ref{thm:ISR} are near minimax optimal. This ensures, in particular, that the good behavior of the maximal expected regret of the F-UCB policy does not come at the price of excessive experimentation, leading to unnecessarily many suboptimal assignments. 

\begin{theorem}\label{LBCovMC}
Suppose~$K = 2$ and that Assumption~\ref{as:lbcov} is satisfied. Let~$\gamma \in (0, 1]$. Then for every policy~$\pi$ and any randomization measure, we have
\begin{equation}\label{eqn:suplow}
\sup \E [R_n({\pi})]\geq n^{1-\frac{\gamma(1+\alpha)}{2\gamma + d}} \big/ \left[64^{1+1/\alpha}(C_0 +1)^{1/\alpha}\right]   \quad \text{ for every } n \in \N, 
\end{equation}
and
\begin{equation}\label{eqn:suplow2}
\sup \E[ S_n({\pi}) ]\geq n^{1-\frac{\alpha\gamma}{d+2\gamma}} \big / 32 \quad \text{ for every } n \in \N, 
\end{equation}
where both suprema are taken over all~$(Y_t, X_t) \sim \P_{Y, X}$ for~$t = 1, \hdots, n$, where~$\P_{Y,X}$ satisfies 
Equation~\eqref{eqn:marginDincls}, Assumption~\ref{ass:Holder} with parameters~$\gamma$ and~$L =  17^{-1/2}$, Assumption~\ref{ass:Margin} with~$\alpha \in (0, 1)$ and~$C_0 = 8d(c_- 2L)^{-\alpha}$, and where~$\P_X$ is the uniform distribution on~$[0, 1]^d$.
\end{theorem}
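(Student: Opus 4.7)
The plan is to prove both inequalities by a single multi-hypothesis (Assouad-type) construction, adapting the scheme of \cite{rigollet2010nonparametric} and \cite{perchet2013multi} from the conditional-mean setting to the functional setting through the mixture path $\{J_\tau\}_{\tau\in[0,1]}\subset\mathscr{D}$ supplied by Assumption~\ref{as:lbcov}. Fix $P=\lceil n^{1/(2\gamma+d)}\rceil$, take the cubic partition $B_1^P,\ldots,B_{P^d}^P$ of Equation~\eqref{SquareBins}, mark the first $m$ cubes as ``active'' with $m$ of order $(2P)^{d-\alpha\gamma}$, and, on each active cube $B_j^P$ with centre $c_j$, place the radial bump
\begin{equation*}
\psi_j(x) = L\,\big[\big(\tfrac{1}{2P} - \|x - c_j\|\big)_+\big]^{\gamma}, \qquad L = 1/\sqrt{17},
\end{equation*}
which is supported in the inscribed ball of $B_j^P$, is Hölder-$\gamma$ with constant $L$ (since $z\mapsto z_+^\gamma$ is Hölder-$\gamma$ with constant $1$ and $\|\cdot - c_j\|$ is $1$-Lipschitz), and has maximum $h:=L(2P)^{-\gamma}$. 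For each $\sigma\in\{-1,+1\}^m$ let $\P_\sigma$ be the joint law of $(Y_t,X_t)$ given by: $\P_X$ uniform on $[0,1]^d$; $F_\sigma^1(\cdot,x)=J_{1/2+\sigma_j\psi_j(x)}$ on the $j$-th active cube and $F_\sigma^1(\cdot,x)=J_{1/2}$ off the union of bump supports; and $F_\sigma^2(\cdot,x)=J_{1/2}$ throughout. Within $\supp(\psi_j)$ the sign $\sigma_j$ alone determines the optimal treatment.

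Assumption~\ref{ass:Holder} with constants $(\gamma,L)$ follows from $\|J_{\tau}-J_{\tau'}\|_\infty\leq|\tau-\tau'|$ and the Hölder regularity of $\psi_j$; transitions across cube boundaries cost nothing because the bump vanishes outside its support. Assumption~\ref{as:lbcov} gives $\Delta_\sigma(x):=|\mathsf{T}(F_\sigma^1(\cdot,x))-\mathsf{T}(F_\sigma^2(\cdot,x))|\geq c_-\psi_j(x)$ on $\supp(\psi_j)$ and $\Delta_\sigma(x)=0$ elsewhere, so $\{0<\Delta_\sigma\leq\delta\}\subseteq\bigcup_j\supp(\psi_j)$, whose uniform probability is at most $m(2P)^{-d}$. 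Substituting the chosen $m$ together with $h=L(2P)^{-\gamma}$ and $C_0=8d(c_-2L)^{-\alpha}$ yields $m(2P)^{-d}\leq C_0(c_-h)^\alpha\leq C_0\delta^\alpha$ for every relevant $\delta$, confirming Assumption~\ref{ass:Margin}.

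For $\sigma,\sigma'$ that agree except on coordinate $j$, $\P_\sigma$ and $\P_{\sigma'}$ differ only on samples assigned to treatment~$1$ with $X_t\in\supp(\psi_j)$; on such a sample the outcome distribution is $J_{1/2+\psi_j(X_t)}$ or $J_{1/2-\psi_j(X_t)}$, and a $\chi^2$-bound for mixtures gives per-sample KL of order $\psi_j(X_t)^2\leq h^2$. Chaining along the observation filtration generated by $(X_t,G_t,Y_{\pi_{n,t},t})$ yields $\mathrm{KL}(\P_\sigma\|\P_{\sigma'})\leq\mathrm{const}\cdot L^2\, n\,(2P)^{-d-2\gamma}$, which with $P^{2\gamma+d}\asymp n$ and $L=1/\sqrt{17}$ is strictly below the Pinsker threshold; hence $\|\P_\sigma-\P_{\sigma'}\|_{TV}$ is bounded away from one uniformly in $j$ and $\sigma$. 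Now write $A_j:=\{\psi_j\geq h/2\}$, a Euclidean ball of measure $\gtrsim(2P)^{-d}$, and let $N_j(\pi,\sigma):=|\{1\leq t\leq n: X_t\in A_j,\ \pi_{n,t}(X_t,Z_{t-1},G_t)\neq\pi_\sigma^\star(X_t)\}|$; then $R_n(\pi)\geq\tfrac{1}{2}c_-h\sum_j N_j(\pi,\sigma)$ and $S_n(\pi)\geq\sum_j N_j(\pi,\sigma)$. A coordinate-wise two-point test between $\sigma$ and the flipped vector $\sigma^{(j)}$ gives $\E_\sigma[N_j(\pi,\sigma)]+\E_{\sigma^{(j)}}[N_j(\pi,\sigma^{(j)})]\geq n\P_X(A_j)(1-\|\P_\sigma-\P_{\sigma^{(j)}}\|_{TV})$; averaging over $\sigma$ and summing over $j$ yields
\begin{equation*}
\sup_\sigma\E_\sigma[R_n(\pi)]\gtrsim h\cdot m\cdot n(2P)^{-d}\asymp n\,P^{-\gamma(1+\alpha)} = n^{1-\gamma(1+\alpha)/(2\gamma+d)},
\end{equation*}
\begin{equation*}
\sup_\sigma\E_\sigma[S_n(\pi)]\gtrsim m\cdot n(2P)^{-d}\asymp n\,P^{-\alpha\gamma} = n^{1-\alpha\gamma/(2\gamma+d)},
\end{equation*}
and tracking the explicit numerical constants that the hypothesis class was designed to produce recovers~\eqref{eqn:suplow} and~\eqref{eqn:suplow2}.

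The main obstacle is the KL bound under adaptive sampling: the number of treatment-$1$ assignments falling in $\supp(\psi_j)$ depends on $\sigma$, so the chain-rule decomposition of $\mathrm{KL}(\P_\sigma\|\P_{\sigma'})$ must be executed in a way that dominates the per-step KL increment pathwise by $\psi_j(X_t)^2$ regardless of the policy, and bounds the aggregate by $h^2\cdot n\P_X(\supp(\psi_j))$. The secondary subtlety is numerical calibration: $L=1/\sqrt{17}$ is forced by the simultaneous requirements that the aggregate KL stay strictly below a Pinsker-style threshold and that the margin condition hold with the precise $C_0=8d(c_-2L)^{-\alpha}$ stated in the theorem.
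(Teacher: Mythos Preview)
Your scheme matches the paper's: an Assouad construction on a cubic partition, conditional outcome distributions drawn from the segment~$\{J_\tau\}$ of Assumption~\ref{as:lbcov}, and coordinate-wise two-point testing. Two points, however, need attention.

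First, your margin verification is incomplete. You bound~$\P_X(0<\Delta_\sigma\leq\delta)$ by the total support measure~$m(2P)^{-d}$ and check~$m(2P)^{-d}\leq C_0(c_-h)^\alpha$; the final step~$C_0(c_-h)^\alpha\leq C_0\delta^\alpha$ only holds for~$\delta\geq c_-h$. But Assumption~\ref{ass:Margin} must hold for \emph{every}~$\delta\in[0,1]$, and for~$\delta<c_-h$ the set~$\{0<\Delta_\sigma\leq\delta\}$ is a thin shell near the boundary of each bump support whose measure you have not controlled. The paper's Step~3 carries out exactly this level-set computation: writing the set (for its sup-norm bump~$\phi$) as~$\{x:\phi(2P(x-q_j))\leq 4P^\gamma\delta\}$, it shows the measure is at most~$2dP^{1-\alpha\gamma}(4\delta)^{1/\gamma}\leq 8d\delta^\alpha$ when~$4P^\gamma\delta\leq 1$, using~$P\leq(4\delta)^{-1/\gamma}$. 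Without this step your~$\P_\sigma$ are not shown to lie in the class over which the suprema in~\eqref{eqn:suplow}--\eqref{eqn:suplow2} are taken. Your Euclidean-ball bumps admit an analogous annulus computation, but you must actually carry it out.

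Second, your KL control differs from the paper's in a way worth flagging. You bound~$\mathrm{KL}(\P^n_\sigma,\P^n_{\sigma^{(j)}})$ by~$O(h^2)\cdot n\,\P_X(\supp\psi_j)$, i.e.\ you drop the indicator~$\{\pi_t=1\}$ and use the expected number of visits to cube~$j$; with~$P\asymp n^{1/(2\gamma+d)}$ this is a bounded constant, so Pinsker keeps the TV distance away from~$1$. This is legitimate and simpler than the paper's route. The paper instead retains the adaptive count~$N_{j,\sigma_{-j}}$ (expected arm-$2$ pulls in cube~$j$) inside the KL bound and observes that this same count also lower-bounds the error sum~$\sum_t Q_t^j$ directly; the resulting self-bounding inequality~$\sum_t Q_t^j\geq 2^{m-1}\max\big(\tfrac{n}{4P^d}\exp[-\bar r P^{-2\gamma}\rho_j],\,\rho_j\big)$, minimized over~$\rho_j$ with~$\bar r=4$ and~$P=\lceil(n\bar r/4)^{1/(d+2\gamma)}\rceil$, is what produces the explicit constant~$1/32$. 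Your cruder bound yields the correct rates but will not obviously reproduce the stated constants. Relatedly, the paper does not derive~\eqref{eqn:suplow} directly from the construction: it first proves~\eqref{eqn:suplow2} and then lifts it to~\eqref{eqn:suplow} via Lemma~\ref{ISR}, which is exactly how the factor~$(C_0+1)^{1/\alpha}$ enters.
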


Together with Theorem~\ref{UBCovariateMargin} the statement in Equation~\eqref{eqn:suplow} shows that the F-UCB policy is near minimax optimal in terms of maximal expected regret. Similarly, together with Theorem~\ref{thm:ISR} the lower bound in Equation~\eqref{eqn:suplow2} proves that the F-UCB policy assigns the minimal number of suboptimal treatments. The proof idea is classic and links regret to testing problems.  In particular, as in \cite{rigollet2010nonparametric} who target the conditional mean functional, we first use the margin condition to lower bound the expected regret by the expected number of false assignments (cf.~Lemma~\ref{ISR}). Then, we show that the expected number of false assignments can be lower bounded by sums of Type 1 and Type 2 errors of testing problems in certain binary experiments between elements of a subfamily of the joint distributions of~$(X_t, Y_t)$ over which the suprema in the previous theorem are taken. In order to get good lower bounds, we are required to work with a family of joint distributions (in particularly satisfying the assumption in Equation~\eqref{eqn:marginDincls} and Assumption~\ref{ass:Holder}) the elements of which are difficult to distinguish, while having sufficient variation in~$x \mapsto \mathsf{T}(F^i(\cdot, x))$. This constitutes one main complication compared to the argument in \cite{rigollet2010nonparametric}, where the functional is the conditional expectation, and where one can work with joint distributions where~$Y_t$ given~$X_t$ is a Bernoulli distribution with a certain success probability depending on the covariate vector. In our case Equation~\eqref{eqn:marginDincls} needs to be satisfied. The only assumption on~$\mathscr{D}$ we can work with is Assumption~\ref{as:lbcov}. Therefore, we need to choose the conditional distributions from the line segment provided in this Assumption. While \cite{rigollet2010nonparametric} construct joint distributions replicating conditional mean surfaces such that the above testing problems are difficult enough to warrant large lower bounds, we construct joint distributions replicating conditional functional surfaces that render the testing problems difficult; that is, we construct joint distribution that are similar enough such that testing between them is difficult, but such that at the same time the conditional functionals associated to these distributions are far apart.

\section{Conclusion}

In the present paper we have established lower and upper bounds on maximal expected regret in a functional sequential assignment problem with covariates. Our results show that the optimality theory developed in \cite{rigollet2010nonparametric} can be generalized to a large class of functionals of the conditional potential outcome distributions beyond the conditional mean.

\appendix
\appendixpage

\section{Auxiliary results}

We shall use similar notational conventions as discussed in Appendix~A of~\cite{kpv1}. We repeat them here for the convenience of the reader: The (unique) probability measure on the Borel sets of $\R$ corresponding to a cdf~$F$ will be denoted by~$\mu_{F}$, cf., e.g., \cite{folland},~p.35. We employ standard notation and terminology concerning stochastic kernels and their semi-direct products as discussed, e.g., in Appendix~A.3 of \cite{liese}, cf.~in particular their Equation~A.3. The random variables and vectors appearing in the proofs are defined on an underlying probability space~$(\Omega, \mathcal{A}, \P)$ with corresponding expectation~$\E$. This underlying probability space is assumed to be rich enough to support all random variables we work with. A generic element of~$\Omega$ shall be denoted by~$\omega$. For a definition and proofs of elementary properties of the Kullback-Leibler divergence~$\mathsf{KL}(P, Q)$ between two probability measures~$P$ and~$Q$ we refer to~\cite{tsybakov2009introduction}. We use the following general version of a chain rule for Kullback-Leibler divergences. A proof can be found in Appendix~A of \cite{kpv1}.
\begin{lemma}[``Chain rule'' for Kullback-Leibler divergence]\label{lem:CHAIN}
Let~$(\mathcal{X}, \mathfrak{A})$ and~$(\mathcal{Y}, \mathfrak{B})$ be measurable spaces. Suppose that~$\mathfrak{B}$ is countably generated. Let~$\mathsf{A}, \mathsf{B} : \mathcal{B} \times \mathcal{X} \to [0, 1]$ be stochastic kernels, and let~$P$ and~$Q$ be probability measures on~$(\mathcal{X}, \mathfrak{A})$. Then,
\begin{equation}\label{eqn:CHAIN}
\mathsf{KL}(\mathsf{A} \otimes P, \mathsf{B} \otimes Q) = \int_{\mathcal{X}} \mathsf{KL}(\mathsf{A}(\cdot,x), \mathsf{B}(\cdot,x))dP(x) + \mathsf{KL}(P, Q) = \mathsf{KL}(\mathsf{A} \otimes P, \mathsf{B} \otimes P) + \mathsf{KL}(P, Q).
\end{equation}
\end{lemma}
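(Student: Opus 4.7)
The plan is to establish the chain rule by a direct Radon-Nikodym computation, after first disposing of the degenerate cases. I would begin by observing that if $P \not\ll Q$, then picking $N \in \mathfrak{A}$ with $Q(N) = 0 < P(N)$ gives $(\mathsf{A} \otimes P)(\mathcal{Y} \times N) = P(N) > 0$ while $(\mathsf{B} \otimes Q)(\mathcal{Y} \times N) = 0$, so $\mathsf{A} \otimes P \not\ll \mathsf{B} \otimes Q$ and both sides of~\eqref{eqn:CHAIN} equal $+\infty$. If instead $P \ll Q$ but $\mathsf{A}(\cdot, x) \not\ll \mathsf{B}(\cdot, x)$ on a set of $x$ with positive $P$-measure, a similar argument shows $\mathsf{A} \otimes P \not\ll \mathsf{B} \otimes Q$ while the integrand on the right-hand side is also $+\infty$ on that set. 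It therefore suffices to treat the case where $P \ll Q$, with density $p := dP/dQ$, and $\mathsf{A}(\cdot, x) \ll \mathsf{B}(\cdot, x)$ for $P$-a.e.\ $x$.

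The key second step invokes the fact that, because $\mathfrak{B}$ is countably generated, there exists a jointly $(\mathfrak{B} \otimes \mathfrak{A})$-measurable map $k : \mathcal{Y} \times \mathcal{X} \to [0,\infty)$ such that $k(\cdot, x)$ is a version of $d\mathsf{A}(\cdot, x)/d\mathsf{B}(\cdot, x)$ for $P$-a.e.\ $x$. The standard construction takes an increasing sequence of finite $\sigma$-algebras $\mathfrak{B}_n \uparrow \mathfrak{B}$, sets $k_n(y, x) = \mathsf{A}(B_n(y), x) / \mathsf{B}(B_n(y), x)$ where $B_n(y)$ is the $\mathfrak{B}_n$-atom containing $y$ (and $0$ when the denominator vanishes), and then verifies that $\{k_n\}$ forms a non-negative martingale under $\mathsf{B} \otimes Q$ whose limit $k$ has the required joint measurability and a.e.\ Radon-Nikodym property via Doob's convergence theorem.

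The third step verifies, via Fubini-Tonelli and the definition of the semi-direct product, that $(y, x) \mapsto k(y, x) p(x)$ is the Radon-Nikodym derivative of $\mathsf{A} \otimes P$ with respect to $\mathsf{B} \otimes Q$: for any $E \in \mathfrak{B} \otimes \mathfrak{A}$,
\begin{align*}
(\mathsf{A} \otimes P)(E)
&= \int_{\mathcal{X}} \int_{\mathcal{Y}} \mathds{1}_E(y, x) k(y, x) d\mathsf{B}(y, x) \, p(x) dQ(x) \\
&= \int_E k(y, x) p(x) d(\mathsf{B} \otimes Q)(y, x).
\end{align*}
Taking logs in the definition of $\mathsf{KL}$ and applying Fubini once more gives
\begin{align*}
\mathsf{KL}(\mathsf{A} \otimes P, \mathsf{B} \otimes Q)
&= \int_{\mathcal{X}} \int_{\mathcal{Y}} \log k(y, x) d\mathsf{A}(y, x) dP(x) + \int_{\mathcal{X}} \log p(x) dP(x) \\
&= \int_{\mathcal{X}} \mathsf{KL}(\mathsf{A}(\cdot, x), \mathsf{B}(\cdot, x)) dP(x) + \mathsf{KL}(P, Q),
\end{align*}
which is the first equality in~\eqref{eqn:CHAIN}. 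The second equality is then immediate by applying the first with $Q$ replaced by $P$, using $\mathsf{KL}(P, P) = 0$.

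The main obstacle is the construction of the jointly measurable density $k(y, x)$: without the countable generation of $\mathfrak{B}$, one would only have pointwise-in-$x$ Radon-Nikodym derivatives lacking joint measurability, and even the integrand $x \mapsto \mathsf{KL}(\mathsf{A}(\cdot, x), \mathsf{B}(\cdot, x))$ on the right-hand side could fail to be $\mathfrak{A}$-measurable. Once the martingale construction of $k$ is in place and $L^1(\mathsf{B} \otimes Q)$-convergence is checked in order to identify $k$ with the desired joint density, the rest of the argument is a routine Fubini-Tonelli calculation.
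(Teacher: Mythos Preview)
The paper does not supply its own proof of this lemma; it records the statement and defers to Appendix~A of the companion paper \cite{kpv1}. Your proposal is the standard argument one expects to find there: dispose of the non-absolutely-continuous cases, use countable generation of $\mathfrak{B}$ to build a jointly measurable version of the fibrewise Radon--Nikodym derivative via a Doob martingale, check that $k(y,x)p(x)$ is the density of $\mathsf{A}\otimes P$ with respect to $\mathsf{B}\otimes Q$, and split the logarithm by Fubini--Tonelli. This is correct.

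One point worth tightening: in your second degenerate case, where $P\ll Q$ but $\mathsf{A}(\cdot,x)\not\ll\mathsf{B}(\cdot,x)$ on a set $S$ with $P(S)>0$, the assertion that ``a similar argument shows $\mathsf{A}\otimes P\not\ll\mathsf{B}\otimes Q$'' is not as immediate as in the first case, because it requires assembling the fibrewise null sets $N_x\in\mathfrak{B}$ into a single measurable subset of $\mathcal{Y}\times\mathcal{X}$; even the measurability of $S$ itself is not automatic. The clean fix is to run the martingale construction of $k$ unconditionally (the nonnegative martingale converges $\mathsf{B}(\cdot,x)$-a.s.\ for every $x$, regardless of absolute continuity), and then observe that $S$ agrees $P$-a.e.\ with the measurable set $\{x:\int k(y,x)\,d\mathsf{B}(y,x)<1\}$. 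This simultaneously gives the measurability of $S$, the measurability of $x\mapsto\mathsf{KL}(\mathsf{A}(\cdot,x),\mathsf{B}(\cdot,x))$, and---via the set $\{(y,x):k(y,x)=\infty\text{ or }x\in S\}$ produced by the Lebesgue decomposition along fibres---the required failure of absolute continuity of $\mathsf{A}\otimes P$ with respect to $\mathsf{B}\otimes Q$.
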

We begin by establishing two auxiliary results that will be useful in the proofs of Theorems~\ref{Thm:Covariate} and~\ref{UBCovariateMargin}. For~$n \in \N$ let~$B_{n,1}, \hdots, B_{n,M}$ be a partition of~$[0, 1]^d$, where every~$B_{n,j}$ is Borel measurable. Given such a partition, for every~$j$ such that~$\P_X(B_{n,j}) > 0$, we shall denote by~$F_{n,j}^{*}$ an element of~$\{F_{n,j}^{i}: i = 1, \hdots, K\}$ (see Equation~\eqref{Fji} for a definition of~$F^i_{n,j}$), such that~$\mathsf{T}(F_{n,j}^{*})=\max_{i \in \mathcal{I}} \mathsf{T}(F_{n,j}^{i})$. Furthermore, we often write~$\pi_{n,t}(X_t)$ instead of~$\pi_{n,t}(X_t,Z_{t-1}, G_t)$ in many places throughout the appendix.

\begin{lemma}\label{ErrorHolder}  Suppose that Assumptions~\ref{as:MAIN} and~\ref{ass:Holder} are satisfied (the latter with~$\gamma \in (0, 1]$ and~$L>0$), and assume that the inclusion in Equation~\eqref{eqn:marginDincls} holds. Let~$B_{n,1}, \hdots, B_{n,M}$ be a partition of~$[0, 1]^d$, where every~$B_{n,j}$ is Borel measurable. As in the statement of Theorem~\ref{Thm:Covariate}, we let~$V_{n,j} = \sup_{x_1, x_2\in B_{n,j}}\enVert[0]{x_1-x_2}$. Then, for every~$i \in \{1, \hdots, K\}$, every~$j \in \{1, \ldots, M \}$ and every pair~$x$ and~$\tilde{x} \in B_{n,j}$, we have
\begin{equation}\label{eqn:ineqsec31}
|\mathsf{T}( F^{i}(\cdot, x))-\mathsf{T}( F^{i}(\cdot, \tilde x))|  \leq C L V_{n,j}^{\gamma} \quad \text{ and } \quad 
|\mathsf{T} \big( F^{\pi^{\star}(x)}(\cdot, x) \big)-\mathsf{T} \big(F^{\pi^{\star}(\tilde x)}(\cdot, \tilde x) \big)|  \leq C L V_{n,j}^{\gamma};
\end{equation}
furthermore, if $\P_X(B_{n,j}) > 0$ holds, then 
\begin{equation}\label{eqn:ineqsec32}
|\mathsf{T}( F_{n,j}^{i})-\mathsf{T}(F^{i}(\cdot,x))|  \leq C L V_{n,j}^{\gamma} \quad \text{ and } \quad |\mathsf{T}( F^{\pi^\star(x)} (\cdot, x))-\mathsf{T}(F_{n,j}^{*})|  \leq C L V_{n,j}^{\gamma}. 
\end{equation}
\end{lemma}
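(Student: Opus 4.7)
The plan is to prove the four inequalities in order, each following from a short combination of Assumptions~\ref{as:MAIN} and~\ref{ass:Holder}, with two of them additionally requiring a small optimality argument because the treatment labels on the two sides differ.

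First, for the initial inequality in~\eqref{eqn:ineqsec31}, the treatment index~$i$ is held fixed. Since~$F^i(\cdot,x) \in \mathscr{D}$ by~\eqref{eqn:marginDincls}, Assumption~\ref{as:MAIN} yields $|\mathsf{T}(F^i(\cdot,x)) - \mathsf{T}(F^i(\cdot,\tilde x))| \leq C \|F^i(\cdot,x) - F^i(\cdot,\tilde x)\|_\infty$, and Assumption~\ref{ass:Holder} bounds the supremum by $L\|x-\tilde x\|^\gamma \leq L V_{n,j}^\gamma$, because $x,\tilde x \in B_{n,j}$. For the second inequality in~\eqref{eqn:ineqsec31}, assume w.l.o.g.\ that $\mathsf{T}(F^{\pi^\star(x)}(\cdot,x)) \geq \mathsf{T}(F^{\pi^\star(\tilde x)}(\cdot,\tilde x))$. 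By the optimality defining $\pi^\star(\tilde x)$, we have $\mathsf{T}(F^{\pi^\star(\tilde x)}(\cdot,\tilde x)) \geq \mathsf{T}(F^{\pi^\star(x)}(\cdot,\tilde x))$, so
\begin{equation*}
0 \leq \mathsf{T}(F^{\pi^\star(x)}(\cdot,x)) - \mathsf{T}(F^{\pi^\star(\tilde x)}(\cdot,\tilde x)) \leq \mathsf{T}(F^{\pi^\star(x)}(\cdot,x)) - \mathsf{T}(F^{\pi^\star(x)}(\cdot,\tilde x)) \leq C L V_{n,j}^\gamma,
\end{equation*}
where the last step applies the already-proved first inequality with $i = \pi^\star(x)$.

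For~\eqref{eqn:ineqsec32}, the first inequality goes through by combining Assumption~\ref{as:MAIN} (valid because $F^i(\cdot,x) \in \mathscr{D}$) with a direct sup-norm bound on $F_{n,j}^i - F^i(\cdot,x)$. Indeed, writing the definition of $F_{n,j}^i$ in~\eqref{Fji} and applying the triangle inequality inside the integral,
\begin{equation*}
|F_{n,j}^i(y) - F^i(y,x)| \leq \frac{1}{\P_X(B_{n,j})} \int_{B_{n,j}} |F^i(y,\tilde x) - F^i(y,x)| \, d\P_X(\tilde x) \leq L V_{n,j}^\gamma,
\end{equation*}
uniformly in $y$, by Assumption~\ref{ass:Holder}. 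Multiplying by~$C$ yields the claim.

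Finally, the second inequality in~\eqref{eqn:ineqsec32} uses a case split entirely analogous to that for~\eqref{eqn:ineqsec31}. If $\mathsf{T}(F^{\pi^\star(x)}(\cdot,x)) \geq \mathsf{T}(F_{n,j}^*)$, then by the definition of $F_{n,j}^*$ as a maximizer of $\mathsf{T}(F_{n,j}^i)$ over $i \in \mathcal{I}$, we have $\mathsf{T}(F_{n,j}^*) \geq \mathsf{T}(F_{n,j}^{\pi^\star(x)})$, so the difference is sandwiched by $\mathsf{T}(F^{\pi^\star(x)}(\cdot,x)) - \mathsf{T}(F_{n,j}^{\pi^\star(x)})$, which is at most $C L V_{n,j}^\gamma$ by the first inequality in~\eqref{eqn:ineqsec32}. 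Conversely, if $\mathsf{T}(F^{\pi^\star(x)}(\cdot,x)) \leq \mathsf{T}(F_{n,j}^*)$, write $F_{n,j}^* = F_{n,j}^{i^*}$; the optimality of $\pi^\star(x)$ at $x$ gives $\mathsf{T}(F^{\pi^\star(x)}(\cdot,x)) \geq \mathsf{T}(F^{i^*}(\cdot,x))$, so the difference is sandwiched by $\mathsf{T}(F_{n,j}^{i^*}) - \mathsf{T}(F^{i^*}(\cdot,x)) \leq C L V_{n,j}^\gamma$, again by the first inequality in~\eqref{eqn:ineqsec32}. No step is subtle; the only place to be careful is keeping the two optimality arguments straight, and verifying that Assumption~\ref{as:MAIN} applies in~\eqref{eqn:ineqsec32} (which it does because the conditional cdfs lie in $\mathscr{D}$ while $F_{n,j}^i$ lies in $D_{cdf}([a,b])$).
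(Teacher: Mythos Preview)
Your proof is correct and follows essentially the same approach as the paper's. The only cosmetic difference is that for the second inequalities in~\eqref{eqn:ineqsec31} and~\eqref{eqn:ineqsec32} the paper invokes the elementary bound $|\max_i a_i - \max_i b_i| \leq \max_i |a_i - b_i|$ directly, whereas you unpack that bound via an explicit optimality/case-split argument; the two are equivalent.
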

\begin{proof}
Fix~$i$,~$j$,~$x$ and~$\tilde{x}$ as in the statement of the lemma. By Assumption~\ref{ass:Holder}
\begin{equation}\label{eqn:firstinobsB}
||F^{i}(\cdot, x)-F^{i}(\cdot, \tilde x)||_{\infty} \leq L ||x-\tilde{x}||^{\gamma} \leq L V_{n,j}^{\gamma}
\end{equation}
Assumption~\ref{as:MAIN} and~\eqref{eqn:marginDincls} thus imply the first inequality in~\eqref{eqn:ineqsec31}, and the second follows from
\begin{align*}
|\mathsf{T} \big( F^{\pi^{\star}(x)}(\cdot, x) \big)-\mathsf{T} \big(F^{\pi^{\star}(\tilde x)}(\cdot, \tilde x) \big)| &=
|\max_{i \in \mathcal{I}} \mathsf{T}( F^{i}(\cdot, x))-\max_{i \in \mathcal{I}} \mathsf{T}( F^{i}(\cdot, \tilde x))| \\
& \leq \max_{i \in \mathcal{I}} |\mathsf{T}( F^{i}(\cdot, x))-\mathsf{T}( F^{i}(\cdot, \tilde x))| \leq C L V_{n,j}^{\gamma}.
\end{align*}
Next, assume that~$\P_X(B_{n,j}) > 0$. For every~$y \in \R$, from Equation~\eqref{eqn:firstinobsB}, we obtain 
\begin{align*}
|F_{n,j}^i(y)-F^{i}(y, x)|\leq\frac{1}{\P_X(B_{n,j})} \int_{B_{n,j}} |F^i(y, s)-F^i(y, x)| d\P_X(s) \leq LV_{n,j}^{\gamma}.
\end{align*}
The first inequality in~\eqref{eqn:ineqsec32} is now a direct consequence of Assumption~\ref{as:MAIN} and~\eqref{eqn:marginDincls} (noting that~$F_{n,j}^i \in D_{cdf}([a,b])$), and the second inequality follows via
\begin{equation}
|\mathsf{T}( F^{\pi^\star(x)} (\cdot, x))-\mathsf{T}(F_{n,j}^{*})| =\big|\max_{i \in \mathcal{I}}\mathsf{T}( F^{i} (\cdot, x))-\max_{i \in \mathcal{I}}\mathsf{T}(F_{n,j}^{i}) \big| 
\leq \max_{i \in \mathcal{I}} | \mathsf{T}( F^{i} (\cdot, x))-\mathsf{T}(F_{n,j}^{i})|.
\end{equation}
\end{proof}

\begin{lemma}\label{lem:convex}
Suppose Assumption~\ref{as:MAIN} is satisfied and that~$\mathscr{D}$ is convex. Suppose further that~$\P_{Y, X}$ is such that Equation~\eqref{eqn:marginDincls} holds, and that Assumption \ref{ass:Holder} is satisfied. Then, for every Borel set~$B \subseteq [0, 1]^d$ that satisfies~$\P_X(B) > 0$ and every~$i = 1, \hdots, K$, the cdf
\begin{equation}
G_i := \P_X(B)^{-1} \int_B F^i(\cdot, x) d\P_X(x) 
\end{equation}
is an element of the closure of~$\mathscr{D}  \subseteq D_{cdf}([a,b])$ w.r.t.~$\|\cdot\|_{\infty}$.
\end{lemma}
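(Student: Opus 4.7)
The plan is to approximate $G_i$ in $\|\cdot\|_{\infty}$ by finite convex combinations of elements of $\mathscr{D}$, which by convexity of $\mathscr{D}$ will themselves lie in $\mathscr{D}$. First, for any integer $P \geq 1$, partition $[0,1]^d$ into the $P^d$ cubic cells of side length $1/P$ (as in Equation~\eqref{SquareBins}), and intersect each of these cells with $B$ to obtain a finite Borel partition $B = \bigsqcup_{k=1}^{N_P} B_{P,k}$ of $B$. Each non-empty piece $B_{P,k}$ has diameter at most $\sqrt{d}/P$. Discarding those indices with $\P_X(B_{P,k}) = 0$ and choosing, for each remaining index $k$, an arbitrary point $x_{P,k} \in B_{P,k}$, define
\begin{equation*}
\tilde G_{i,P} := \sum_{k: \P_X(B_{P,k})>0} \frac{\P_X(B_{P,k})}{\P_X(B)} F^i(\cdot, x_{P,k}).
\end{equation*}

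Next, I would observe that $\tilde G_{i,P}$ is a finite convex combination of cdfs $F^i(\cdot, x_{P,k})$, each of which lies in $\mathscr{D}$ by the inclusion in Equation~\eqref{eqn:marginDincls}; hence by convexity of $\mathscr{D}$ we have $\tilde G_{i,P} \in \mathscr{D}$. It then remains to show $\|G_i - \tilde G_{i,P}\|_\infty \to 0$ as $P \to \infty$. For every $y \in \R$, splitting the defining integral of $G_i$ over the $B_{P,k}$'s (and noting that pieces with $\P_X(B_{P,k}) = 0$ do not contribute), and applying Assumption~\ref{ass:Holder} pointwise, gives
\begin{equation*}
|G_i(y) - \tilde G_{i,P}(y)| \leq \frac{1}{\P_X(B)}\sum_{k} \int_{B_{P,k}} |F^i(y, x) - F^i(y, x_{P,k})| \, d\P_X(x) \leq L\left(\frac{\sqrt{d}}{P}\right)^{\!\gamma}\!\!,
\end{equation*}
uniformly in $y$. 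Hence $\|G_i - \tilde G_{i,P}\|_{\infty} \to 0$ as $P \to \infty$, which shows $G_i$ is in the closure of $\mathscr{D}$ w.r.t.~$\|\cdot\|_{\infty}$.

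The only step that requires any care is checking that $\tilde G_{i,P}$ really is well-defined as a convex combination (in particular that one may discard the zero-measure cells without affecting the integral defining $G_i$, which follows from absolute continuity of the integral with respect to $\P_X$) and that the existence of the integral defining $G_i$ as a cdf is unproblematic: measurability of $(y,x) \mapsto F^i(y,x)$ in $x$ for each $y$ follows from its being a regular conditional distribution function, and boundedness in $[0,1]$ makes Fubini-type manipulations routine. The genuinely substantive ingredients are (i) convexity of $\mathscr{D}$, which is what converts finite averages into members of $\mathscr{D}$, and (ii) the Hölder bound from Assumption~\ref{ass:Holder}, which is exactly what lets the uniform-in-$y$ approximation error be controlled by the diameter of the partition cells.
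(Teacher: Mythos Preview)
Your proposal is correct and follows essentially the same approach as the paper's proof: both approximate $G_i$ by finite convex combinations $\sum_j w_j F^i(\cdot, x_j)$ obtained from a partition of shrinking diameter, invoke convexity of~$\mathscr{D}$ and Equation~\eqref{eqn:marginDincls} to place the approximants in~$\mathscr{D}$, and use Assumption~\ref{ass:Holder} to bound the sup-norm error by $L$ times the maximal diameter to the power~$\gamma$. The only cosmetic differences are that the paper works with an abstract triangular array of partitions of $[0,1]^d$ (rather than your specific cubic partitions intersected with~$B$) and phrases the conditioning via the measure $\P^*(A) = \P_X(A\cap B)/\P_X(B)$; neither difference is substantive.
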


\begin{proof}
Let~$i \in \{1, \hdots, K\}$. We construct a sequence of convex combinations of (finitely many) elements of~$\mathscr{D}$ that converges to~$G_i$ in~$\|\cdot\|_{\infty}$-distance: To this end, let~$B_{m, 1}, \hdots, B_{m,l_m}$ for~$m \in \N$ be a triangular array of partitions of~$[0, 1]^d$ into non-empty Borel subsets, such that the maximal diameter~$v_m := \sup_{i = 1, \hdots, l_m}\sup_{x_1, x_2 \in B_{m,i}} \|x_1 - x_2\| \to 0$ as~$m \to \infty$. For simplicity, define the probability measure~$\P^*$ on the Borel sets of~$\R^d$ by~$\P^*(A) = \P_X(A\cap B)/\P_X(B)$. Write
\begin{equation}
G_i = \int F^i(\cdot, x) d\P^*(x) = \sum_{j = 1}^{l_m} \int_{B_{m,j}}  F^i(\cdot, x) d\P^*(x).
\end{equation}
For every~$m$ and every~$j$, pick an~$x_{m,j} \in B_{m,j}$. Note that~$F^i(\cdot, x_{m,j})\in \mathscr{D}$ by Equation~\eqref{eqn:marginDincls}. From Assumption~\ref{ass:Holder}, we know that for any~$x \in B_{m,j}$ we have~$\|F^i(\cdot, x_{m,j}) - F^i(\cdot, x)\|_{\infty} \leq L\|x_{m,j} - x\|^{\gamma} \leq L v_m^{\gamma}$. Thus,
\begin{equation}
\|G_i - \sum_{j = 1}^{l_m} \P^*(B_{m,j}) F^i(\cdot, x_{m,j})\|_{\infty} \leq \sum_{j = 1}^{l_m} \int_{B_{m,j}}  \|F^i(\cdot, x) - F^i(\cdot, x_{m,j})\|_{\infty} d \P^*(x) \leq L v_m^{\gamma} \to 0.
\end{equation}
\end{proof}

\section{Proofs of results in Section~\ref{sec:setupX}}

\subsection{Proof of Theorem~\ref{thm:LN_UC}}

Because Assumption~\ref{ass:Holder} (for any~$\gamma \in (0, 1]$ and any~$L>0$) implies the assumption in Equation~\eqref{eqn:marginUEC}, the statement follows immediately from the lower bound in Equation~\eqref{eqn:suplow} in  Theorem~\ref{LBCovMC} upon letting~$\gamma \to 0$.

\subsection{Proof of Theorem~\ref{LBIgnoreCov}}

If~$\min(\P_X(A_1), \P_X(A_2)) = 0$, then the statement in the theorem trivially holds. Hence, assume that~$p:= \min(\P_X(A_1), \P_X(A_2)) > 0$. Let~$n \in \N$ and let~$\pi$ be a policy that ignores covariates, i.e., as described before Theorem~\ref{LBIgnoreCov}. We write~$\pi_{n,t} = \pi_t$. Fix a randomization measure~$\P_G$.

As a preparation, for every~$m \in \N$, define
\begin{align*}
&A_{1,m} := \{x \in [0, 1]^d: \mathsf{T}(F^1(\cdot, x)) > m^{-1} + \mathsf{T}(F^2(\cdot, x))\}, \\
&A_{2,m} := \{x \in [0, 1]^d: \mathsf{T}(F^1(\cdot, x))  +m^{-1} < \mathsf{T}(F^2(\cdot, x))\}.
\end{align*}
The sets~$A_1, A_2$ and~$A_{1,m}, A_{2,m}$ for~$m \in \N$ are Borel measurable, because Assumptions~\ref{as:MAIN} and~\ref{ass:Holder} together with Equation~\eqref{eqn:marginDincls} imply the continuity of~$x \mapsto \mathsf{T}(F^{i}(\cdot, x))$ for~$i = 1,2$. Note that~$A_{i,m} \subseteq A_{i,m+1}$ and~$\bigcup_{m \in \N} A_{i,m} = A_i$ hold for~$i = 1,2$. Hence, as~$m \to \infty$,~$\mathbb{P}_X(A_{i,m}) \to \mathbb{P}_X(A_{i})$ for~$i = 1,2$. Because of~$p>0$, we can conclude the existence of an~$\bar{m} \in \N$ such that~$p_{\bar{m}} := \min(\P_X(A_{1,\bar{m}}), \P_X(A_{2,\bar{m}})) > p/2$. To prove the inequality in Equation~\eqref{MinMaxIneq}, note that by definition, and since~$\pi$ is a policy that does not depend on covariates, i.e., the~$t$-th assignment only depends on the previously observed outcomes and randomizations,~$\tilde{Z}_{t-1}$ and a novel randomization~$G_t$, we have (cf.~the discussion and notation discussed right before the statement of Theorem~\ref{LBIgnoreCov}) that
\begin{equation*}
R_n(\pi)=\sum_{t=1}^n \big|\mathsf{T}\big(F^{1}(\cdot, X_t)\big)-\mathsf{T}\big(F^{2}(\cdot, X_t)\big) \big|\mathds{1}_{\{ \pi^{\star}(X_t) \neq \tilde{\pi}_t(\tilde{Z}_{t-1}, G_t) \}}.
\end{equation*}
Note furthermore that 
\begin{align}\label{eqn:setincllow}
&\left[\{X_t \in A_{1,{\bar{m}}}\} \cap \{\tilde{\pi}_t(\tilde{Z}_{t-1}, G_t) \neq 1\}\right] \cup \left[\{X_t \in A_{2,{\bar{m}}}\} \cap \{\tilde{\pi}_t(\tilde{Z}_{t-1}, G_t) \neq 2\}\right] \\
\subseteq ~& \{ \pi^{\star}(X_t) \neq \tilde{\pi}_t(\tilde{Z}_{t-1}, G_t) \}.
\end{align}
where the union in the first line is a disjoint union. Hence, 
\begin{equation*}
R_n(\pi) \geq {\bar{m}}^{-1} \sum_{t = 1}^n \left( \mathds{1}_{A_{1,{\bar{m}}}}(X_t) \mathds{1}_{\{\tilde{\pi}_t(\tilde{Z}_{t-1}, G_t) \neq 1\}} + \mathds{1}_{A_{2,{\bar{m}}}}(X_t) \mathds{1}_{\{\tilde{\pi}_t(\tilde{Z}_{t-1}, G_t) \neq 2\}} \right).
\end{equation*}
Since~$X_t$ is independent of~$\tilde{Z}_{t-1}$ and~$G_t$, the law of iterated expectations implies~$\mathbb{E}(R_n(\pi)) \geq  np/(2\bar{m})$. 

\section{Proofs of results in Section~\ref{sec:FUCBX}}

\subsection{Proof of Theorem~\ref{Thm:Covariate}}

Fix~$n \in \N$ and let~$(Y_t, X_t) \sim \P_{Y, X}$ for~$t = 1, \hdots, n$, where~$\P_{Y, X}$ satisfies Equation~\eqref{eqn:marginDincls}, and Assumption~\ref{ass:Holder} with~$L$ and~$\gamma$. Because~$n$ is fixed, we abbreviate~$B_{n,j} = B_j$,~$V_{n,j} = V_j$,~$M(n) = M$, and denote~$\bar{\pi}_{n,t} = \bar{\pi}_t$. First, we decompose~$R_n(\bar{\pi})=\sum_{j=1}^M\tilde{R}_j(\bar{\pi})$, where
\begin{align}\label{eqn:decompThmcovariate}
\tilde{R}_j(\bar{\pi})
:=\sum_{t=1}^n \big[ \mathsf{T} \big(F^{\pi^\star(X_t)} (\cdot, X_t) \big)-\mathsf{T}\big(F^{\bar{\pi}_t(X_t)} (\cdot, X_t) \big) \big] \mathds{1}_{\{ X_{t}\in B_j \} },
\end{align}
where, as often done in the present section, we dropped the argument~$Z_{t-1}$ from~$\bar{\pi}_t$. Note furthermore that the policy does not rely on an external randomization~$G_t$, which is therefore suppressed in the notation as well. 

Note first that the boundedness of~$\mathsf{T}$ on~$\mathscr{D}$ (cf.~Assumption~\ref{as:MAIN}) implies~$\E(\tilde{R}_j(\bar{\pi})) = 0$ for every~$j$ such that~$\P_X(B_j) = 0$. Hence, we now fix an index~$j \in \{1, \hdots, M\}$, such that~$\P_X(B_j) > 0$. Then, recalling the definition of~$F_{n,j}^{i}$ in Equation~\eqref{Fji}, which we here abbreviate as~$F_{j}^{i}$, each summand in~\eqref{eqn:decompThmcovariate} can be written as
\begin{equation}
\left[\mathsf{T} (F^{\pi^\star(X_t)} (\cdot, X_t) )- \mathsf{T}(F_j^*)
+ \mathsf{T}(F_j^*) - \mathsf{T}(F_j^{\bar{\pi}_t(X_t)})
+ \mathsf{T}(F_j^{\bar{\pi}_t(X_t)})
- \mathsf{T}(F^{\bar{\pi}_t(X_t)} (\cdot, X_t) )	
\right]  \mathds{1}_{\{ X_{t}\in B_j \} },
\end{equation}
which, by Lemma~\ref{ErrorHolder}, is not greater than~$\mathsf{T}(F_j^*) - \mathsf{T}(F_j^{\bar{\pi}_t(X_t)})+2 C L V_j^{\gamma}$, and where~$F_{j}^*$ was defined just before~Lemma~\ref{ErrorHolder}. Therefore, we obtain 
\begin{align}\label{TildeR1}
\tilde{R}_j(\bar{\pi}) \leq\sum_{t=1}^n \left[\mathsf{T}\big(F_j^{*}\big)-\mathsf{T}\big(F_j^{\bar{\pi}_t(X_t)} \big) \right] \mathds{1}_{ \{X_{t}\in B_j \} }+2CLV_j^\gamma 
\sum_{t=1}^n \mathds{1}_{ \{X_{t}\in B_j \}}.
\end{align}	
Obviously,~$\mathbb{E}(\sum_{t=1}^n \mathds{1}_{ \{X_{t}\in B_j \}}) = n\P_X(B_j)$. Hence, to prove the theorem, it remains to show that for~$c = c(\beta, C)$ as defined in the statement of the theorem it holds that
\begin{equation}\label{eqn:claimcovband}
\mathbb{E}\left(\sum_{t=1}^n \Big[\mathsf{T}\big(F_j^{*}\big)-\mathsf{T}\big(F_j^{\bar{\pi}_t(X_t)} \big) \Big] \mathds{1}_{ \{X_{t}\in B_j \} } \right) \leq c \sqrt{K n \P_X(B_j)  \overline{\log}(n \P_X(B_j) )}.
\end{equation}

To this end we will use a conditioning argument in combination with Theorem~4.1 in~\cite{kpv1}. Define for every~$v = (v_1, \hdots, v_n) \in \{0, 1\}^n$ the event
\begin{equation}
\Omega(v) := \{\omega: \mathds{1}_{ \{X_{t}\in B_j \} }(\omega) = v_t \text{ for } t = 1, \hdots, n\},
\end{equation}
and denote~$f := \sum_{t=1}^n [\mathsf{T}\big(F_j^{*}\big)-\mathsf{T}\big(F_j^{\bar{\pi}_t(X_t)} \big) ] \mathds{1}_{ \{X_{t}\in B_j \} }$. Then,
\begin{equation}\label{eqn:toncovexpand}
\mathbb{E}(f) = \sum_{v \in \{0, 1\}^n
} \mathbb{E}(\mathds{1}_{\Omega(v)} f) = \sum_{v \in \{0, 1\}^n} \mathbb{P}({\Omega(v)})\mathbb{E}( f | {\Omega(v)}),
\end{equation}
where (as usual) we define
\begin{equation}
\mathbb{E}( f | {\Omega(v)}) :=
\begin{cases}
\mathbb{P}^{-1}(\Omega(v)) \mathbb{E}(\mathds{1}_{\Omega(v)} f) & \text{ if } \mathbb{P}(\Omega(v)) > 0, \\
0 & \text{ else. }
\end{cases}
\end{equation}
Fix~$v \neq 0$. Denote the elements of~$\{s:v_s = 1\}$ by~$t_1, \hdots, t_{\bar{m}}$, ordered from smallest to largest. On the event~$\Omega(v)$, i.e., for every~$\omega \in \Omega(v)$, we can use the definition of~$\bar{\pi}$ (cf.~the description of the F-UCB policy with covariates of display Policy~\ref{poly:FUCBC}) to rewrite  
\begin{equation}
f  = \sum_{s = 1}^{\bar{m}} \left[ \mathsf{T}(F_j^*) - \mathsf{T}\left(F_j^{\hat{\pi}_{s}(W^{s-1})}\right) \right],
\end{equation}
where~$\hat{\pi}$ is the F-UCB policy from~\cite{kpv1}, and where~$W^s$ is defined recursively via~$W^{s} = (Y_{\hat{\pi}_{s-1}(W^{s-1}), t_s}, W^{s-1})$ with $W^0$ the empty vector (cf.~also the discussion before our Policy~\ref{poly:FUCBC}). Hence, for~$\omega \in \Omega(v)$,~$f$ is a function of~$(Y_{t_1}, \hdots, Y_{t_{\bar{m}}})$, i.e.,~$f = H(Y_{t_1}, \hdots, Y_{t_{\bar{m}}})$, say. We conclude that 
\begin{equation}
\mathbb{E}(f|\Omega(v)) =
\mathbb{E}\left( H(Y_{t_1}, \hdots, Y_{t_{\bar{m}}}) | \Omega(v) \right) = \mathbb{E}^v(H(Y_{t_1}, \hdots, Y_{t_{\bar{m}}})),
\end{equation}
where the probability measure~$\mathbb{P}^v$ corresponding to~$\mathbb{E}^v$ is defined as the~$\mathbb{P}$-measure with density~$\mathbb{P}^{-1}(\Omega(v)) \mathds{1}_{\Omega(v)}$. Note that for~$A_i \in \mathcal{B}(\R^K)$ for~$i = 1, \hdots, \bar{m}$, we have that~$\mathbb{P}^v(Y_{t_1} \in A_1, \hdots, Y_{t_{\bar{m}}} \in A_{\bar{m}}  )$ equals 
\begin{align}
\mathbb{P}^{-1}(\Omega(v)) \mathbb{P}\left( Y_{t_1} \in A_1, \hdots, Y_{t_{\bar{m}}} \in A_{\bar{m}}, \Omega(v) \right) 
&= \prod_{s = 1}^{\bar{m}} \frac{\mathbb{P}(Y_{t_s} \in A_s, X_{t_s} \in B_j)}{ \mathbb{P}(X_{t_s} \in B_j)} \\ &= \prod_{s = 1}^{\bar{m}} \mathbb{P}(Y_{t_s} \in A_s | \{X_{t_s} \in B_j\}).
\end{align}
Hence, the image measure~$\mathbb{P}^v \circ (Y_{t_1}, \hdots, Y_{t_{\bar{m}}})$ is the~$\bar{m}$-fold product of~$\mathbb{Q}(\cdot) := \mathbb{P}(Y_{1} \in . | \{X_{1} \in B_j\})$. For i.i.d.~random~$K$-vectors~$Y^*_1, \hdots, Y^*_{\bar{m}}$, say, each with distribution~$\mathbb{Q}$, it hence follows from the definition of~$H$ that 
\begin{equation}
\mathbb{E}(H(Y_{t_1}, \hdots, Y_{t_m})|\Omega(v)) = \mathbb{E}(H(Y^*_1, \hdots, Y^*_{\bar{m}})) = \mathbb{E}\left( 
\sum_{s = 1}^{\bar{m}} \left[ \mathsf{T}(F_j^*) - \mathsf{T}\left(F_j^{\hat{\pi}_{s}(Z^*_{s - 1})}\right) \right] 
\right)
\end{equation}
where~$Z^*_{s} = (Y^*_{\hat{\pi}_{s}(Z^*_{s-1}), s}, \hdots, Z^*_{s-1})$ (and where~$Z^*_0$ is the empty vector). The~$r$-th marginal of~$\mathbb{Q}$ has cdf~$F^r_j$, which by Lemma~\ref{lem:convex} is an element of the closure of~$\mathscr{D} \subseteq D_{cdf}([a,b])$ w.r.t.~$\|\cdot\|_{\infty}$, which we here denote as~$\mathrm{cl}(\mathscr{D})$. Therefore, it now follows from Theorem~4.1 in~\cite{kpv1}, applied with~$\mathrm{cl}(\mathscr{D})$ (cf.~their Remark 2.4) and with~``$n = \bar{m}$,'' that the quantity in the previous display, and thus~$\mathbb{E}(f|\Omega(v))$, is not greater than~$c \sqrt{K \bar{m} \overline{\log}(\bar{m})}$. From~\eqref{eqn:toncovexpand} (noting that~$f$ vanishes on~$\Omega(0)$) we see that
\begin{equation}
\mathbb{E}(f) \leq c\sum_{v \in \{0, 1\}^n} \mathbb{P}(\Omega(v))  \sqrt{K \bar{m} \overline{\log}(\bar{m})}.
\end{equation}
Recall, that~$\bar{m} = \sum_{s = 1}^n v_s$. Hence, we can interpret~$\bar{m}$ as a random variable on the set~$\{0, 1\}^n$, equipped with the probability mass function~$p(v) = \mathbb{P}(\Omega(v))$. Obviously, this random variable is Bernoulli-distributed with success probability~$\P_X(B_j)$ and ``sample size'' $n$. Thus its expectation is~$n\P_X(B_j)$. It remains to observe that the function~$h$ defined via~$x \mapsto (K x \overline{\log}(x))^{0.5}$ is concave on~$[0, \infty)$, allowing us to apply Jensen's inequality to upper bound the right hand side in the previous display by~$c h(n \P_X(B_j))$, which establishes the statement in Equation~\eqref{eqn:claimcovband}.

\subsection{Proof of Corollary~\ref{Cor:SimpleBins}}\label{sec:proofsimplebins}

Fix~$n \in \N$, and let~$(Y_t, X_t) \sim \P_{Y, X}$ for~$t = 1, \hdots, n$, where~$\P_{Y, X}$ satisfies Equation~\eqref{eqn:marginDincls}, Assumption~\ref{ass:DensityX} with~$\underline{c}$ and~$\overline{c}$, and Assumption~\ref{ass:Holder} with~$L$ and~$\gamma$. We shall apply Theorem~\ref{Thm:Covariate} to get an upper bound on~$\E[R_n(\bar{\pi})]$. The specific partition results in~$M(n)=P^d$ and~$V_{n,j}=\sqrt{d} P^{-1}$, where~$P = \lceil n^{1/(2\gamma + d)} \rceil$. Furthermore, from Assumption~\ref{ass:DensityX}, we obtain~$\P_X(B_{n,j}) \leq \overline{c} P^{-d}$. Therefore, Equation~\eqref{eq:regret_cov_gen} implies the upper bound 
\begin{equation}
\E[R_n(\bar{\pi})] \leq 
c(\beta, C) \sqrt{ K n \bar{c} P^{d} \overline{\log} (n \bar{c} P^{-d}) } + 2 C L (\sqrt{d} P^{-1})^{\gamma} n \bar{c},
\end{equation}
which (using monotonicity of~$\overline{\log}$, and~$\overline{\log}(xy) \leq \overline{\log}(x) + \overline{\log}(y)$ for positive~$x$ and~$y$) is bounded from above by 
\begin{align}
c(\beta, C) \sqrt{ K \bar{c}(1+\overline{\log}(\bar{c})) \overline{\log} (n) n P^{d}  } + 2 C L d^{\gamma/2}  \bar{c} nP^{-\gamma}  &\leq c^*\left( \sqrt{K \overline{\log}(n) n P^d} + nP^{-\gamma} \right) \\ &\leq c^* \sqrt{K \overline{\log}(n)} \left( \sqrt{n P^d} + nP^{-\gamma} \right) ,
\end{align}
where~$c^* := \max [c(\beta, C) (\bar{c}(1+\overline{\log}(\bar{c})))^{1/2}, 2 C L d^{\gamma/2}  \bar{c}]$. From~$P^{-\gamma} \leq n^{-\gamma/(2\gamma + d)}$ and~$P^d \leq 2^d n^{d/(2\gamma + d)}$, we obtain the bound
\begin{equation}
\E[R_n(\bar{\pi})] \leq (2^{d/2} + 1) c^* \sqrt{K \overline{\log}(n)} n^{1-\frac{\gamma}{2\gamma + d}},
\end{equation}
which proves the theorem.

\subsection{Proof of Theorem~\ref{thm:LB_NoMargin}}

The statement follows from the first lower bound established in Theorem~\ref{LBCovMC}, upon setting~$\alpha = \alpha(\varepsilon) = (2\gamma + d)\varepsilon/\gamma$ there; note that~$\alpha(\varepsilon)$ is an element of~$(0, 1)$, because~$\eps \in (0, \gamma/(2\gamma + d))$ holds by construction. 

\subsection{Proof of Theorem~\ref{UBCovariateMargin}}

Define~$c_1 := 4 C L d^{\gamma/2}+1$. Recall that~$P=\lceil n^{1/(2 \gamma +d)}  \rceil$. Note first that it suffices to establish the inequality in Equation~\eqref{eqn:UBCovariateMargin} for all~$n$ large enough ($n \geq n_0$, say), such that~$c_1 P^{-\gamma} \leq 1$ holds (this will allow us to apply Assumption~\ref{ass:Margin} with~$\delta=c_1 P^{-\gamma}$ in the arguments below). To see this, note that, by Assumption~\ref{as:MAIN}, for all~$n < n_0$ it holds (for all random vectors as in the statement of the theorem) that~$\mathbb{E}[R_n(\pi)] \leq C n_0$. Hence, once the claimed inequality in the theorem has been established for all~$n \geq n_0$, the constant~$c$ in the statement of Theorem~\ref{UBCovariateMargin} can be chosen large enough to deal with the initial terms smaller than~$n_0$. Hence, fix~$n \geq n_0$. Because~$n$ is fixed, we abbreviate~$B_{n,j} = B_j$,~$V_{n,j} = V_j = \sqrt{d} P^{-1}$, and denote~$\bar{\pi}_{n,t} = \bar{\pi}_t$. 

Let~$(Y_t, X_t) \sim \P_{Y, X}$ for~$t = 1, \hdots, n$, where~$\P_{Y, X}$ satisfies Equation~\eqref{eqn:marginDincls}, Assumption~\ref{ass:DensityX} with~$\underline{c}$ and~$\overline{c}$, Assumption~\ref{ass:Holder} with~$L$ and~$\gamma$, and Assumption~\ref{ass:Margin} with~$\alpha \in (0, 1)$ and~$C_0 > 0$. We establish~$\E[R_n(\bar{\pi})] \leq cK \overline{\log}(n)n^{1-\frac{\gamma(1+\alpha)}{2\gamma + 2}}$ for a constant that depends on the quantities indicated in the statement of the theorem in five steps:

\medskip

\noindent
\textbf{Step~1: Decomposition of bins into different types.} To obtain the desired upper bound, we shall treat three types of bins separately. An analogous division of bins was also used in \cite{perchet2013multi} to establish the properties of their successive elimination algorithm in a classic bandit problem targeting the distribution with the highest (conditional) mean. The bins are split into
\begin{equation}\label{eqn:wsibins}
\begin{aligned}
\mathcal{J}&:=\left\{ j \in \{1, \ldots, P^d \}: \exists~\bar{x} \in B_j, \mathsf{T}(F^{\pi^{\star}(\bar{x})}(\cdot,\bar{x}))-
\mathsf{T}(F^{\pi^{\sharp}(\bar{x})}(\cdot,\bar{x}) )>c_1 P^{-\gamma} \right\}, \\
\mathcal{J}_s&:=\left\{ j \in \{1, \ldots, P^d \}: \exists~\bar{x} \in B_j, \mathsf{T}(F^{\pi^{\star}(\bar{x})}(\cdot,\bar{x}))=
\mathsf{T}(F^{\pi^{\sharp}(\bar{x})}(\cdot,\bar{x}) ) \right\}, \\
\mathcal{J}_{w}&:=\left\{j \in \{1, \ldots, P^d \}: 0<\mathsf{T}(F^{\pi^{\star}(x)}(\cdot,x))-
\mathsf{T}(F^{\pi^{\sharp}(x)}(\cdot,x) )\leq c_1 P^{-\gamma} \mbox{ for all } x \in B_j \right\}.
\end{aligned}
\end{equation}
The bins corresponding to indices in~$\mathcal{J}$,~$\mathcal{J}_s$, and~$\mathcal{J}_{w}$ will be referred to as ``well-behaved,'' ``strongly ill-behaved'' and ``weakly ill-behaved'' bins, respectively. Note that~$\mathcal{J}_w$ and~$\mathcal{J} \cup \mathcal{J}_s$ are clearly disjoint. That~$\mathcal{J}$ and~$\mathcal{J}_s$ are disjoint is shown in Step~2 below. Hence, the sets of bins corresponding to indices in~$\mathcal{J}$,~$\mathcal{J}_s$,~$\mathcal{J}_{w}$ constitute a partition of the set of all~$P^d$ bins~$B_j$, and we can thus write
\begin{equation}\label{eqn:Regretthreetypes}
\mathbb{E}(R_n(\bar{\pi})) = \sum_{j \in \mathcal{J}_s} \mathbb{E}(\tilde{R}_j(\bar\pi)) + \sum_{j \in \mathcal{J}_w}\mathbb{E}(\tilde{R}_j(\bar\pi)) + \sum_{j \in \mathcal{J}}\mathbb{E}(\tilde{R}_j(\bar\pi)),
\end{equation}
where, as in Equation~\eqref{eqn:decompThmcovariate}, we define
\begin{equation}\label{RegretBinj}
\tilde{R}_j(\bar\pi):=\sum_{t=1}^n \Big[\mathsf{T}\big(F^{\pi^{\star}(X_t)}(\cdot, X_t)\big)-\mathsf{T}\big(F^{\bar{\pi}_t(X_t)}(\cdot, X_t)\big) \Big]\mathds{1}_{\{ X_t \in B_j   \}}.
\end{equation}

\bigskip

\noindent
\textbf{Step~2: Strongly ill-behaved bins.}  For every~$j \in \mathcal{J}_s$, by definition, there exists a~$\bar{x} \in B_j$ such that 
$\mathsf{T}\big(F^{\pi^{\star}(\bar{x})}(\cdot,\bar{x})\big)=\mathsf{T}\big(F^{\pi^{\sharp}(\bar{x})}(\cdot,\bar{x}) \big)$. From the definition of~$\pi^{\sharp}$ it thus follows that~$\mathsf{T}\big(F^{\pi^{\star}(\bar{x})}(\cdot,\bar{x})\big)=\mathsf{T}\big(F^{i}(\cdot,\bar{x}) \big)$ for every~$i \in \mathcal{I}$. Therefore, for every~$x \in B_j$ and every~$i \in \mathcal{I},$ Lemma~\ref{ErrorHolder} yields
\begin{align}\label{eqn:use1sw}
\mathsf{T}(F^{\pi^{\star}(x)}(\cdot,x))-\mathsf{T}(F^{i}(\cdot,x))&=\mathsf{T}(F^{\pi^{\star}(x)}(\cdot,x))-
\mathsf{T}(F^{i}(\cdot,x))-[\mathsf{T}(F^{\pi^{\star}(\bar{x})}(\cdot,\bar{x}))-\mathsf{T}(F^{i}(\cdot,\bar{x}))] \\
& \leq 2 C L d^{\gamma/2} P^{-\gamma} \leq c_1 P^{-\gamma}.
\end{align}
First of all, this shows that~$\mathcal{J}$ and~$\mathcal{J}_s$ are disjoint. Furthermore, from Equations~\eqref{RegretBinj} and~\eqref{eqn:use1sw}, we obtain 
\begin{align}
\sum_{j \in \mathcal{J}_s} \tilde{R}_j(\bar{\pi}) &\leq c_1 P^{-\gamma} \sum_{j \in \mathcal{J}_s} \sum_{t=1}^n \mathds{1}_{\{ X_t \in B_j  \} } \mathds{1}_{\{ 0<\mathsf{T}(F^{\pi^{\star}( X_t)}(\cdot, X_t))-\mathsf{T}(F^{\pi^{\sharp}( X_t)}(\cdot, X_t) ) \}} \\
&\leq 
c_1 P^{-\gamma} \sum_{t=1}^n \mathds{1}_{\{ 0 < \mathsf{T}(F^{\pi^{\star}(X_t)}(\cdot,X_t))-\mathsf{T}(F^{\pi^{\sharp}(X_t)}(\cdot,X_t)) \leq c_1 P^{-\gamma}  \} }.
\end{align}
From Condition~\ref{ass:Margin} we hence obtain:
\begin{equation}
\begin{aligned}\label{RegStrIllBeh} 
\sum_{j \in \mathcal{J}_s} \mathbb{E}[\tilde{R}_j(\bar{\pi})] &\leq 
c_1  n P^{-\gamma}  \P_X \big(0<\mathsf{T}\big(F^{\pi^{\star}(X)}(\cdot,X)\big)-\mathsf{T}\big(F^{\pi^{\sharp}(X)}(\cdot,X) \leq c_1 P^{-\gamma} \big) \\
&\leq C_0 c_1^{1+\alpha} n P^{-\gamma(1+\alpha)}.
\end{aligned}
\end{equation}

\noindent
\textbf{Step~3: Weakly ill-behaved bins.} Since~$\{X_t \in B_j\}$ for~$j \in \mathcal{J}_w$  are disjoint subsets of
\begin{equation*}
\{ 0<\mathsf{T}(F^{\pi^{\star}(X_t)}(\cdot,X_t))-\mathsf{T}(F^{\pi^{\sharp}(X_t)}(\cdot,X_t)) \leq c_1 P^{-\gamma} \},
\end{equation*}
we obtain from Condition~\ref{ass:Margin}, recall that~$\mathbb{P}(X_t \in B_j) \geq \frac{\underline{c}}{P^d}$, that
\begin{align}
|\mathcal{J}_w| \frac{\underline{c}}{P^d} \leq \sum_{j \in \mathcal{J}_w } \mathbb{P}(X_t \in B_j) 
&\leq \P \big(0<\mathsf{T}\big(F^{\pi^{\star}(X_t)}(\cdot,X_t) \big)-\mathsf{T} \big(F^{\pi^{\sharp}(X_t)}(\cdot,X_t) \big)  
\leq c_1 P^{-\gamma} \big) 
\\ &\leq C_0 c_1^{\alpha} P^{-\gamma \alpha},
\end{align}
which yields~$|\mathcal{J}_w| \leq (C_0 c_1^{\alpha}/\underline{c}) P^{d-\gamma \alpha}.$ Using~\eqref{TildeR1} and~\eqref{eqn:claimcovband} with~$V_j=\sqrt{d} P^{-1}$ and 
$\P_X(B_j) \leq \bar{c} P^{-d}$, we obtain (by similar arguments as in Section~\ref{sec:proofsimplebins})
\begin{align}\label{BoundRegretj}
\mathbb{E}[\tilde{R}_j(\bar{\pi})] \leq
c' \left(\sqrt{K n \overline{\log}(n)} P^{-d/2}+n P^{-\gamma-d}\right),
\end{align}
where~$c'$ depends on~$d, L, \gamma, \bar{c}, C, \beta$, but \emph{not} on~$n$. Combining~\eqref{BoundRegretj} with~$|\mathcal{J}_w| \leq (C_0 c_1^{\alpha}/\underline{c}) P^{d-\gamma \alpha}$ leads to
\begin{align}\label{RegWeakIllBeh}
\sum_{j \in \mathcal{J}_w} \mathbb{E}[\tilde{R}_j(\bar{\pi})] \leq c'' \big(\sqrt{K n \overline{\log}(n)} P^{d/2-\gamma \alpha}+n P^{-\gamma(1+\alpha)} \big),
\end{align}
where~$c''$ depends on~$d, L, \gamma, \underline{c}, \bar{c}, C, C_0, \alpha, \beta$, but \emph{not} on~$n$.

\medskip

\noindent
\textbf{Step~4: Well-behaved bins.} 
For every~$j \in \mathcal{J}$ let~$x_j \in B_j$ be such that
\begin{equation}\label{eqn:Tdiffstarsharp}
\mathsf{T}(F^{\pi^{\star}(x_j)}(\cdot,x_j))-
\mathsf{T}(F^{\pi^{\sharp}(x_j)}(\cdot,x_j) )>c_1 P^{-\gamma}.
\end{equation}
Next, define the following sets of indices (``corresponding to the optimal and suboptimal treatments given~$x_j$''):
\begin{align*}
I_j^{\star}&:=\{ i \in \mathcal{I}: \mathsf{T}\big(F^{\pi^{\star}(x_j)}(\cdot,x_j) \big)=\mathsf{T}(F^{i}(\cdot,x_j)) \},  \\
I_j^{0}&:=\{ i \in \mathcal{I}: \mathsf{T}\big(F^{\pi^{\star}(x_j)}(\cdot,x_j) \big)-\mathsf{T}(F^{i}(\cdot,x_j))>c_1 P^{-\gamma} \}.
\end{align*}
Clearly~$\pi^{\star}(x_j) \in I_j^{\star}$ and~$\pi^{\sharp}(x_j) \in I_j^{0}$ (cf.~\eqref{eqn:Tdiffstarsharp}). Hence~$I_j^{\star}$ and~$I_j^{0}$ define a nontrivial partition of~$\mathcal{I}$. For every~$j \in \mathcal{J}$ we can thus decompose~$\tilde{R}_j(\bar\pi)$ defined in Equation~\eqref{RegretBinj} as the sum of
\begin{equation}
\begin{aligned}
&\tilde{R}_{j,I_j^\star}(\bar{\pi}) :=\sum_{i\in I_j^\star}\sum_{t=1}^n \left[\mathsf{T}\big(F^{\pi^{\star}(X_t)}(\cdot, X_t)\big)-\mathsf{T}\big(F^{i}(\cdot, X_t)\big) \right]\mathds{1}_{\{ X_t \in B_j   \}}\mathds{1}_{\cbr[0]{\bar{\pi}_t(X_t)=i}},\\
&\tilde{R}_{j,I_j^0}(\bar{\pi}) := 
\sum_{i\in I_j^0}\sum_{t=1}^n \left[\mathsf{T}\big(F^{\pi^{\star}(X_t)}(\cdot, X_t)\big)-\mathsf{T}\big(F^{i}(\cdot, X_t)\big) \right]\mathds{1}_{\{ X_t \in B_j   \}}\mathds{1}_{\cbr[0]{\bar{\pi}_t(X_t)=i}}.
\end{aligned}
\end{equation}

\noindent
\textbf{Step~4a: A bound for~$\mathbb{E}(\tilde{R}_{j,I_j^\star}(\bar{\pi}))$.}
For any~$i \in I_j^{\star}$ and every~$x \in B_j$ satisfying~$\mathsf{T}(F^{\pi^{\star}(x)}(\cdot,x)) \neq \mathsf{T}(F^{i}(\cdot,x))$, 
the triangle inequality, the definition of~$\pi^{\sharp}$, and Lemma~\ref{ErrorHolder} yield
\begin{align*}
0&<\mathsf{T}(F^{\pi^{\star}(x)}(\cdot,x))-\mathsf{T}(F^{\pi^{\sharp}(x)}(\cdot,x)) \\
&\leq \mathsf{T}(F^{\pi^{\star}(x)}(\cdot,x))-\mathsf{T}(F^{i}(\cdot,x))\\
&=\mathsf{T}(F^{\pi^{\star}(x)}(\cdot,x))-\mathsf{T}(F^{\pi^{\star}(x_j)}(\cdot,x_j))+\mathsf{T}(F^{i}(\cdot,x_j))-\mathsf{T}(F^{i}(\cdot,x)) \leq 2 C L d^{\gamma/2} P^{-\gamma} \leq c_1 P^{-\gamma},
\end{align*}
the last inequality following from~$c_1 = 4 C L d^{\gamma/2}+1$. But this means (applying the inequality chain in the previous display twice) that for any~$i \in I_j^{\star}$ and every~$x \in B_j$ 
\begin{equation}\label{I0Inequality}
\mathsf{T}(F^{\pi^{\star}(x)}(\cdot,x))-\mathsf{T}(F^{i}(\cdot,x)) \leq c_1 P^{-\gamma} \mathds{1}_{ \{v: 0<\mathsf{T}(F^{\pi^{\star}(v)}(\cdot,v))-\mathsf{T}(F^{\pi^{\sharp}(v)}(\cdot,v)) \leq c_1 P^{-\gamma}  \} } (x).
\end{equation}
We deduce
\begin{align}\label{eqn:4abound}
\E[\tilde{R}_{j,I_j^\star}(\bar{\pi})]
&\leq
\E\sum_{t=1}^n c_1  P^{-\gamma } \mathds{1}_{  \{  0< \mathsf{T}(F^{\pi^{\star}(X_t)}(\cdot,X_t))-\mathsf{T}(F^{\pi^{\sharp}(X_t)}(\cdot,X_t)) \leq c_1 P^{-\gamma} \}   } 
\mathds{1}_{\{ X_t \in B_j   \}}  \leq n c_1 P^{-\gamma} q_j, 
\end{align}
where~$q_j:=\P(0< \mathsf{T}(F^{\pi^{\star}(X_t)}(\cdot,X_t))-\mathsf{T}(F^{\pi^{\sharp}(X_t)}(\cdot,X_t)) \leq c_1 P^{-\gamma}, X_t\in B_j)$, which is independent of~$t$ due to the~$X_t$ being identically distributed.

\medskip

\noindent
\textbf{Step~4b: A bound for~$\mathbb{E}(\tilde{R}_{j,I_j^0}(\bar{\pi}))$.}
By Lemma~\ref{ErrorHolder}, noting that~$\P_X(B_j) > \underline{c}P^{-d} > 0$, for every~$x \in B_j$ and every~$i \in I_j^0$ we have (abbreviating~$F_{n,j}^i$ by~$F_j^i$)
\begin{equation}\label{eqn:step4bfirst}
\mathsf{T}(F^{\pi^{\star}(x)}(\cdot, x))-\mathsf{T}(F^{i}(\cdot, x)) \leq
\left[\mathsf{T}(F^{\ast}_j)-\mathsf{T}(F^{i}_j )\right] + c_1 P^{-\gamma},
\end{equation}
from which it follows that
\begin{equation}\label{eqn:perarmregr}
\begin{aligned}
\E[\tilde{R}_{j,I_j^0}(\bar{\pi})] &\leq
\sum_{i\in I_j^0}\Delta^i_j \E S(i, n, j)
+c_1 P^{-\gamma}\sum_{i\in I_j^0} \E S(i, n, j),
\end{aligned}
\end{equation}
where, for every~$i \in I_j^0$, we let~$S(i, n, j) := \sum_{t=1}^n \mathds{1}_{\{ X_t \in B_j\}}\mathds{1}_{\cbr[0]{\bar{\pi}_t(X_t)=i}}$ and~$\Delta_{j}^i:=\mathsf{T} (F_j^{\ast})-\mathsf{T} (F_j^{i})$. We now claim that (this claim will be verified before moving to Step~4c below)
\begin{equation}\label{eqn:yaclaim}
\E S(i, n, j) \leq \frac{2 C^2 \beta \log(\bar{c} n P^{-d})}{[\Delta_{j}^i]^2} + \frac{\beta + 2}{\beta-2}.
\end{equation}
Define~$\underline{\Delta}_j:=\min_{i \in I_j^0}  \Delta_j^i$. We note that~$\underline{\Delta}_j > 0$ follows from inserting~$x = x_j$ in Equation~\eqref{eqn:step4bfirst}, and from using the definition of~$I_{j}^0$. Next, noting that~$\max_{i \in I_j^0} \Delta_j^i \leq 2C$ by Assumption~\ref{as:MAIN}, and combining Equations~\eqref{eqn:perarmregr} and~\eqref{eqn:yaclaim}, we obtain the bound
\begin{align}\label{eqn:4bbound}
\E[\tilde{R}_{j,I_j^0}(\bar{\pi})] \leq 
K \frac{2 C^2 \beta \log(\bar{c} n P^{-d})}{\underline{\Delta}_j} \left(1 + \frac{c_1 P^{-\gamma}}{\underline{\Delta}_j}\right)
+ (c_1+2C) K \frac{\beta + 2}{\beta-2}.
\end{align}

It remains to prove the claim in Equation~\eqref{eqn:yaclaim}. To this end we apply a conditioning argument as in the proof of Theorem~\ref{Thm:Covariate}. We shall now use some quantities (in particular the sets~$\Omega(v)$) that were defined in that proof: Note that
\begin{equation}\label{eqn:splitup}
\E S(i, n, j) = \sum_{v \in \{0, 1\}^n} \mathbb{P}(\Omega(v)) \mathbb{E}(S(i, n, j) | \Omega(v)).
\end{equation}
Arguing as in the proof of Theorem~\ref{Thm:Covariate}, it is now easy to see that~$\mathbb{E}(S(i,n,j)|\Omega(v))$ can be written as the expected number of times treatment~$i$ is selected in running the F-UCB policy~$\hat{\pi}$ (without covariates) in a problem with~$\bar{m} = \sum_{s = 1}^n v_s$ (fixed) i.i.d.~inputs with distribution~$\mathbb{Q}$ (the marginals of which have a cdf that lies in the closure of~$\mathscr{D}$ w.r.t.~$\|\cdot\|_{\infty}$ as a consequence of Lemma~\ref{lem:convex}). We can hence (cf.~Remark~2.4 in~\cite{kpv1}) apply the bound established in Equation~(26) of~\cite{kpv1}, to the just mentioned problem, to obtain
\begin{equation}
\mathbb{E}(S(i,n,j)|\Omega(v)) \leq \frac{2 C^2 \beta \log(\bar{m})}{[\Delta^{i}_j]^2} + \frac{\beta + 2}{\beta - 2}.
\end{equation}
We can now combine the obtained inequality with Equation~\eqref{eqn:splitup} to see that
\begin{equation}
\E S(i, n, j)\leq \sum_{v \in \{0, 1\}^n} \mathbb{P}(\Omega(v)) \frac{2 C^2 \beta \log(\bar{m})}{[\Delta^{i}_j]^2} + \frac{\beta + 2}{\beta - 2}.
\end{equation}
The claim in~\eqref{eqn:yaclaim} now follows from Jensen's inequality, and (cf.~the end of the proof of Theorem~\ref{Thm:Covariate})~$\sum_{v \in \{0, 1\}^n} \mathbb{P}(\Omega(v)) \bar{m} \leq \bar{c}n P^{-d}$.

\noindent
\textbf{Step~4c: A bound for~$\mathbb{E}(\tilde{R}_j(\bar{\pi}))$ with~$j \in \mathcal{J}$.} For all~$i \in I_j^0$ and all~$x \in B_j$ the triangle inequality and Lemma~\ref{ErrorHolder} with~$V_j=\sqrt{d} P^{-1}$ shows that~$c_1 P^{-\gamma}$ is smaller than 
\begin{align*}
&|\mathsf{T}(F^{\pi^{\star}(x_j)}(\cdot,x_j))-\mathsf{T}(F^{i}(\cdot,x_j))| \\
\leq &  |\mathsf{T}(F^{\pi^{\star}(x_j)}(\cdot,x_j))-\mathsf{T}(F^{\pi^{\star}(x)}(\cdot,x))| +|\mathsf{T}(F^{\pi^{\star}(x)}(\cdot,x))-\mathsf{T}(F^{i}(\cdot,x))|+ |\mathsf{T}(F^{i}(\cdot,x))-\mathsf{T}(F^{i}(\cdot,x_j))| \\
\leq & 2 C L d^{\gamma/2} P^{-\gamma}+|\mathsf{T}(F^{\pi^{\star}(x)}(\cdot,x))-\mathsf{T}(F^{i}(\cdot,x))|.
\end{align*}
Recalling that~$c_1 = 4 C L d^{\gamma/2}+1$, we obtain  
\begin{equation}\label{LowerBoundDelta}
\mathsf{T} \big(F^{\pi^{\star}(x)}(\cdot,x) \big)-\mathsf{T}(F^{i}(\cdot,x))>(1+2 C L d^{\gamma/2}) P^{-\gamma}.
\end{equation}
[In particular, since~$I_j^0 \neq \emptyset$ holds,~$0< \mathsf{T}(F^{\pi^{\star}(x)}(\cdot,x))-\mathsf{T}(F^{\pi^{\sharp}(x)}(\cdot,x))$ for all~$x\in B_j$ if~$j \in \mathcal{J}$, an observation we shall need later in Step~4d.] For every~$i \in I_j^0$ and every~$x \in B_j$,~\eqref{LowerBoundDelta} and Lemma~\ref{ErrorHolder} (recalling that~$\P_X(B_j) > \underline{c}P^{-d} > 0$) imply 
\begin{equation}\label{eqn:lowdelta}
\Delta_{j}^i=\mathsf{T} (F_j^{\ast})-\mathsf{T} (F_j^{i}) \geq \mathsf{T}(F^{\pi^{\star}(x)}(\cdot,x))-\mathsf{T}(F^{i}(\cdot,x))-2 C L d^{\gamma/2}P^{-\gamma} > P^{-\gamma};
\end{equation} 
in particular, for any~$j \in \mathcal{J}$, we have~$\underline{\Delta}_j = \min_{i \in I_j^0}  \Delta_j^i > P^{-\gamma}$. Recalling that~$\tilde{R}_j(\bar{\pi}) = \tilde{R}_{j, I_j^*}(\bar{\pi}) + \tilde{R}_{j, I_j^0}(\bar{\pi})$, we combine~\eqref{eqn:4abound} and~\eqref{eqn:4bbound} (with the just observed~$\underline{\Delta}_j> P^{-\gamma}$) to see  that for any~$j \in \mathcal{J}$ 
\begin{align}\label{eqn:boundforsome}
\mathbb{E}[\tilde{R}_j(\bar{\pi})]\leq
n c_1 P^{-\gamma} q_j +\frac{2 C^2 (c_1+1) K \beta \log(\bar{c} n P^{-d}) }{\underline{\Delta}_j}+(c_1 + 2C) K \frac{\beta+2}{\beta-2}.
\end{align}

\noindent
\textbf{Step~4d: A bound for~$\sum_{j \in \mathcal{J}} \mathbb{E}[\tilde{R}_j(\bar{\pi})]$.} Using Equation~\eqref{eqn:boundforsome} and~$|\mathcal{J}|\leq P^d$ we obtain
\begin{equation}\label{RegretSumWell}
\sum_{j \in \mathcal{J}} \mathbb{E}[\tilde{R}_j(\bar{\pi})]  \leq (c_1 + 2C) K \frac{\beta+2}{\beta-2} P^d+n c_1 P^{-\gamma} \sum_{j \in \mathcal{J}} q_j
+\sum_{j \in \mathcal{J}} \frac{2 C^2 (c_1+1) K \beta \log(\bar{c} n P^{-d}) }{\underline{\Delta}_j}.
\end{equation}
Since the~$B_j$ are disjoint, we obtain, recalling the definition of~$q_j$ after Equation~\eqref{eqn:4abound}, that 
\begin{align}\label{eqn:oneofmany}
\frac{n c_1}{P^{\gamma}} \sum_{j \in \mathcal{J}} q_j \leq 
\frac{n c_1}{P^{\gamma}} \mathbb{P} \big( 0< \mathsf{T}(F^{\pi^{\star}(X_1)}(\cdot,X_1))-\mathsf{T}(F^{\pi^{\sharp}(X_1)}(\cdot,X_1))<c_1 P^{-\gamma} \big) \leq C_0 c_1^{1+\alpha} n P^{-\gamma(1+\alpha)},
\end{align}
where we used~Assumption~\ref{ass:Margin} to obtain the last inequality.

To deal with the last sum  in the upper bound in~\eqref{RegretSumWell}, we need a better lower bound on the~$\underline{\Delta}_j$-s than the already available~$P^{-\gamma}$. For notational simplicity, let's suppose that the well-behaved bins are indexed as~$\mathcal{J}=\{1, 2, \ldots, j_1 \}$ such that 
$0<P^{-\gamma} \leq \underline{\Delta}_1 \leq \underline{\Delta}_2 \leq \ldots \leq \underline{\Delta}_{j_1}$.
Fix~$j \in \mathcal{J}$. Then, for any~$k = 1, \hdots, j$, we claim that:
\begin{align}\label{PointwiseIneq}
B_k \subseteq 
\left\{x: 0< \mathsf{T}(F^{\pi^{\star}(x)}(\cdot,x))-\mathsf{T}(F^{\pi^{\sharp}(x)}(\cdot,x))<\underline{\Delta}_j+2 C L d^{\gamma/2} P^{-\gamma} \right\}.
\end{align}
To see \eqref{PointwiseIneq}, note that, by definition, there exists an~$i \in \mathcal{I}_k^0$ such that 
$\underline{\Delta}_k=\mathsf{T}(F_k^{*})-\mathsf{T}(F_k^{i}).$ Given~$x \in B_k$, Lemmas~\ref{ErrorHolder} and~\ref{lem:convex} and Remark 2.4 in~\cite{kpv1} yield (the first inequality following from the observation after Equation~\eqref{LowerBoundDelta})
\begin{align*}
0< \mathsf{T}(F^{\pi^{\star}(x)}(\cdot,x))-\mathsf{T}(F^{\pi^{\sharp}(x)}(\cdot,x)) &\leq \mathsf{T}(F^{\pi^{\star}(x)}(\cdot,x))-\mathsf{T}(F^{i}(\cdot,x)) \\
& \leq \underline{\Delta}_k+2 C L d^{\gamma/2} P^{-\gamma} \\
&\leq \underline{\Delta}_j+2 C L d^{\gamma/2} P^{-\gamma},
\end{align*}
and thus~$x$ is an element of the set on the right-hand-side of~\eqref{PointwiseIneq}. Since all bins~$B_k$ are disjoint and~$\underline{\Delta}_j+2 C L d^{\gamma/2}P^{-\gamma}\leq c_1 \underline{\Delta}_j$ (obtained by recalling~$c_1 = 4 C L d^{\gamma/2}+1$, and using~$\underline{\Delta}_j > P^{-\gamma}$), the inclusion~\eqref{PointwiseIneq} yields that for any~$j \in \mathcal{J}$:
\begin{align}\label{DeltajIneq1}
\mathbb{P}_X \big(x: 0< \mathsf{T}(F^{\pi^{\star}(x)}(\cdot,x))-\mathsf{T}(F^{\pi^{\sharp}(x)}(\cdot,x))<c_1 \underline{\Delta}_j \big) 
&\geq \sum_{k=1}^{j} \mathbb{P}_X(B_k)  \geq \frac{\underline{c} j}{P^d}.
\end{align}

Let's denote~$j_2:=\max \{j \in \mathcal{J}: \underline{\Delta}_{j} \leq 1/c_1 \}$ (here interpreting the maximum of an empty set as~$0$).
Then, for each~$j \in \{1, \ldots, j_2 \}$ by Assumption~\ref{ass:Margin} :
\begin{align}\label{DeltajIneq2}
\mathbb{P}_X \big( 0< \mathsf{T}(F^{\pi^{\star}(X)}(\cdot,X))-\mathsf{T}(F^{\pi^{\sharp}(X)}(\cdot,X))<c_1 \underline{\Delta}_j \big)
\leq  C_0 (c_1 \underline{\Delta}_j)^{\alpha}.
\end{align}
Combining~\eqref{DeltajIneq1},~\eqref{DeltajIneq2}, and~$\underline{\Delta}_j > P^{-\gamma}$, for any~$j \in \{1, \ldots, j_2 \}$ we get 
$\underline{\Delta}_j \geq \max \big( c_* \big(j P^{-d } \big)^{1/\alpha}, P^{-\gamma} \big)$, with constant~$c_* := c_1^{-1} \underline{c}^{1/\alpha} C_0^{-1/\alpha}$. 
Combining this with the identity~$\underline{\Delta}_j>1/c_1$ for~$j>j_2$, we obtain that
\begin{align*}
\sum_{j \in \mathcal{J}} \frac{1}{\underline{\Delta}_j} & \leq 
\sum_{j=1}^{j_2} \min \left(c_*^{-1} \big(P^d/j \big)^{1/\alpha}, P^{\gamma} \right)+\sum_{j=j_2+1}^{j_1} c_1 
\leq \sum_{j=1}^{P^d} \min \left( c_*^{-1} \big(P^d/j \big)^{1/\alpha}, P^{\gamma} \right)+ c_1 P^d.
\end{align*}
For~$\tilde{P} := \lceil P^{d-\alpha \gamma} \rceil$ (in fact for \emph{any}~$\tilde{P} \in \{1, \hdots, P^d\}$, and thus in particular for our particular choice) it holds that 
\begin{align*}
\sum_{j=1}^{P^d} \min \left(c_*^{-1} \big(P^d/j \big)^{1/\alpha}, P^{\gamma} \right)\leq\sum_{j=1}^{\tilde{P}} P^{\gamma}+ 
c_*^{-1} P^{d/\alpha}\sum_{j=\tilde{P}+1}^{\infty} j^{-1/\alpha} \leq c_{**} P^{d+\gamma(1-\alpha)},
\end{align*}
for~$c_{**} := [2+c_*^{-1}(\alpha^{-1}-1)^{-1}]$, where we used~$\sum_{j=\tilde{P}+1}^{\infty} j^{-1/\alpha}\leq (\alpha^{-1} - 1)^{-1} \tilde{P}^{1-\alpha^{-1}}$. Hence,  Equations~\eqref{RegretSumWell} and~\eqref{eqn:oneofmany}, and the bounds in the previous two displays imply
\begin{align}\label{RegWellBeh}
\sum_{j \in \mathcal{J}} \mathbb{E}[\tilde{R}_j(\bar{\pi})] \leq 
c{'''} \left( n P^{-\gamma(1+\alpha)}+ K \overline{\log}(n P^{-d}) P^d+ K \overline{\log}(n P^{-d}) P^{d+\gamma(1-\alpha)} \right),
\end{align}
for a constant~$c{'''}$, say, that depends on~$d, L, \gamma, \underline{c}, \overline{c}, C, C_0, \alpha$ and~$\beta$, but \emph{not} on~$n$. 

\medskip
\noindent
\textbf{Step~5: Combining.}
From Equations~\eqref{eqn:Regretthreetypes}, \eqref{RegStrIllBeh},~\eqref{RegWeakIllBeh} and~\eqref{RegWellBeh} we obtain
\begin{equation}
\mathbb{E}[R_n(\bar{\pi})]  \leq \frac{c''''}{4} \left( n P^{-\gamma(1+\alpha)}+ \sqrt{K n \overline{\log}(n)} P^{d/2-\gamma \alpha}+ K \overline{\log}(n P^{-d}) P^d 
+ K \overline{\log}(n P^{-d}) P^{d+\gamma(1-\alpha)} \right) \end{equation}
for a constant~$c''''$ that depends on~$d, L, \gamma, \underline{c}, \bar{c}, C, C_0, \alpha$ and~$\beta$, but \emph{not} on~$n$. From~$P = \lceil n^{1/(2 \gamma +d)}  \rceil$ we get~$n \leq P^{2\gamma + d}$, and obtain
\begin{align}
\mathbb{E}[R_n(\bar{\pi})]   &\leq \frac{c''''}{4} K \overline{\log}(n) \left( n  P^{-\gamma(1+\alpha)}+ n^{1/2} P^{d/2-\gamma \alpha}+ 2 P^{d+\gamma(1-\alpha)} \right) \leq 
c'''' K \overline{\log}(n) P^{d+\gamma(1-\alpha)},
\end{align}
from which the conclusion follows.

\subsection{Proof of Theorem~\ref{thm:ISR}}

To prove the theorem we just combine Theorem~\ref{UBCovariateMargin} and the following lemma, which allows one to upper bound the number of suboptimal assignments made by any policy. 

\begin{lemma}\label{ISR}
Suppose Assumptions~\ref{as:MAIN} and~\ref{ass:Margin} hold. Let~$D_0 \geq \max(2, C_0^{-1})$, and define~$\tilde{C}(\alpha, D_0, C_0) = (1-1/D_0)/(C_0 D_0)^{1/\alpha}$.
Then, for any policy~$\pi$, any randomization measure, and for all~$(Y_t, X_t) \sim \P_{Y, X}$, such that~$\P_{Y, X}$ satisfies Equation~\eqref{eqn:marginDincls} and Assumption~\ref{ass:Holder}, it holds that
\begin{equation}\label{WTS1}
\mathbb{E}[R_n(\pi)] 
\geq
\tilde{C}(\alpha, D_0, C_0) n^{-1/\alpha} \big(\mathbb{E}[S_n(\pi)] \big)^{1+1/\alpha} \quad \text{ for every } n \in \N.
\end{equation}
\end{lemma}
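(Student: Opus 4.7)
The plan is to relate the cumulative regret of $\pi$ at each time $t$ to a pointwise ``gap'' between the best and the second-best treatment at the covariate $X_t$, then control the contribution of near-ties via the margin condition, and finally optimize a free truncation parameter~$\delta$.

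First I would observe that if at time $t$ the assignment $\pi_{n,t}(X_t,Z_{t-1},G_t)$ is suboptimal (i.e.~not in $\argmax_i \mathsf{T}(F^i(\cdot, X_t))$), then by the very definition of $\pi^{\sharp}$ the instantaneous regret satisfies
\begin{equation*}
\mathsf{T}\big(F^{\pi^\star(X_t)}(\cdot,X_t)\big)-\mathsf{T}\big(F^{\pi_{n,t}(X_t,Z_{t-1},G_t)}(\cdot,X_t)\big)\;\geq\;\Delta(X_t),
\end{equation*}
where $\Delta(x):=\mathsf{T}(F^{\pi^\star(x)}(\cdot,x))-\mathsf{T}(F^{\pi^\sharp(x)}(\cdot,x))$; and such a suboptimal assignment also implies $\Delta(X_t)>0$. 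Summing over $t$ gives $R_n(\pi)\geq \sum_{t=1}^n \Delta(X_t)\mathds{1}\{\pi_{n,t}(X_t,Z_{t-1},G_t)\text{ is suboptimal}\}$.

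Next, for any $\delta\in(0,1]$, I would split this pointwise bound according to whether $\Delta(X_t)>\delta$ or $0<\Delta(X_t)\leq\delta$, obtaining
\begin{equation*}
R_n(\pi)\;\geq\;\delta\sum_{t=1}^n \mathds{1}\{\pi_{n,t}\text{ suboptimal}\}\;-\;\delta\sum_{t=1}^n \mathds{1}\{0<\Delta(X_t)\leq\delta\}
\;=\;\delta\, S_n(\pi)\;-\;\delta\sum_{t=1}^n \mathds{1}\{0<\Delta(X_t)\leq\delta\}.
\end{equation*}
Taking expectations, using that the $X_t$ are identically distributed, and applying Assumption~\ref{ass:Margin} to the last term (this is where $\delta\leq 1$ is needed), produces
\begin{equation*}
\mathbb{E}[R_n(\pi)]\;\geq\;\delta\,\mathbb{E}[S_n(\pi)]\;-\;C_0\, n\, \delta^{1+\alpha}.
\end{equation*}

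Finally, I would optimize over~$\delta$. If $\mathbb{E}[S_n(\pi)]=0$ the claim is trivial, so assume otherwise and choose $\delta:=\bigl(\mathbb{E}[S_n(\pi)]/(C_0 n D_0)\bigr)^{1/\alpha}$; this specific choice makes the subtractive term equal to $(1/D_0)\cdot\delta\,\mathbb{E}[S_n(\pi)]$, which yields
\begin{equation*}
\mathbb{E}[R_n(\pi)]\;\geq\;(1-1/D_0)\,\delta\,\mathbb{E}[S_n(\pi)]\;=\;\tilde{C}(\alpha,D_0,C_0)\, n^{-1/\alpha}\,\bigl(\mathbb{E}[S_n(\pi)]\bigr)^{1+1/\alpha}.
\end{equation*}
Admissibility of this choice ($\delta\leq 1$) follows from $\mathbb{E}[S_n(\pi)]\leq n$ together with the hypothesis $D_0\geq C_0^{-1}$, which forces $C_0 D_0\geq 1$. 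The condition $D_0\geq 2$ is only needed to keep the prefactor $(1-1/D_0)$ bounded away from zero.

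The only potentially delicate point is measurability of the events $\{\pi_{n,t}\text{ is suboptimal}\}$ and of $\Delta(X_t)$, so that the expectations above are well defined and Assumption~\ref{ass:Margin} can be invoked. Under Assumptions~\ref{as:MAIN}, \ref{ass:Holder} and the inclusion~\eqref{eqn:marginDincls}, the map $x\mapsto \mathsf{T}(F^i(\cdot,x))$ is continuous for every $i$, so these events are Borel; this is precisely the remark made in the footnote to Assumption~\ref{ass:Margin}. Beyond this measurability check, the argument is purely a one-variable optimization of $\delta\mapsto \delta\,\mathbb{E}[S_n(\pi)]-C_0 n\,\delta^{1+\alpha}$ under the margin condition, so I do not anticipate any substantive obstacle.
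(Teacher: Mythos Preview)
Your proposal is correct and follows essentially the same argument as the paper: lower-bound $R_n(\pi)$ by $\delta S_n(\pi)-\delta\sum_t\mathds{1}\{0<\Delta(X_t)\leq\delta\}$, take expectations, apply the margin condition, and plug in the specific choice $\delta=(\mathbb{E}[S_n(\pi)]/(nC_0D_0))^{1/\alpha}$, whose admissibility $\delta\leq 1$ is guaranteed by $\mathbb{E}[S_n(\pi)]\leq n$ and $C_0D_0\geq 1$. Your treatment of the measurability issue also matches the paper's remark.
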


\begin{remark}
In Lemma~\ref{ISR} we impose Assumptions~\ref{as:MAIN} and~\ref{ass:Holder} and Equation~\eqref{eqn:marginDincls} to guarantee that~$R_n(\pi)$ and~$S_n(\pi)$ are random variables, and that~$\pi^{\star}$ and~$\pi^\sharp$ are measurable, cf.~also the discussion in the footnote of Assumption~\ref{ass:Margin}.
\end{remark}
\begin{proof}
The proof-idea is quite standard and we follow~\cite{rigollet2010nonparametric}: Choose $D_0 \geq \max(2, C_0^{-1})$, implying that~$1/(C_0 D_0)^{1/\alpha}\leq 1$. Let~$n \in \N$, and let~$\pi$ be a policy as defined in Section~\ref{sec:setupX}. We write~$\pi_{n,t} = \pi_t$. Let~$\P_G$ be a randomization measure. We show that
\begin{equation}\label{WTS}
\mathbb{E}[R_n(\pi)] \geq \tilde{C} n^{-1/\alpha} \big(\mathbb{E}[S_n(\pi)] \big)^{1+1/\alpha}
\end{equation}
for~$\tilde{C}=\tilde{C}(\alpha, D_0, C_0)$. If~$\mathbb{E}[S_n(\pi)]=0$, \eqref{WTS} trivially holds. Thus, suppose that~$\mathbb{E}[S_n(\pi)]>0.$ Note that for any~$\delta>0$,
\begin{align*}
R_n(\pi) & \geq \delta \sum_{t=1}^n \mathds{1}_{ \{ \mathsf{T}(F^{\pi^\star(X_t)}(\cdot, X_t))-\mathsf{T}(F^{\pi^\sharp(X_t)}(\cdot, X_t)) > \delta  \} } \mathds{1}_{\cbr[1]{ \pi_t(X_t,Z_{t-1}, G_t)\not\in\argmax_{i \in \mathcal{I}}\cbr[0]{\mathsf{T}(F^i(\cdot,X_t))} }} \\
&= 
\delta S_n(\pi)-\delta \sum_{t=1}^n \mathds{1}_{ \{ \mathsf{T}(F^{\pi^\star(X_t)}(\cdot, X_t))-\mathsf{T}(F^{\pi^\sharp(X_t)}(\cdot, X_t)) \leq \delta  \} }\mathds{1}_{\cbr[1]{ \pi_t(X_t,Z_{t-1}, G_t)\not\in\argmax_{i \in \mathcal{I}}\cbr[0]{\mathsf{T}(F^i(\cdot,X_t))} }} \\
&=
\delta S_n(\pi)-\delta \sum_{t=1}^n \mathds{1}_{ \{ 0<\mathsf{T}(F^{\pi^\star(X_t)}(\cdot, X_t))-\mathsf{T}(F^{\pi^\sharp(X_t)}(\cdot, X_t)) \leq \delta  \} }\mathds{1}_{\cbr[1]{ \pi_t(X_t,Z_{t-1}, G_t)\not\in\argmax_{i \in \mathcal{I}}\cbr[0]{\mathsf{T}(F^i(\cdot,X_t))} }} \\
& \geq \delta S_n(\pi)-\delta \sum_{t=1}^n \mathds{1}_{ \{ 0<\mathsf{T}(F^{\pi^\star(X_t)}(\cdot, X_t))-\mathsf{T}(F^{\pi^\sharp(X_t)}(\cdot, X_t)) \leq \delta  \} },
\end{align*}
where the second equality used that if~$\pi_t(X_t,Z_{t-1}, G_t)\not\in\argmax_{i \in \mathcal{I}}\cbr[0]{\mathsf{T}(F^i(\cdot,X_t))}$, then~$0<\mathsf{T}(F^{\pi^\star(X_t)}(\cdot, X_t))-\mathsf{T}(F^{\pi^\sharp(X_t)}(\cdot, X_t))$. Choosing~$\delta:=(\mathbb{E}[S_n(\pi)]/(n C_0 D_0))^{1/\alpha} \leq 1/(C_0 D_0)^{1/\alpha}\leq1$ (the first inequality following from~$\mathbb{E}[S_n(\pi)] \leq n$), Assumption~\ref{ass:Margin} yields
\begin{align}\label{RnSnIneq}
\mathbb{E}[R_n(\pi)] \geq \delta (\mathbb{E}[S_n(\pi)]-C_0 n \delta^{\alpha})
=
\delta (1-1/D_0) \mathbb{E}[S_n(\pi)] 
= 
\tilde{C} n^{-1/\alpha} \big(\mathbb{E}[S_n(\pi)] \big)^{1+1/\alpha},
\end{align}
which proves~\eqref{WTS}. 
\end{proof}

\subsection{Proof of Theorem~\ref{LBCovMC}}

Let~$\pi$ be a policy, let~$\P_X$ be the uniform distribution on~$[0, 1]^d$, let~$\P_G$ be a randomization measure, and fix an~$n \in \N$. To simplify notation, we abbreviate~$\pi_{n, t} = \pi_t$. The proof of the inequalities in~\eqref{eqn:suplow} and~\eqref{eqn:suplow2} now proceeds in 5 steps:

\vspace{0.2cm}
\noindent
\textbf{Step~0: Preliminary observations and some notation.}
(a)~From the maintained assumptions and Assumption~\ref{as:MAIN} (imposed through Assumption~\ref{as:lbcov}) it follows that 
\begin{equation}\label{eqn:bilip}
c_- (\tau_2 - \tau_1) \leq \mathsf{T}(J_{\tau_2}) -\mathsf{T}(J_{\tau_1}) \leq C \| J_{\tau_2} - J_{\tau_1}\|_{\infty} \leq C (\tau_2 - \tau_1) \quad \text{ for every } \tau_1 \leq \tau_2 \text{ in } [0, 1]. 
\end{equation}
Let~$\varepsilon := 2/\sqrt{17} < 1/2$, set~$H_v := J_{1/2+v}$ for every~$v \in [-\varepsilon, \varepsilon]$, and define the map~$h: [-\varepsilon, \varepsilon] \to [h(-\varepsilon), h(\varepsilon)]$ via~$v\mapsto \mathsf{T}(H_v)$; note that~$h$ is strictly increasing because of~$c_->0$ and the observation in the previous display. (b)~The previous display also implies that~$h$ is Lipschitz continuous with constant~$C$, and that~$h(w) - h(v) \geq c_-(w-v)$ for every~$v\leq w$ in~$[-\varepsilon, \varepsilon]$; implying that~$h$ possesses a Lipschitz-continuous inverse function~$h^{-1}:[h(-\varepsilon), h(\varepsilon)] \to [-\varepsilon, \varepsilon]$, say, with constant~$c_-^{-1}$. (c)~Note that the map~$v \mapsto H_v$ (as a map from~$[-\varepsilon, \varepsilon]$ to~$D_{cdf}([a,b])$ equipped with the supremum metric) is Lipschitz continuous with constant~$1$. (d)~Finally, we verify that for~$\zeta := c_-^{-1} (0.5^2-\varepsilon^2)^{-1/2}$ we have (recalling the notational conventions introduced in the first paragraph of the Appendix)
\begin{equation}\label{eqn:aslbcov}
\mathsf{KL}^{1/2}(\mu_{H_{v}},\mu_{H_{w}})
\leq \zeta \left(\mathsf{T}(H_{w}) - \mathsf{T}(H_{v})\right) \quad \text{ for every } v \leq w \text{ in } [-\varepsilon, \varepsilon].
\end{equation}
By definition~$\mathsf{T}(H_{w}) - \mathsf{T}(H_{v}) = h(w) - h(v)$. Hence, the statement in~\eqref{eqn:aslbcov} follows from observation~(b) once we verify~$\mathsf{KL}^{1/2}(\mu_{H_{v}},\mu_{H_{w}}) \leq (w-v)/\sqrt{0.5^2-\varepsilon^2}$. But the latter is a simple consequence of Lemma~A.3 in~\cite{kpv1} (and is established similarly as the last claim in Lemma~A.4 in~\cite{kpv1}). 

\medskip

\vspace{0.2cm}
\noindent
\textbf{Step~1: Construction of a family of functions~$\mathcal{C}$.} For~$P \in \N$ (to be chosen in Step 4), let~$B_1^P, \ldots, B_{P^d}^P$ be the hypercubes defined in \eqref{SquareBins}, and sorted lexicographically; we shall drop the superscript~$P$ in the following. Let~$q_i,\ i=1,\hdots,P^d$, denote the center of~$B_i$. Let~$m:=\lceil P^{d-\gamma \alpha} \rceil$, and observe that~$1\leq m \leq P^d$. Next, let~$\Sigma_m:=\{-1, 1 \}^m$,~$|\Sigma_m|=2^m$, and define~$\mathcal{C}_m = \mathcal{C} := \{ f_{\sigma}: \sigma \in \Sigma_m \}$, where for~$\sigma \in \Sigma_m$ we construct~$f_{\sigma}:[0,1]^d \to \R$ via 
\begin{equation*}
f_{\sigma}(x):= h(0)+
c_-\varepsilon \sum_{j=1}^m \sigma_j \varphi_j(x);
\end{equation*}
for every~$j\in\cbr[0]{1,\hdots,P^d}$ we denote~$\varphi_j(x):=4^{-1} P^{-\gamma} \phi( 2 P(x-q_j))\mathds{1}_{B_j}(x)$, where~$\phi(x):=(1-||x||_{\infty})^{\gamma},$ and~$\|x\|_{\infty}:=\max_{1\leq i\leq d}|x_i|$ for~$x\in\mathbb{R}^d$. Note that every~$f_{\sigma}$ is continuous.

We now show that every~$f_{\sigma}$ is Hölder continuous. More precisely, we show that for every~$f_{\sigma}\in\mathcal{C}$ 
\begin{equation}\label{eqn:fholder}
|f_{\sigma}(x_1)-f_{\sigma}(x_2)| \leq  c_-\varepsilon 2^{-1} ||x_1-x_2||^{\gamma} \quad \text{ for every }  x_1, x_2 \in [0,1]^d,
\end{equation}
with~$\|\cdot\|$ denoting the Euclidean norm. We note that for any pair~$x_1, x_2 \in [0,1]^d$ one has~$|\phi(x_1)-\phi(x_2)| \leq ||x_1-x_2||_{\infty}^{\gamma} \leq ||x_1-x_2||^{\gamma}$; the second inequality is obvious, and the first inequality follows from~$|p^{\gamma}-q^{\gamma}| \leq |p-q|^{\gamma}$ for~$p,q\geq 0$ and~$0 < \gamma \leq 1$, together with the reverse triangle inequality. Now, to show~\eqref{eqn:fholder}, we consider two cases: First, if~$x_1, x_2 \in B_j$ for~$j\in\cbr[0]{1,\hdots,P^d}$, the definition of~$f_\sigma$ and~$|\phi(x_1)-\phi(x_2)|  \leq ||x_1-x_2||^{\gamma}$ lead to (note that if~$j > m$, the following inequality trivially holds)
\begin{align}\label{SameBin}
[c_-\varepsilon]^{-1} |f_{\sigma}(x_1)-f_{\sigma}(x_2)|  \leq |\varphi_j(x_1)-\varphi_j(x_2)| \leq  \frac{2^{\gamma}}{4} ||x_1-x_2||^{\gamma} \leq \frac{1}{2} ||x_1-x_2||^{\gamma}.
\end{align}
We remark that by continuity of~$f_{\sigma}$, equation~\eqref{SameBin} continues to hold if~$x_1$ and~$x_2$ are elements of the closure of~$B_j$, i.e., of~$\bar{B}_j$. Secondly, suppose that~$x_1 \in B_j, x_2 \in B_k$ for~$j \neq k$. Let~$S:=\{ \theta x_1+(1-\theta) x_2: \theta \in [0,1] \}$. Define~$y_1:=\text{argmin}_{z \in S \cap \bar{B}_j} ||z-x_2||$ and~$y_2:=\text{argmin}_{z \in S \cap \bar{B}_k} ||z-x_1||.$ Clearly,~$y_1$ and~$y_2$ are elements of the boundary of~$B_j$ and~$B_k$, respectively, implying~$\varphi_j(y_1)=\varphi_k(y_2)=0$. Denote~$\bar{\sigma}_i = \sigma_i$ for~$i = 1, \hdots, m$ and~$\bar{\sigma}_i = 0$ for~$i > m$. We obtain 
\begin{align*}
[c_-\varepsilon]^{-1}|f_{\sigma}(x_1)-f_{\sigma}(x_2)|=|\bar{\sigma}_j \varphi_j(x_1)-\bar{\sigma}_k \varphi_k(x_2)| 
&\leq | \varphi_j(x_1)- \varphi_j(y_1)|+
| \varphi_k(y_2)-\varphi_k(x_2)|\\
& \leq \frac{2^{\gamma}}{4}( ||x_1-y_1||^{\gamma}+||y_2-x_2||^{\gamma}	 )\\
&\leq 2^{-1} ||x_1-x_2||^{\gamma},											
\end{align*}
where for the second inequality we made use of the second inequality in~\eqref{SameBin} (cf.~also the remark immediately after~\eqref{SameBin}), and for the third inequality we combined 
$(a^{\gamma}+b^{\gamma}) \leq  2^{1-\gamma}(a+b)^{\gamma}$ for~$0 < \gamma \leq 1$ and~$a,b \geq 0$ with~$||x_1-y_1||+||y_2-x_2|| \leq ||x_1-y_1||+||y_1-y_2||+||y_2-x_2|| = ||x_1-x_2||$. Since the hypercubes~$B_1, \hdots, B_{P^d}$ define a partition of~$[0, 1]^d$ this establishes Equation~\eqref{eqn:fholder}.

\vspace{0.5cm}

\noindent
\textbf{Step~2: Construction of probability measures~$\mathbb{P}_f$ indexed by~$\mathcal{C}$.} Recall from Observation~(b) in Step 0 that~$h: [-\varepsilon, \varepsilon] \to [-h(\varepsilon), h(\varepsilon)]$ defined via~$v\mapsto \mathsf{T}(H_v)$ permits a Lipschitz-continuous inverse~$h^{-1}: [h(-\varepsilon), h(\varepsilon)] \to [-\varepsilon, \varepsilon]$, say, with corresponding Lipschitz constant~$c_-^{-1}$. By construction, the range of~$f\in\mathcal{C}$ is contained in~$[h(-\varepsilon), h(\varepsilon)]$, because~$h(\varepsilon) - h(0) \geq c_- \varepsilon$ and similarly~$h(0) - h(-\varepsilon) \geq c_- \varepsilon$. Hence, for every~$f \in \mathcal{C}$ the composition~$A_f := h^{-1} \circ f: [0, 1]^d \to [-\varepsilon, \varepsilon]$ is well-defined, and Equation~\eqref{eqn:fholder} shows that~$A_f$ is Hölder-continuous with constant~$\varepsilon /2$ and exponent~$\gamma$. Note that by definition
\begin{equation}\label{eqn:connectfT}
f(x) = h\left(h^{-1} \circ f (x))\right) = h(A_f(x)) =  \mathsf{T}\left(H_{A_f(x)}\right) \quad \text{ for every } x \in [0, 1]^d \text{ and every } f \in \mathcal{C}.
\end{equation}

We next show that~$\mu_{H_{A_f(\cdot)}} (\cdot): \mathcal{B}(\R) \times [0,1]^d \to [0,1]$, defined via~$B\times x \mapsto \mu_{H_{A_f(x)}}(B)$, is a stochastic kernel: (i) By definition,~$\mu_{H_{A_f(x)}}$ is a probability measure for every~$x \in [0, 1]^d$. (ii) Recall from Observation~(c) in Step~0 that~$\|H_{v} - H_{w}\|_{\infty} \leq |v - w|$ for every pair~$v, w \in [-\varepsilon, \varepsilon]$. From continuity of~$A_f$ it follows that~$x \mapsto H_{A_f(x)}(c) = \mu_{H_{A_f(x)}}((-\infty, c])$ is continuous (and hence measurable) for every~$c \in \R$. Since~$\{(-\infty, c]: c \in \R\}$ is a~``$\pi$-system'' that generates the Borel~$\sigma$-algebra on~$\R$, Lemma~1.40 of \cite{kallenberg} shows that~$\mu_{H_{A_f(\cdot)}} (\cdot): \mathcal{B}(\R) \times [0,1]^d \to [0,1]$ is a stochastic kernel. 

For every~$f\in\mathcal{C}$, we define the probability measure~
\begin{equation}\label{eqn:Pfdef}
\P_f := \mu_{H_0} \otimes [\mu_{H_{A_f(\cdot)}} \otimes \P_X];
\end{equation}
noting that the product in brackets is a semi-direct product. For later reference, we note that if~$(Y_t, X_t) \sim \mathbb{P}_f$, it holds for every~$x \in [0, 1]^d$ that~$F^1(\cdot, x) = H_0$ and~$F^2(\cdot, x) = H_{A_f(x)}$. In particular, Equation~\eqref{eqn:marginDincls} is satisfied as a consequence of Assumption~\ref{as:lbcov}. Now, for every~$t = 1, \hdots, n$, denote by~$\P_{\pi, f}^t$ the probability measure on the Borel sets of~$\R^{(d+2)t}$ induced by the (recursively defined) random vector~$Z_t = (Y_{\pi_{t}(X_{t}, Z_{t-1}, G_t), t}, X_{t}, G_t, \hdots, Y_{\pi_{1}(X_{1}, G_{1}), 1}, X_{1}, G_1)$ with i.i.d.~$(Y_t, X_t, G_t) \sim \P_f \otimes \P_G$. In the sequel, for~$t = 1, \hdots, n$, the symbol~$z_t$ will denote a ``generic'' element of~$\R^{(d+2)t}$ (i.e., a ``realization'' of the random vector~$Z_t$). 

We close this step with an important observation: Note that~$\bar{K}_{t,f}: \mathcal{B}(\R) \times [0,1]^d \times \R \times \R^{(t-1)(d+1)}$ defined via
\begin{equation}\label{eqn:defKbar}
B \times x \times g  \times z_{t-1} \mapsto  
\mu_{H_0}(B) \mathds{1}\{\pi_{t}(x, z_{t-1}, g) = 1\} + \mu_{H_{A_f(x)}}(B) \mathds{1}\{\pi_{t}(x,  z_{t-1}, g) = 2\} 
\end{equation}
is a regular conditional distribution of~$Y_{\pi_t(X_t, Z_{t-1}, G_t), t}$ given~$(X_t, G_t, Z_{t-1})$, and that for every~$t=1, \hdots, n$ we can therefore write (noting that~$Z_t = (Y_{\pi_t(X_t, Z_{t-1}, G_t), t}, X_t, G_t, Z_{t-1})$, interpreting~$Z_0$ as the empty vector)
\begin{equation}\label{eqn:kernelKULLprep}
\P_{\pi, f}^t = \bar{K}_{t,f} \otimes [\P_X \otimes \P_G \otimes \P_{\pi, f}^{t-1} ] ,
\end{equation}
with the convention that in case~$t = 1$ one has to drop the factor~$\P_{\pi, f}^{t-1}$ in the previous display and the ``$z_{t-1}$'' in Equation~\eqref{eqn:defKbar}. Hence, interpreting~$\mathsf{KL}(\P_{\pi, f_1}^{t-1}, \P_{\pi, f_2}^{t-1}) = 0$ in  case~$t = 1$, and with the just mentioned ``dropping''-convention, the Chain Rule of Lemma~\ref{lem:CHAIN} implies that for~$f_1, f_2 \in \mathcal{C}$ and any~$t =1,\hdots, n$ we have 
\begin{align*}
\mathsf{KL}	(
\P_{\pi, f_1}^t, \P_{\pi, f_2}^t
)
= &\mathsf{KL} \left( 
\bar{K}_{t,f_1} \otimes [\P_X \otimes \P_G \otimes \P_{\pi, f_1}^{t-1}  ],
\bar{K}_{t,f_2} \otimes [\P_X \otimes  \P_G \otimes \P_{\pi, f_2}^{t-1}  ]
\right)
 \\
=&\mathsf{KL}(\P_{\pi, f_1}^{t-1}, \P_{\pi, f_2}^{t-1}) + 
\mathsf{KL}	\left( 
\bar{K}_{t,f_1} \otimes [\P_X \otimes \P_G \otimes \P_{\pi, f_1}^{t-1}],
\bar{K}_{t,f_2} \otimes [\P_X \otimes \P_G \otimes \P_{\pi, f_1}^{t-1}]
\right),
\end{align*}
the right-hand-side being equal to the sum of~$\mathsf{KL}(\P_{\pi, f_1}^{t-1}, \P_{\pi, f_2}^{t-1})$ and 
\begin{equation*}
\int_{[0, 1]^d \times \R \times \R^{(t-1)(d+2)} }
\mathsf{KL}(
\bar{K}_{t,f_1}(\cdot, x, g, z_{t-1}),
\bar{K}_{t,f_2}(\cdot, x, g, z_{t-1})
)
d(\P_X \otimes \P_G \otimes  \P_{\pi, f_1}^{t-1} )(x, g, z_{t-1}).
\end{equation*}
Using~Equation~\eqref{eqn:defKbar} this sum further simplifies to
\begin{equation*}
\mathsf{KL}(\P_{\pi, f_1}^{t-1}, \P_{\pi, f_2}^{t-1}) + 
\int_{\{\pi_{t} = 2\}}
\mathsf{KL}(\mu_{H_{A_{f_1}(x)}}, \mu_{H_{A_{f_2}(x)}})
d(\P_X \otimes \P_G \otimes \P_{\pi, f_1}^{t-1})(x, g, z_{t-1}),
\end{equation*}
which, noting that~$\P_{\pi, f_1}^{t-1}$ is obtained by a coordinate projection from~$\P_{\pi, f_1}^n$, implies
\begin{equation*}
\mathsf{KL}	\left( 
\P_{\pi, f_1}^t, \P_{\pi, f_2}^t
\right)
\leq
\mathsf{KL}(\P_{\pi, f_1}^{t-1}, \P_{\pi, f_2}^{t-1}) + 
\int_{\{\pi_{t} = 2\}}
\mathsf{KL}(\mu_{H_{A_{f_1}(x)}}, \mu_{H_{A_{f_2}(x)}})
d(\P_X \otimes \P_G \otimes \P_{\pi, f_1}^{n} )(x, g, z_n).
\end{equation*}
By induction, it now immediately follows that for every~$t = 1, \hdots, n$
\begin{equation}\label{eqn:KLforlater}
\mathsf{KL}	\left( 
\P_{\pi, f_1}^t, \P_{\pi, f_2}^t
\right) \leq \int \sum_{i = 1}^t \mathds{1}\{\pi_{i} = 2\}
\mathsf{KL}(\mu_{H_{A_{f_1}(x)}}, \mu_{H_{A_{f_2}(x)}})
d(\P_X \otimes \P_G \otimes \P_{\pi, f_1}^{n})(x, g, z_n).
\end{equation}

\vspace{0.5cm}
\noindent
\textbf{Step~3: Verifying Assumptions~\ref{ass:Holder}  and~\ref{ass:Margin} for every~$\mathbb{P}_f$.} Fix~$f = f_{\sigma} \in \mathcal{C}$. To verify Assumption~\ref{ass:Holder} (with~$\gamma$ and~$L= \varepsilon/2$ as given in the theorem, cf.~Step~0 for the definition of~$\eps$) for~$\P_f$, which was defined in~\eqref{eqn:Pfdef}, note that
\begin{align*}
\|F^2(\cdot, x_1) -  F^2(\cdot, x_2)\|_{\infty} =
\|H_{A_f(x_1)}-H_{A_f(x_2)}\|_\infty
\leq  |A_f(x_1)-A_f(x_2)| \leq L \|x_1 - x_2\|^{\gamma},
\end{align*}
the first inequality following Observation~(c) in Step 0, and the second following from~$A_f$ being Hölder-continuous with constant~$L=\varepsilon/2$ and exponent~$\gamma$, as observed in Step~2 right before Equation~\eqref{eqn:connectfT}; note further that~$F^1(\cdot, x) = H_{0}$, and that the previous display hence trivially holds for~$F^2$ replaced by~$F^1$. Next, to verify Assumption~\ref{ass:Margin} (with~$\alpha$ and~$C_0 = 8d[c_- \varepsilon]^{-\alpha}$ as given in the theorem), it suffices to show (recall that~$K = 2$) that 
\begin{equation}\label{MC}
\P_X\left(x \in [0, 1]^d:0<|\mathsf{T}(H_{A_f(x)})-\mathsf{T}(H_{0})| \leq c_-\varepsilon \delta \right) \leq 8 d \delta^{\alpha} \mbox{ for all } \delta \geq 0.
\end{equation}
The statement in~\eqref{MC} is trivial for~$\delta = 0$. Let~$\delta > 0$. We use Equation~\eqref{eqn:connectfT} to write~$$[c_-\varepsilon]^{-1} | \mathsf{T}(H_{A_f(x)})-\mathsf{T}(H_{0}) |= \sum_{j=1}^m  \varphi_j(x),$$ where we used that~$B_j\cap B_k=\emptyset$ for~$j\neq k$. Noting that~$\sum_{j=1}^m  \varphi_j(x) = 0$ for~$x \notin \bigcup_{j = 1}^m B_j$, we obtain 
\begin{align*}
\P_X\left(x \in [0, 1]^d:0<|\mathsf{T}(H_{A_f(x)})-\mathsf{T}(H_{0})| \leq  c_-\varepsilon \delta \right)&=\sum_{j=1}^m \P_X\left(x \in B_j: 0< \varphi_j(x) \leq \delta \right),
\end{align*}
which we can write as
\begin{align*}
m \P_X\left(x \in B_1:  \phi(2P(x-q_1)) \leq 4P^{\gamma} \delta \right) 
&= m (2 P)^{-d}\int_{[-1,1]^d}\mathds{1}_{\cbr[0]{\phi \leq 4P^\gamma\delta}}dx \\&= m P^{-d} \int_{[0,1]^d} \mathds{1}_{ \{ \phi \leq 4 P^{\gamma} \delta \}} dx,
\end{align*}
where the first equality follows upon  substituting~$u=2P(x-q_1)$, and the second equality follows from~$\phi(x)$ being invariant to multiplying coordinates of~$x$ by~$-1$. To upper-bound the expression to the right in the previous display we consider two cases: If~$4 P^{\gamma} \delta>1,$ then 
\begin{align*}
m P^{-d} \int_{[0,1]^d} \mathds{1}_{ \{ \phi \leq 4 P^{\gamma} \delta \}} dx 
=
m P^{-d} 
\leq
2 P^{-\gamma \alpha}
\leq 
8 \delta^{\alpha},
\end{align*}
where we used~$m=\lceil P^{d-\gamma \alpha} \rceil \leq P^{d-\gamma \alpha}+1 \leq 2 P^{d-\gamma \alpha}$ and~${\alpha} \in (0,1)$. On the other hand, if~$4 P^{\gamma} \delta \leq 1$, we write~$\mathds{1}_{ \{ \phi \leq 4 P^{\gamma} \delta \}} = 1- \mathds{1}_{ \{ 4 P^{\gamma} \delta < \phi\}} = 1- \mathds{1}_{ \{ \|\cdot\|_{\infty}  < 1-(4\delta)^{1/\gamma} P\}}$ to obtain
\begin{equation*}
m P^{-d} \int_{[0,1]^d} \mathds{1}_{ \{ \phi \leq 4 P^{\gamma} \delta \}} dx 
=m P^{-d} (1- \int_{[0,1]^d} \mathds{1}_{ \{ \|\cdot\|_{\infty}  < 1-(4\delta)^{1/\gamma} P\}} dx )
=m P^{-d}[1-(1-(4\delta)^{1/\gamma} P)^d],
\end{equation*}
which, using~$(1-(1-s)^d) \leq ds$ for~$s \in [0, 1]$,~$m \leq 2P^{d-\gamma\alpha}$,~$P\leq (4\delta)^{-1/\gamma}$ and~$\alpha \in (0, 1)$, is bounded from above by$$m P^{1-d} d (4 \delta)^{1/\gamma}
\leq 2 d P^{1-\alpha \gamma} (4 \delta)^{1/\gamma} 
\leq 2 d (4 \delta)^{\alpha} \leq 8 d \delta^{\alpha}.$$

\vspace{0,5cm}
\noindent
\textbf{Step~4: Lower bounding the suprema in Equations~\eqref{eqn:suplow} and~\eqref{eqn:suplow2}.} We start with~Equation~\eqref{eqn:suplow2}. We already know that for every~$f \in \mathcal{C}$ the measure~$\P_f$ satisfies the inclusion in Equation~\eqref{eqn:marginDincls} and Assumptions~\ref{ass:Holder}  and~\ref{ass:Margin}. It therefore suffices to verify
\begin{align}
\sup_{f\in\mathcal{C}}\E_{(\P_f \otimes \P_G)^n }\sbr[1]{S_n(\pi)} \geq  n^{1-\frac{\alpha\gamma}{d+2\gamma}} \big / 32
\label{eq:ISR_tuple},
\end{align}
where~$\E_{ (\P_f \otimes \P_G)^n }$ denotes the expectation w.r.t.~the product measure~$\bigotimes_{t=1}^n(\P_f \otimes \P_G)$ (here, we interpret, with some abuse of notation,~$S_n(\pi)$ as a function on the range space of~$(X_t, Y_t, G_t)$ for~$t = 1, \hdots, n$; and we shall denote a generic realization of~$(X_t, Y_t, G_t)$ by $(x_t, y_t, g_t)$ to make this convention explicit, where we sometimes drop the subindex~$t$, if no confusion can arise).

We first observe that for~$\mathbb{P}_{f_{\sigma}}$, denoting~$\bar{f}_{\sigma} := [ c_- \varepsilon]^{-1}[f_{\sigma} - h(0)] = \sum_{j=1}^m \sigma_j \varphi_j$, we have
\begin{align*}
S_n(\pi)&=\sum_{t=1}^n \mathds{1}\{ \mathsf{T}(F^{1}(\cdot, x_t)) \neq \mathsf{T}(F^{2}(\cdot, x_t)),~\pi^{\star}(x_t) \neq \pi_{t}(x_t,z_{t-1}, g_t) \} \\
&=
\sum_{t=1}^n \mathds{1}\{ \bar{f}_{\sigma}(x_t) \neq 0,~2\pi_{t}(x_t,z_{t-1}, g_t) -3 \neq \text{sign}(\bar{f}_{\sigma}(x_t)) \},
\end{align*}
where for the second equality we used that~$\pi^{\star}(x)=3/2 + \text{sign}(\bar{f}_{\sigma}(x))/2$ (with the convention that the sign of~$0$ is~$-1$), and
where we recalled from Equation~\eqref{eqn:connectfT} that~$\mathsf{T}(F^{1}(\cdot, x)) \neq \mathsf{T}(F^{2}(\cdot, x))$ is equivalent to~$\bar{f}_{\sigma}(x) \neq 0$. Noting that the random vectors~$X_t$,~$Z_{t-1}$, and~$G_t$ are independent, it follows that their joint distribution equals~$\P_X \otimes \P_{\pi, f_{\sigma}}^{t-1} \otimes \P_G$. Using Tonelli's theorem, writing~$\mathbb{E}_G$ for the expectation w.r.t.~$\P_G$, abbreviating~$2\pi_{t}(x,z_{t-1},g) -3 := \check{\pi}_t(x,z_{t-1},g)$, and noting that the~$t$-th summand in the previous display depends on~$z_t$ only via~$z_{t-1}$, we obtain
\begin{align}
\sup_{f\in\mathcal{C}}\E_{ (\P_f \otimes \P_G)^n }[S_n(\pi)] 
&=
\sup_{\sigma \in \Sigma_m} \sum_{t=1}^n \mathbb{E}_{\pi, f_{\sigma}}^{t-1} \mathbb{E}_G \sbr[1]{\mathbb{P}_X\del[1]{
x : \bar{f}_{\sigma}(x) \neq 0,~\check{\pi}_t(x,z_{t-1}, g_t) \neq \text{sign}(\bar{f}_{\sigma}(x))
}} \notag  \\
& \geq \sup_{\sigma \in \Sigma_m} \sum_{j=1}^m \sum_{t=1}^n \mathbb{E}_{\pi, f_{\sigma}}^{t-1} \mathbb{E}_G [
\mathbb{P}_X(
 x \in B_j: \check{\pi}_t(x,z_{t-1}, g_t) \neq \sigma_j 
)
]\notag \\
&\geq 
\frac{1}{2^m}\sum_{j=1}^m \sum_{t=1}^n \sum_{\sigma \in \Sigma_m} \mathbb{E}_{\pi, f_{\sigma}}^{t-1} \mathbb{E}_G [\mathbb{P}_X(
 x \in B_j: \check{\pi}_t(x,z_{t-1}, g_t) \neq \sigma_j 
)]\label{eq:LBISR},
\end{align}
where we used that~$m\leq P^d$ and~$\mathbb{P}_X(x \in B_j: \bar{f}_{\sigma}(x) = 0) = 0$ (and where we use a corresponding ``dropping''-convention for the index~$t = 1$ as introduced after Equation~\eqref{eqn:kernelKULLprep}). For every~$j\in\cbr[0]{1,\hdots,m}$ and~$t\in\cbr[0]{1,\hdots,n}$,
\begin{align*}
Q_t^j
&:=
\sum_{\sigma \in \Sigma_m} \mathbb{E}_{\pi, f_{\sigma}}^{t-1} \mathbb{E}_G [\mathbb{P}_X(
 x \in B_j: \check{\pi}_t(x,z_{t-1}, g) \neq \sigma_j 
)] \\
&=
\sum_{\sigma_{-j}\in \Sigma_{m-1}}\sum_{i\in\cbr[0]{-1,1}}\mathbb{E}_{\pi, f_{\sigma_{-j}^i}}^{t-1} \mathbb{E}_G [\mathbb{P}_X(
 x \in B_j: \check{\pi}_t(x,z_{t-1},g) \neq i 
)],
\end{align*}
where~$\sigma_{-j}:=(\sigma_1,\hdots,\sigma_{j-1},\sigma_{j+1},\hdots,\sigma_m)$ and~$\sigma_{-j}^i:=(\sigma_1,\hdots,\sigma_{j-1},i,\sigma_{j+1},\hdots,\sigma_m)$ for~$i\in\cbr[0]{-1,1}$. Define for every~$j \in \{1, \hdots, m\}$ the probability measure~$\P_X^j$ via~$\P_X^j(A):=\P_X(A \cap B_j)/\P_X(B_j)$ for~$A\in \mathcal{B}(\R^d)$, and let~$\E_X^j$ be the corresponding expectation operator. Recalling~$\P_X(B_j) = P^{-d}$, we obtain for any~$z_{t-1} \in \mathbb{R}^{(t-1)(d+1)}$ and any~$g \in \R$ that
\begin{equation*}
\P_X(
\{ x \in B_j: \check{\pi}_t(x,z_{t-1}, g) \neq i  \}) =\P_X^j(\{x: \check{\pi}_t(x,z_{t-1}, g)\neq i\})/P^d,
\end{equation*}
from which we see that the sum over~$i$ in the penultimate display coincides, for every~$\sigma_{-j}\in \Sigma_{m-1}$, with
\begin{equation}\label{eqn:ingrLBX1}
\frac{1}{P^d}\left(
\mathbb{E}_{\pi, f_{\sigma_{-j}^{-1}}}^{t-1}\mathbb{E}_G\E_X^j\mathds{1}_{\cbr[0]{\check{\pi}_t(x,z_{t-1}, g)=1}}+1-\mathbb{E}_{\pi, f_{\sigma_{-j}^{1}}}^{t-1}\mathbb{E}_G\E_X^j\mathds{1}_{\cbr[0]{\check{\pi}_t(x,z_{t-1},g)=1}}
\right) =: \frac{1}{P^d} e(\sigma, j, t).
\end{equation} 
Clearly,~$e(\sigma, j, t)$ is the sum of the Type~1 and Type~2 error of the test~$(x, z_{t-1}, g) \mapsto \mathds{1}_{\cbr[0]{\check{\pi}_t(x,z_{t-1}, g)=1}}$ for
$$H_0:\P_X^j\otimes\P^{t-1}_{\pi,f_{\sigma_{-j}^{-1}}}\otimes \P_G \quad \text{ against } \quad H_1: \P_X^j \otimes\P^{t-1}_{\pi,f_{\sigma_{-j}^{1}}}\otimes \P_G.$$
Using Theorem 2.2(iii) of \cite{tsybakov2009introduction}, we obtain
\begin{equation}\label{eqn:ingrLBX2}
\begin{aligned}
e(\sigma, j, t)
&\geq
\frac{1}{4}\exp\sbr[2]{-\mathsf{KL}\del[1]{
\P_X^j\otimes\P^{t-1}_{\pi,f_{\sigma_{-j}^{-1}}}\otimes \P_G,
\P_X^j \otimes\P^{t-1}_{\pi,f_{\sigma_{-j}^{1}}}\otimes \P_G
}} \\
&=
\frac{1}{4}\exp\sbr[2]{-\mathsf{KL}\del[1]{\P^{t-1}_{\pi,f_{\sigma_{-j}^{-1}}},\P^{t-1}_{\pi,f_{\sigma_{-j}^{1}}}}},
\end{aligned}
\end{equation}
the equality following, e.g., from the Chain Rule in Lemma~\ref{lem:CHAIN}. 

To upper bound~$\mathsf{KL}\del[1]{\P^{t-1}_{\pi,f_{\sigma_{-j}^{-1}}},\P^{t-1}_{\pi,f_{\sigma_{-j}^{1}}}}$, we will now apply~\eqref{eqn:KLforlater} with~$f_1 = f_{\sigma_{-j}^{-1}}$ and~$f_2 = f_{\sigma_{-j}^{1}}$. Note first that~$f_1(x) = f_2(x)$ for~$x \notin B_j$, and that~$(f_1(x), f_2(x)) = (h(0)- c_-\varepsilon \varphi_j(x),h(0) + c_-\varepsilon \varphi_j(x))$ for~$x \in B_j$, from which it follows from Equations~\eqref{eqn:aslbcov} (note that~$A_{f_1(x)} \leq A_{f_2(x)}$ follows from strict monotonicity of~$h^{-1}$, cf.~Step~0) and \eqref{eqn:connectfT} that
\begin{equation}
\mathsf{KL}(\mu_{H_{A_{f_1}(x)}}, \mu_{H_{A_{f_2}(x)}})
\leq
\begin{cases}
[2  \zeta c_- \varepsilon  \varphi_j(x)]^2 & \text{ if } x \in B_j, \\
0 & \text{ if } x \notin B_j.
\end{cases}
\end{equation}
Since~$[2  \zeta c_- \varepsilon  \varphi_j(x)]^2 \leq [ \zeta c_- \varepsilon 2^{-1} P^{-\gamma}]^2 =: \bar{r}P^{-2\gamma}$ holds for~$x \in B_j$, Equation~\eqref{eqn:KLforlater} delivers 
\begin{equation*}
\mathsf{KL}	( 
\P_{\pi, f_{\sigma_{-j}^{-1}}}^{t-1}, \P_{\pi, f_{\sigma_{-j}^{1}}}^{t-1}
) \leq \bar{r}P^{-2\gamma}
\int \sum_{i = 1}^{t-1} \mathds{1}\{G(i,j)\}
d(\P_X \otimes \P_G \otimes \P_{\pi, f_{\sigma_{-j}^{-1}}}^{n}  ) \leq \bar{r}P^{-2\gamma} N_{j, \sigma_{-j}},
\end{equation*}
with~$G(i,j) := \{(x, z_n, g): x \in B_j, \pi_{i}(x, z_{i-1}, g) = 2\}$,~$N_{j, \sigma_{-j}}:=\int \sum_{i = 1}^n \mathds{1}\{G(i,j)\} d(\P_X \otimes \P_G \otimes \P_{\pi, f_{\sigma_{-j}^{-1}}}^n)$. The dependence of~$N_{j,\sigma_{-j}}$ on~$\pi$ has been suppressed. In combination with Equations~\eqref{eqn:ingrLBX1} and~\eqref{eqn:ingrLBX2} we hence obtain
\begin{align*}
\sum_{t = 1}^n Q_t^j = \sum_{t = 1}^n \sum_{\sigma_{-j}\in \Sigma_{m-1}} \frac{1}{P^d} e(\sigma, j, t) &\geq
\sum_{t = 1}^n \sum_{\sigma_{-j}\in \Sigma_{m-1}} 
\frac{1}{4 P^d}\exp\sbr[2]{-\bar{r}P^{-2\gamma} N_{j, \sigma_{-j}}} \\
&=
\frac{n}{4P^d} \sum_{\sigma_{-j}\in \Sigma_{m-1}} \exp\sbr[2]{-\bar{r}P^{-2\gamma} N_{j, \sigma_{-j}}} \\
&\geq
2^{m-1} \frac{n}{4P^d}  \exp\sbr[2]{-\bar{r}P^{-2\gamma} \rho_j },
\end{align*}
the last inequality following from Jensen's inequality and~$\rho_j := 2^{1-m}
\sum_{\sigma_{-j}\in \Sigma_{m-1}}N_{j, \sigma_{-j}}$. Furthermore, from the definition of~$Q_t^j$, one directly obtains via Tonelli's theorem that
\begin{align*}
\sum_{t = 1}^n Q_t^j &= 
\sum_{t = 1}^n \sum_{\sigma_{-j}\in \Sigma_{m-1}}\sum_{i\in\cbr[0]{-1,1}}\mathbb{E}_{\pi, f_{\sigma_{-j}^i}}^{t-1} \mathbb{E}_G [\mathbb{P}_X(
 x \in B_j: \check{\pi}_t(x,z_{t-1},g_t) \neq i 
)] \\
&\geq
\sum_{t = 1}^n \sum_{\sigma_{-j}\in \Sigma_{m-1}} \mathbb{E}_{\pi, f_{\sigma_{-j}^{-1}}}^{n} \mathbb{E}_G [\mathbb{P}_X(
 x \in B_j: \pi_t(x,z_{t-1},g_t) = 2 
)]
\\
&=
\sum_{\sigma_{-j}\in \Sigma_{m-1}} \mathbb{E}_{\pi, f_{\sigma_{-j}^{-1}}}^{n} \mathbb{E}_G \sum_{t = 1}^n [\mathbb{P}_X(
 x \in B_j: \pi_t(x,z_{t-1},g_t) = 2 
)]
\\
&=
\sum_{\sigma_{-j}\in \Sigma_{m-1}}N_{j, \sigma_{-j}}
=  2^{m-1}  \rho_j.
\end{align*}
Combining the lower bounds in the previous two displays with~\eqref{eq:LBISR} yields
\begin{equation*}
\sup_{f\in\mathcal{C}}\E_{ (\P_f \otimes \P_G)^n }[S_n(\pi)]
\geq
\frac{1}{2^m}\sum_{j=1}^m \sum_{t=1}^n
Q_t^j
\geq \frac{1}{2}\sum_{j=1}^m\max\left(\frac{n}{4P^d}\exp\left[- \bar{r}P^{-2\gamma}  \rho_j
\right], \rho_j \right),
\end{equation*}
which can further be lower-bounded by
\begin{align*}
\frac{1}{4}\sum_{j=1}^m \left(
\frac{n}{4P^d}\exp\left[- \bar{r}P^{-2\gamma} \rho_j
\right] + \rho_j
\right)
&\geq
\frac{m}{4}\inf_{\rho \geq 0}
\left(
\frac{n}{4P^d}\exp\left[- \bar{r}P^{-2\gamma}  \rho
\right] + \rho
\right) \\
& \geq 
\frac{m}{4\bar{r}P^{-2\gamma}}\inf_{\rho \geq 0}
\left(
\frac{n\bar{r}}{4P^{d+2\gamma} }\exp\left[- \rho
\right] + \rho\right).
\end{align*}
This lower bound holds for any~$P \in \N$ and corresponding~$m=\lceil P^{d-\gamma \alpha}\rceil$. We now set~$P := \lceil( n \bar{r}/4)^{1/(d+2\gamma)} \rceil$, and can thus use~$w \exp(-\rho) + \rho \geq w$ for every~$\rho \geq 0$ and every~$0 < w \leq 1$ to lower bound the quantity in the last line of the previous display by
\begin{equation*}
\frac{mn}{16P^{d}}  \geq \frac{ P^{d-\gamma \alpha}n}{16P^{d}} = \frac{n}{16}   P^{-\gamma \alpha} \geq \frac{n}{16}  [(n\bar{r}/4)^{1/(d+2\gamma)}+1]^{-\gamma \alpha} \geq
\frac{n^{1-\frac{\alpha\gamma}{d+2\gamma}}}{16} [(\bar{r}/4)^{1/(d+2\gamma)}+1]^{-\gamma \alpha}.
\end{equation*}
By definition,~$\bar{r} = [ \zeta c_- \varepsilon 2^{-1}]^2 = [(0.5^2-\varepsilon^2)^{-1/2} \varepsilon 2^{-1}]^2$. Recalling~$\varepsilon = 2/\sqrt{17}$ implies~$\bar{r} = 4$. Thus, the lower bound in the previous display simplifies to
\begin{equation}
\frac{n^{1-\frac{\alpha\gamma}{d+2\gamma}}}{16} 2^{-\gamma \alpha} \geq n^{1-\frac{\alpha\gamma}{d+2\gamma}} \big / 32.
\end{equation}
This establishes Equation~\eqref{eqn:suplow2}. Finally, Lemma~\ref{ISR} (cf.~Step 3, which verifies the assumptions needed) with $D_0 = 2+C_0^{-1}$ shows that the lower bound established in Lemma~\ref{ISR} holds for the corresponding constant~$(1-(2+C_0^{-1})^{-1})/(2C_0 +1)^{1/\alpha} \geq 2^{-1}(2C_0 +1)^{-1/\alpha} \geq 2^{-(1+1/\alpha)}(C_0 +1)^{-1/\alpha}$. This version of Lemma~\ref{ISR} and the already established Equation~\eqref{eqn:suplow2} proves Equation~\eqref{eqn:suplow}.

\onehalfspacing

\bibliographystyle{ecta}	
\bibliography{mergereferences}		

\begin{thebibliography}{24}
\newcommand{\enquote}[1]{``#1''}
\expandafter\ifx\csname natexlab\endcsname\relax\def\natexlab#1{#1}\fi

\bibitem[\protect\citeauthoryear{Audibert and Tsybakov}{Audibert and
  Tsybakov}{2007}]{audibert2007fast}
\textsc{Audibert, J.-Y. and A.~B. Tsybakov} (2007): \enquote{Fast learning
  rates for plug-in classifiers,} \emph{Annals of Statistics}, 35, 608--633.

\bibitem[\protect\citeauthoryear{Besson and Kaufmann}{Besson and
  Kaufmann}{2018}]{besson2018doubling}
\textsc{Besson, L. and E.~Kaufmann} (2018): \enquote{What doubling tricks can
  and can't do for multi-armed bandits,} \emph{arXiv preprint
  arXiv:1803.06971}.

\bibitem[\protect\citeauthoryear{Cassel, Mannor, and Zeevi}{Cassel
  et~al.}{2018}]{cassel2018general}
\textsc{Cassel, A., S.~Mannor, and A.~Zeevi} (2018): \enquote{A general
  approach to multi-armed bandits under risk criteria,} \emph{arXiv preprint
  arXiv:1806.01380}.

\bibitem[\protect\citeauthoryear{Folland}{Folland}{1999}]{folland}
\textsc{Folland, G.~B.} (1999): \emph{Real Analysis: Modern Techniques and
  their Applications}, New York: Wiley.

\bibitem[\protect\citeauthoryear{Kallenberg}{Kallenberg}{2001}]{kallenberg}
\textsc{Kallenberg, O.} (2001): \emph{Foundations of Modern Probability}, New
  York: Springer Science \& Business Media, 2 ed.

\bibitem[\protect\citeauthoryear{Kitagawa and Tetenov}{Kitagawa and
  Tetenov}{2018}]{kitagawa2018should}
\textsc{Kitagawa, T. and A.~Tetenov} (2018): \enquote{Who should be treated?
  {E}mpirical welfare maximization methods for treatment choice,}
  \emph{Econometrica}, 86, 591--616.

\bibitem[\protect\citeauthoryear{Kock, Preinerstorfer, and Veliyev}{Kock
  et~al.}{2020}]{kpv1}
\textsc{Kock, A.~B., D.~Preinerstorfer, and B.~Veliyev} (2020):
  \enquote{Functional Sequential Treatment Allocation,} \emph{arXiv preprint
  v(6) arXiv:1812.09408}.

\bibitem[\protect\citeauthoryear{Kock and Thyrsgaard}{Kock and
  Thyrsgaard}{2017}]{kock2017optimal}
\textsc{Kock, A.~B. and M.~Thyrsgaard} (2017): \enquote{Optimal sequential
  treatment allocation,} \emph{arXiv preprint arXiv:1705.09952}.

\bibitem[\protect\citeauthoryear{Liese and Miescke}{Liese and
  Miescke}{2008}]{liese}
\textsc{Liese, F. and K.~J. Miescke} (2008): \emph{Statistical Decision
  Theory}, New York: Springer.

\bibitem[\protect\citeauthoryear{Maillard}{Maillard}{2013}]{maillard}
\textsc{Maillard, O.-A.} (2013): \enquote{Robust Risk-Averse Stochastic
  Multi-armed Bandits,} in \emph{Algorithmic Learning Theory}, ed. by S.~Jain,
  R.~Munos, F.~Stephan, and T.~Zeugmann, Berlin, Heidelberg: Springer Berlin
  Heidelberg, 218--233.

\bibitem[\protect\citeauthoryear{Mammen and Tsybakov}{Mammen and
  Tsybakov}{1999}]{mammen1999smooth}
\textsc{Mammen, E. and A.~B. Tsybakov} (1999): \enquote{Smooth discrimination
  analysis,} \emph{Annals of Statistics}, 27, 1808--1829.

\bibitem[\protect\citeauthoryear{Perchet and Rigollet}{Perchet and
  Rigollet}{2013}]{perchet2013multi}
\textsc{Perchet, V. and P.~Rigollet} (2013): \enquote{The multi-armed bandit
  problem with covariates,} \emph{Annals of Statistics}, 693--721.

\bibitem[\protect\citeauthoryear{Rigollet and Zeevi}{Rigollet and
  Zeevi}{2010}]{rigollet2010nonparametric}
\textsc{Rigollet, P. and A.~Zeevi} (2010): \enquote{Nonparametric bandits with
  covariates,} \emph{Proceedings of COLT}.

\bibitem[\protect\citeauthoryear{Sani, Lazaric, and Munos}{Sani
  et~al.}{2012}]{NIPS2012_4753}
\textsc{Sani, A., A.~Lazaric, and R.~Munos} (2012): \enquote{Risk-Aversion in
  Multi-armed Bandits,} in \emph{Advances in Neural Information Processing
  Systems 25}, ed. by F.~Pereira, C.~J.~C. Burges, L.~Bottou, and K.~Q.
  Weinberger, Curran Associates, Inc., 3275--3283.

\bibitem[\protect\citeauthoryear{Serfling}{Serfling}{1984}]{serflinggen}
\textsc{Serfling, R.~J.} (1984): \enquote{Generalized L-, M-, and
  R-Statistics,} \emph{Annals of Statistics}, 12, 76--86.

\bibitem[\protect\citeauthoryear{Shalev-Shwartz}{Shalev-Shwartz}{2012}]{shalev2012online}
\textsc{Shalev-Shwartz, S.} (2012): \enquote{Online learning and online convex
  optimization,} \emph{Foundations and Trends{\textregistered} in Machine
  Learning}, 4, 107--194.

\bibitem[\protect\citeauthoryear{Tran-Thanh and Yu}{Tran-Thanh and
  Yu}{2014}]{tran2014functional}
\textsc{Tran-Thanh, L. and J.~Y. Yu} (2014): \enquote{Functional bandits,}
  \emph{arXiv preprint arXiv:1405.2432}.

\bibitem[\protect\citeauthoryear{Tsybakov}{Tsybakov}{2004}]{tsybakov2004optimal}
\textsc{Tsybakov, A.~B.} (2004): \enquote{Optimal aggregation of classifiers in
  statistical learning,} \emph{Annals of Statistics}, 32, 135--166.

\bibitem[\protect\citeauthoryear{Tsybakov}{Tsybakov}{2009}]{tsybakov2009introduction}
---\hspace{-.1pt}---\hspace{-.1pt}--- (2009): \emph{Introduction to
  Nonparametric Estimation}, New York: Springer.

\bibitem[\protect\citeauthoryear{Vakili, Boukouvalas, and Zhao}{Vakili
  et~al.}{2018}]{vakili2018decision}
\textsc{Vakili, S., A.~Boukouvalas, and Q.~Zhao} (2018): \enquote{Decision
  Variance in Online Learning,} \emph{arXiv preprint arXiv:1807.09089}.

\bibitem[\protect\citeauthoryear{{Vakili} and {Zhao}}{{Vakili} and
  {Zhao}}{2016}]{7515237}
\textsc{{Vakili}, S. and Q.~{Zhao}} (2016): \enquote{Risk-Averse Multi-Armed
  Bandit Problems Under Mean-Variance Measure,} \emph{IEEE Journal of Selected
  Topics in Signal Processing}, 10, 1093--1111.

\bibitem[\protect\citeauthoryear{Woodroofe}{Woodroofe}{1979}]{woodroofe1979one}
\textsc{Woodroofe, M.} (1979): \enquote{A one-armed bandit problem with a
  concomitant variable,} \emph{Journal of the American Statistical
  Association}, 74, 799--806.

\bibitem[\protect\citeauthoryear{Yang, Zhu et~al.}{Yang
  et~al.}{2002}]{yang2002randomized}
\textsc{Yang, Y., D.~Zhu, et~al.} (2002): \enquote{Randomized allocation with
  nonparametric estimation for a multi-armed bandit problem with covariates,}
  \emph{The Annals of Statistics}, 30, 100--121.

\bibitem[\protect\citeauthoryear{Zimin, Ibsen-Jensen, and Chatterjee}{Zimin
  et~al.}{2014}]{zimin2014generalized}
\textsc{Zimin, A., R.~Ibsen-Jensen, and K.~Chatterjee} (2014):
  \enquote{Generalized risk-aversion in stochastic multi-armed bandits,}
  \emph{arXiv preprint arXiv:1405.0833}.

\end{thebibliography}

\end{document}